\documentclass[11pt,reqno]{amsart}
\usepackage{amsaddr}

\usepackage[margin=1in]{geometry}
\usepackage[numbers,sort&compress]{natbib}
\usepackage[utf8]{inputenc} 
\usepackage[T1]{fontenc}    
\usepackage{booktabs}       
\usepackage{amsfonts}       
\usepackage{nicefrac}       
\usepackage{microtype}      
\usepackage{xcolor}         
\usepackage{amsmath, amssymb}
\usepackage{graphicx}
\usepackage{subcaption}
\usepackage{arydshln}
\usepackage{enumitem}
\usepackage{mdframed}

\usepackage{xurl}
\usepackage[colorlinks=true,linkcolor=blue,citecolor=blue,urlcolor=blue]{hyperref}

\newcommand{\Z}{\mathbb{Z}}
\newcommand{\C}{\mathbb{C}}
\newcommand{\R}{\mathbb{R}}
\newcommand{\E}{\mathbb{E}}
\newcommand{\norm}[1]{\left\lVert #1 \right\rVert}
\newcommand{\ip}[2]{\left\langle #1, #2 \right\rangle}
\theoremstyle{plain}
\newtheorem{theorem}{Theorem}[section]
\newtheorem{proposition}[theorem]{Proposition}
\newtheorem{corollary}[theorem]{Corollary}
\newtheorem{lemma}[theorem]{Lemma}
\theoremstyle{remark}

\usepackage[most]{tcolorbox}
\definecolor{HighlightGreen}{HTML}{B5E300}
\definecolor{HighlightGreenBg}{HTML}{F7FBEA}

\newcommand{\codeurl}{https://github.com/AndyLiuin/Spectral-lens-llm-training}

\definecolor{tabblue}{HTML}{1F77B4}
\definecolor{taborange}{HTML}{FF7F0E}
\definecolor{tabgreen}{HTML}{2CA02C}
\definecolor{tabred}{HTML}{D62728}
\definecolor{tabpurple}{HTML}{6D28D9}

\title{Spectral Lens: Activation and Gradient Spectra as Diagnostics of LLM Optimization}

\author{Andy Zeyi Liu}
\address{Department of Applied Physics, Yale University, New Haven, CT 06511, United States}
\email{andy.liu@yale.edu}

\author{Elliot Paquette}
\address{Department of Mathematics and Statistics, McGill University}
\email{elliot.paquette@mcgill.ca}

\author{John Sous}
\address{Department of Applied Physics, Yale University, New Haven, Connecticut 06511, USA}
\address{Energy Sciences Institute, Yale University, West Haven, Connecticut 06516, USA}
\email{john.sous@yale.edu}

\date{\today}

\begin{document}

\begin{abstract}
Training loss and throughput can hide distinct internal representation in language-model training. To examine these hidden mechanics, we use spectral measurements as practical and operational diagnostics. Using a controlled family of decoder-only models adapted from the modded NanoGPT codebase, we introduce an empirical protocol based on activation covariance and per-sample gradient SVD spectra. This dual-view reveals three empirical findings and one mechanistic explanation. First, batch size acts as a latent determinant of representation geometry: runs that reach equal loss settle into systematically distinct activation spectra. Second, the activation covariance tail measured early in training reliably forecasts downstream token efficiency. Third, movement of the activation spectrum head (leading modes), together with gradient spectra, characterizes underlying learning-dynamics changes, separating learning-side architectural improvements from primarily execution-side gains. These predictive and diagnostic signals persist across the 12-, 36-, and 48-layer model tiers. Finally, a mechanistic model proves the main observations and explains how activation covariance spectra correlate with task-aligned feature learning. Our code is available \href{\codeurl}{here}.

\end{abstract}

\maketitle
\setcounter{tocdepth}{1}
\tableofcontents

\begin{figure}[h]
  \centering
  \begin{subfigure}[t]{0.45\textwidth}
    \raggedright
    \includegraphics[width=\linewidth]{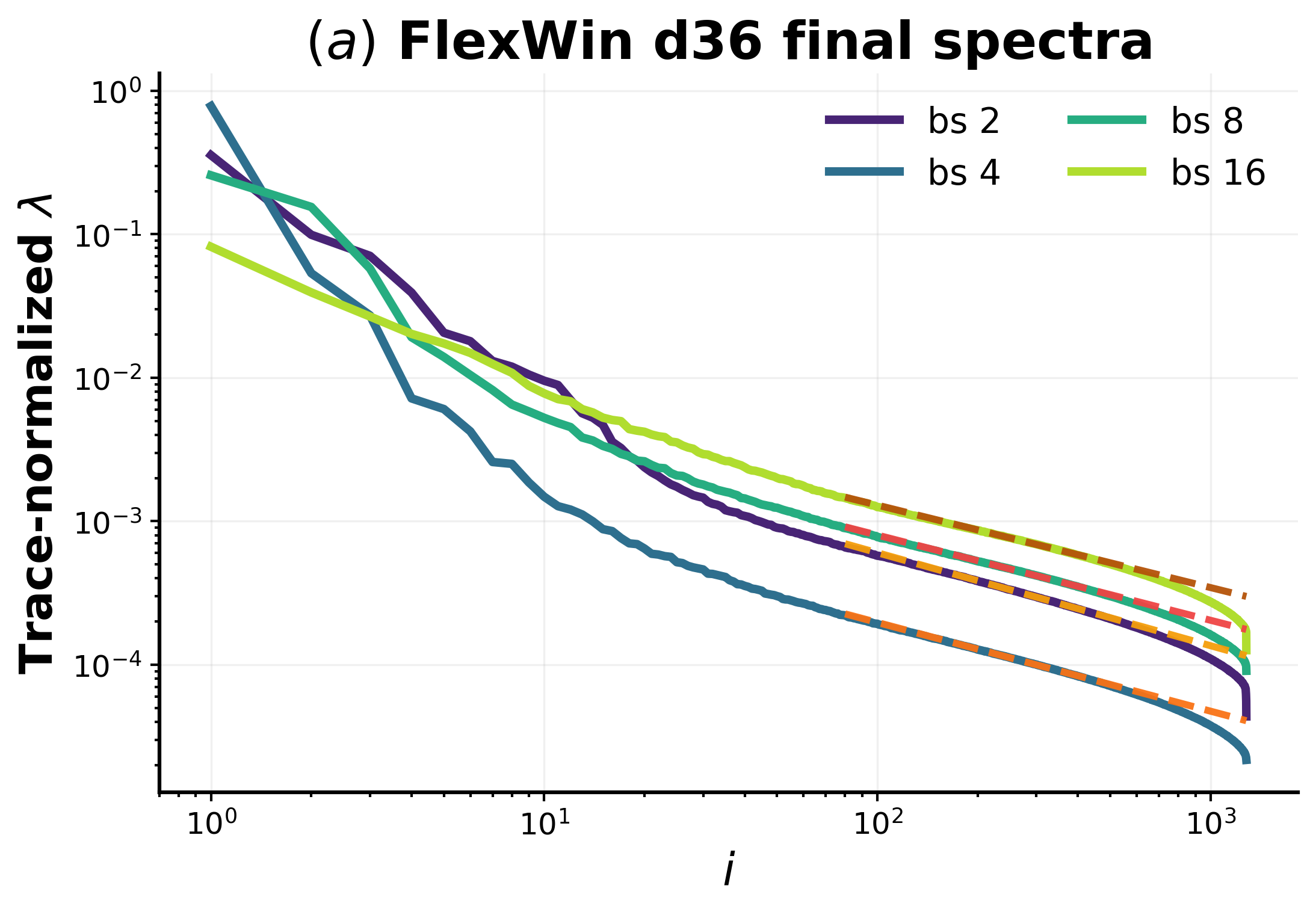}
    \label{fig:killerA}
  \end{subfigure}\hfill
  \begin{subfigure}[t]{0.45\textwidth}
    \raggedleft
    \includegraphics[width=\linewidth]{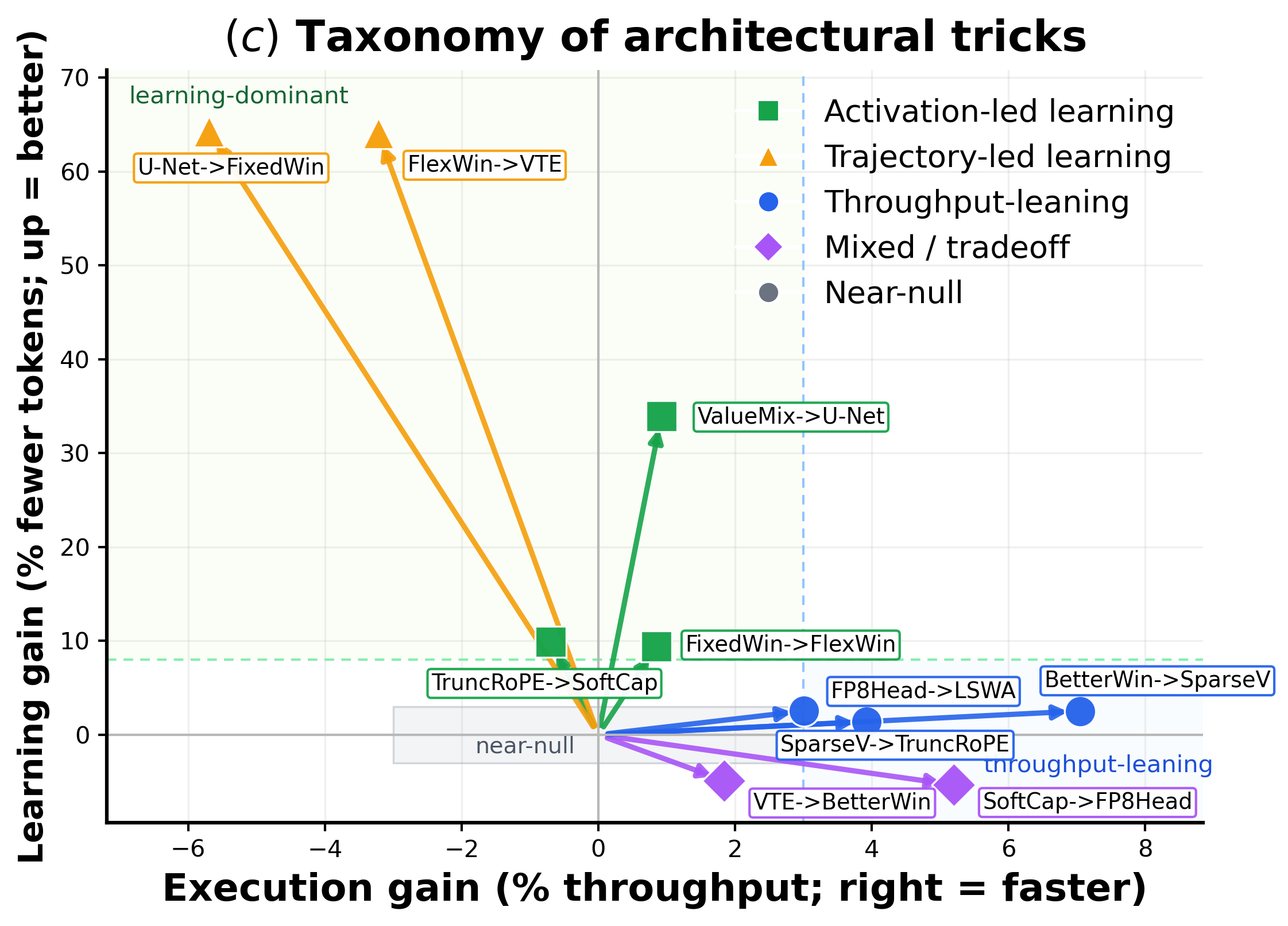}
    \label{fig:killerC}
  \end{subfigure}

  \vspace{0.5em}

  \begin{subfigure}[t]{0.9\textwidth}
    \centering
    \includegraphics[width=\linewidth]{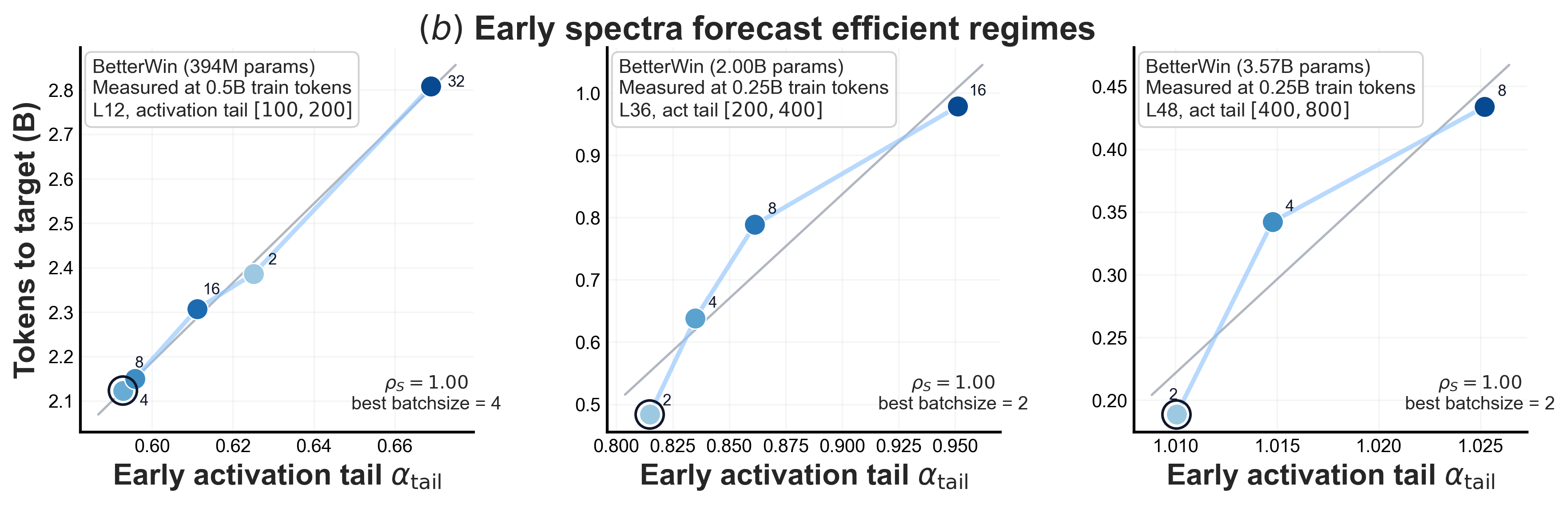}
    \label{fig:killerB}
  \end{subfigure}
\caption{\textbf{Spectral diagnostics as operational tools.} Each panel compares decoder-only language-model runs using trace-normalized activation covariance spectra and per-sample gradient SVD summaries, aligned either at matched loss or a fixed early token budget. \textbf{(a) Matched loss, distinct internal geometry.} Final-layer activation spectra of FlexWin d36 runs aligned at a common target loss. Dashed overlays show power-law fits over the tail window, where $\alpha_{\mathrm{tail}}$ is the log-log slope magnitude. \textbf{(b) Early prediction across scale.} Correlation between an early activation-tail statistic (measured at a fixed token budget) and the total tokens required to reach the target loss, evaluated across the d12, d36, and d48 layer tiers. \textbf{(c) Intervention taxonomy.} Consecutive architectural transitions mapped on a matched bs8 trunk by token gain and throughput gain, clustered by spectral diagnostic signals.}
  \label{fig:killer}
\end{figure}

\section{Introduction}

The optimization of large language models is largely guided by neural scaling laws, which predict that training loss evolves predictably with compute, data, and model size~\citep{kaplan2020scaling, hoffmann2022training}. Downstream capabilities then appear to be unlocked largely as a function of improving loss~\citep{brown2020language, rae2021scaling, ganguli2022predictability}. Yet, training loss alone often creates an optimization mirage: while training loss decreases smoothly and monotonically, it offers an incomplete account of internal learning, masking profound, non-monotonic transformations in the model's internal structure~\citep{li2025tracing, liu2025superposition}. This disconnect exposes a critical blind spot: if loss fails to capture how these internal geometries change, how can we reliably diagnose and tune the training process? Choosing training regimes proactively rather than merely summarizing them post hoc requires a deeper understanding of how internal representations are affected by practitioner choices.

Spectral probes offer a natural lens here. We focus on the heavy-tail exponent ($\alpha$), which summarizes how variance is distributed across internal dimensions and connects naturally to scaling-law efficiency~\citep{bahri2024explaining, martin2021implicit}. Crucially, these spectral signatures scale: diagnostic patterns observed in small-scale models transfer predictably to larger tiers, providing a stable protocol for forecasting dynamics as models grow in depth and parameter count.

Our empirical study proceeds in two stages: a systematic scan of decoder-only models to identify transferable spectral signatures, followed by a toy-model analysis for mechanistic grounding. Using the modded-NanoGPT codebase~\citep{modded_nanogpt_2024}, we generate a controlled intervention chain and evaluate diagnostics across layer tiers. Fig.~\ref{fig:killer} previews these results: we identify internal geometries that diverge despite matched loss (Fig.~\ref{fig:killer}a), demonstrate that early-training signatures identified in 12-layer runs reliably predict the token-efficiency of 36- and 48-layer models up to 3.57B parameters (Fig.~\ref{fig:killer}b), and categorize architectural optimizations along a learning-versus-throughput axis (Fig.~\ref{fig:killer}c). Finally, we ground these empirical observations in a modular-arithmetic toy model where Fourier structure makes task-relevant feature learning directly observable.

\begin{tcolorbox}[
  colframe=HighlightGreen,
  colback=HighlightGreenBg,
  boxrule=1.8pt,
  arc=4pt,
  left=7pt,
  right=7pt,
  top=6pt,
  bottom=4pt,
  width=\linewidth
]
\textbf{Main Results \& Contributions}
\vspace{-0.1em}
\begin{itemize}[
    label={\Large\textbullet},
    leftmargin=*,
    topsep=0pt,
    itemsep=0pt,
    parsep=0pt,
    partopsep=0pt
]

    \item \textbf{\textit{Internal Geometry Disparity:}} Batch size acts as a latent determinant of representation; reaching the same loss with different batch sizes does not imply a shared internal state.
    \item \textbf{\textit{Cross-Scale Predictivity:}} Early-training activation tails serve as a proxy for downstream token efficiency, identifying optimal batch regimes without full-scale training.
    \item \textbf{\textit{Optimization Taxonomy:}} Joint activation and gradient spectra provide a diagnostic signal to distinguish fundamental learning-centric gains from purely execution-side speedups.
    \item \textbf{\textit{Mechanistic Basis:}} A modular-arithmetic toy model provides a theory linking observed spectral shifts to feature-learning and preconditioning dynamics.

\end{itemize}
\end{tcolorbox}

\paragraph{Related work.}
Spectral measurements have been used as descriptive windows into representation
and optimization geometry, including eigendecay and effective-rank summaries of
activation covariance~\citep{li2025tracing,chang2022geometry,garrido2023rankme},
gradient-subspace concentration and Hessian outliers~\citep{gurari2018subspace,
ghorbani2018hessian,papyan2019hessian}, and heavy-tailed exponents connecting
spectral decay to batch-size effects and scaling-law
efficiency~\citep{mahoney2019heavy,bahri2024explaining,xie2023stochastic}.
The role of batch size in shaping internal geometry and efficiency has been studied
via sharp versus flat minima~\citep{keskar2017large} and gradient noise
scale~\citep{mccandlish2018empirical}.
In contrast, we use activation and gradient spectra as operational diagnostics
for hidden batch-induced geometry, early token-efficiency prediction, and
intervention-level mechanism differences. Additional related work is in Appendix~\ref{sec:appendix_related_work}.

\section{Preliminaries}
\label{sec:methods}
A standard decoder-only transformer~\citep{vaswani2017attention} maps token embeddings through $L$ residual blocks. Rather than evaluating only the final output, our spectral diagnostics probe intermediate representations within these blocks. For a chosen layer $\ell$, let $h_i^{(\ell)} \in \mathbb{R}^d$ denote a hidden state from a fixed held-out pool of validation sequences, reused across checkpoints and runs. Flattening all token positions from this pool yields an activation matrix $H_\ell \in \mathbb{R}^{N \times d}$, where $N$ counts token positions, and we estimate the centered covariance:
\begin{equation}
    H_\ell =
    \begin{bmatrix}
        (h_1^{(\ell)})\dots (h_N^{(\ell)})
    \end{bmatrix}^\top
    \in \mathbb{R}^{N \times d}, \quad \hat\Sigma_h = \frac{1}{N-1}\sum_{i=1}^N (h_i^{(\ell)}-\bar h)(h_i^{(\ell)}-\bar h)^\top.
\end{equation}
We compute the sorted eigenvalues $\lambda_1 \ge \lambda_2 \ge \dots \ge \lambda_d \ge 0$ of $\hat\Sigma_h$ and trace-normalize them as $\tilde\lambda_j = \frac{\lambda_j}{\sum_k \lambda_k}$. This keeps comparisons focused on spectral \emph{shape} rather than raw scale. 
Following theoretical work linking spectral decay to neural scaling laws and kernel learning curves \citep{bahri2024explaining, maloney2022solvable, bordelon2020spectrum, canatar2021spectral, spigler2019asymptotic}, we characterize the representation's power-law structure $\tilde{\lambda}_j \propto j^{-\alpha}$ by defining the band-restricted exponent $\alpha(I)$:
\begin{equation}
\alpha(I) := -\mathrm{slope}\Big({(\log j, \log \tilde{\lambda}_j)},{j \in I}\Big)
\end{equation}
where the slope is the least-squares fit in log-log coordinates. We evaluate $\alpha(I)$ over a pre-registered bank of expanding rank windows:
\begin{equation}
\mathcal{I}_{\mathrm{bank}} = \{[100,200], [200,400], [400,800]\}
\end{equation}
We define the scale-dependent diagnostic $\alpha_{\mathrm{tail}}$ as $\alpha(I^{(s)})$, where $s$ indexes the model scale---specifically $d12$, $d36$, and $d48$ for the 12-, 36-, and 48-layer models studied in this paper. Rather than fixing a single universal window across all scales, we select $I^{(s)}$ from $\mathcal{I}_{\mathrm{bank}}$ according to model capacity. The motivation is that larger models possess more hidden dimensions and learn a larger number of resolved feature directions, so the boundary between the learned head and the unresolved tail of the activation spectrum sits at a higher rank. Measuring $\alpha$ over a window that is too low-rank for a large model would therefore probe already-learned features rather than the tail of interest. Concretely, we calibrate the informative window on the smallest scale ($I^{(d12)}=[100,200]$) and shift it outward at larger scales ($I^{(d36)}=[200,400]$, $I^{(d48)}=[400,800]$). For even larger models, the same rule suggests continuing to the next window in the bank. This protocol allows $\alpha_{\mathrm{tail}}$ to serve as a prospective diagnostic that remains comparable across scales. 

Complementing the activation representations, we also probe the model's optimization dynamics. For a selected weight matrix $W \in \mathbb{R}^{d_{\mathrm{out}} \times d_{\mathrm{in}}}$ with $P = d_{\mathrm{out}} \cdot d_{\mathrm{in}}$ parameters, we compute per-sample gradients on a fixed held-out pool of validation sequences, where one \emph{sample} means one validation sequence rather than one token. Let $g_m = \mathrm{vec}\!\left(\nabla_W L_m(\theta)\right) \in \mathbb{R}^{P}$ denote the gradient of the mean autoregressive loss on sample $m$. Stacking $M$ such rows gives
\begin{equation}
    G = \begin{bmatrix} g_1 \dots g_M \end{bmatrix}^\top \in \mathbb{R}^{M \times P},
\end{equation}
and we analyze the singular value spectrum of $G$. Since $G$ is matrix-specific, gradient spectra are probe-dependent; we default to the final-block attention output projection $W_O$ and audit alternatives in Appendix~\ref{sec:appendix_grad_matrix_choice}, which shows that matrix choice changes the detailed concentration and tail shape but not the qualitative diagnostic conclusions. The two spectra play complementary roles: activation spectra answer ``where representation variance lives,'' while gradient spectra answer ``how concentrated the update is.'' To rigorously compare runs across these two views, we compare spectra under matched loss, matched tokens, and matched schedule fraction, focusing mainly on matched loss as it most sharply isolates differences in internal states despite identical performance.

Finally, for early prediction, each architecture family defines its own relative efficiency. We use \emph{effective batch size} $B$ because, from the FixedWin modded-NanoGPT variant onward, the 65{,}536-token context forces local batch size 1 on a single GPU; $B$ therefore denotes the global batch induced by gradient accumulation.\footnote{Tokens per optimizer step scale as effective batch size times sequence length. The short-context prefix uses 1{,}024-token sequences, while the FixedWin-and-later long-context trunk uses 65{,}536-token sequences; see Table~\ref{tab:interventions} and Appendix~\ref{sec:appendix_training_setup}.} Let $T(B)$ be the total tokens required for a training run at effective batch size $B$ to reach a target validation loss. We evaluate efficiency via the within-family token ratio:
\begin{equation}
    \epsilon_{\mathrm{tok}}(B) = \frac{T(B)}{\min_{B'}T(B')},
\end{equation}
where the minimum is taken over the set of effective batch sizes $B'$ swept within the same architectural family.

\subsection{Model architectures and training setup}
\label{sec:setup}

Our experiments evaluate architectural variants and optimization techniques adopted from the \texttt{modded-nanogpt} codebase~\cite{modded_nanogpt_2024}, as summarized in Table~\ref{tab:interventions}. These modifications form an incremental refinement path: each subsequent variant is cumulative, incorporating the optimizations of its predecessors. Full architectural dimensions, the sequence of concatenated code modifications, per-variant attributions, and validation of the spectral equivalence between the 100B-token and 10B-token FineWeb splits are detailed in Appendix~\ref{sec:appendix_training_setup}.

\begin{table}[htbp]
  \centering
  \small
  \setlength{\tabcolsep}{4pt}
  \caption{\textbf{Experimental blocks and named model families.}\protect\footnotemark\ We summarize the core configurations here; exact architectural dimensions and cumulative code changes are detailed in Appendix~\ref{sec:appendix_training_setup}.}
  \label{tab:interventions}
  \begin{tabular}{p{0.18\linewidth}p{0.30\linewidth}p{0.42\linewidth}}
    \toprule
    Experimental block & Model families & Core setting \\
    \midrule
    Variants 1--4 &
    Baseline, RoPE, Muon, Untied &
    d12 GPT-2-small scale models (1k context), trained on the 100B-token split of FineWeb. \\
    \addlinespace
    Variants 5--16 &
ValueMix, U-Net, FixedWin, FlexWin, VTE, BetterWin, SparseV, TruncRoPE, SoftCap, FP8Head, LSWA, AttnScale &
d12 models trained on the 10B-token split of FineWeb. ValueMix and U-Net remain in the short-context regime; the 1k$\rightarrow$65k context shift occurs at U-Net$\rightarrow$FixedWin. All regimes match a total batch budget of $\approx$0.5M tokens/step (\emph{bs8-equivalent}).
 \\
    \addlinespace
    Larger-scale robustness follow-up &
    FlexWin \texttt{d36}, BetterWin \texttt{d36}, SparseV \texttt{d36}; BetterWin \texttt{d48} &
    Scale-up runs testing generalization, scaling up to 48 layers and 32k context across multiple batch tiers. \\
    \bottomrule
  \end{tabular}
\end{table}
\footnotetext{Citations and contributor attributions for each listed variant are collected in Appendix~\ref{sec:appendix_variant_attribution}.}

Throughout our language-model experiments, the main d12 comparisons target a validation loss of 3.2 on a fixed held-out validation set taken from the FineWeb-10B validation split. We use the 100B-token split of FineWeb~\citep{lozhkov2024fineweb} for earlier, less data-efficient variants, transitioning to the 10B-token split from the ValueMix variant onward. Unless noted otherwise, efficiency is measured strictly in training tokens consumed to reach this first target checkpoint, which we treat as a proxy for compute given the fixed model architecture and hardware setup within each comparison.

To ensure our spectral comparisons isolate batch-dependent geometry rather than under-tuned optimization, we perform ASHA-filtered learning-rate sweeps~\citep{li2020system} for each batch tier prior to analysis (see Appendix~\ref{sec:appendix_training_setup} for the full sweep procedure). Thus, our matched-loss figures compare each tier under its own optimal schedule. We evaluate batch tiers $B \in \{1, 2, 4, 8, 16, 32\}$ for the 12-layer models, and $B \in \{2, 4, 8, 16\}$ for the d36 robustness follow-up described in Table~\ref{tab:interventions}. 

\subsection{Transition outcome coordinates}\label{subsec:Tran-out}
Our subsequent analysis studies consecutive transitions along the matched-total-batch intervention chain.
For each architectural transition $a\rightarrow b$ on effective batch size 8 (see Table~\ref{tab:interventions}), we define the relative token and throughput gains:
\begin{equation}
\mathrm{TokGain}(a\!\rightarrow\! b)=\frac{T(a)}{T(b)}-1,\qquad
\mathrm{ThrGain}(a\!\rightarrow\! b)=\frac{Q(b)}{Q(a)}-1,
\end{equation}
where $T(\cdot)$ is the number of training tokens required to reach the target loss, and $Q(\cdot)$ is the throughput in tokens per second.
For visualization, we also use the log-gain coordinates:
\begin{equation}
g_{\mathrm{tok}}=\log\!\big(T(a)/T(b)\big),\qquad
g_{\mathrm{thr}}=\log\!\big(Q(b)/Q(a)\big).
\end{equation}

\section{Batch Size Leaves Distinct, Predictive, and Robust Spectral Signatures}
\label{sec:batch}
With our spectral metrics established, we first apply them to reexamine a fundamental training variable: the effective batch size, and show that it is a fundamental driver of representational state. This section establishes three batch-size-specific findings. We find that batch size induces disparate activation covariance spectra. We then demonstrate that these spectral signatures are predictive, allowing us to identify the compute-optimal batch size early in the training process. Finally, we validate that these findings are scale-stable, showing that the signatures identified in 12-layer models generalize to 36- and 48-layer architectures.

\begin{figure}[t]
  \centering

  \begin{subfigure}[t]{0.49\textwidth}
    \centering
    \includegraphics[width=\linewidth]{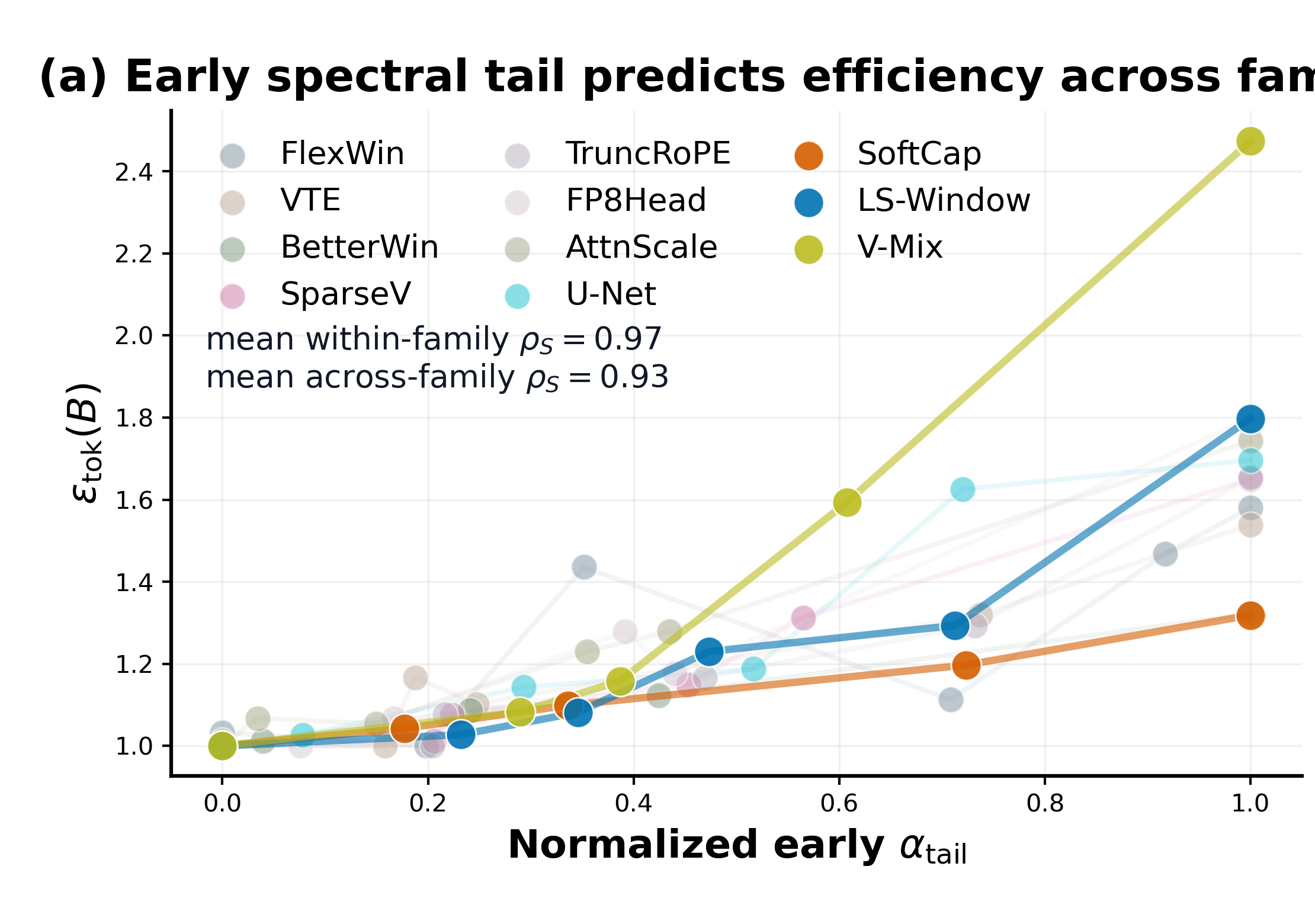}
  \end{subfigure}\hfill
  \begin{subfigure}[t]{0.49\textwidth}
    \centering
    \includegraphics[width=\linewidth]{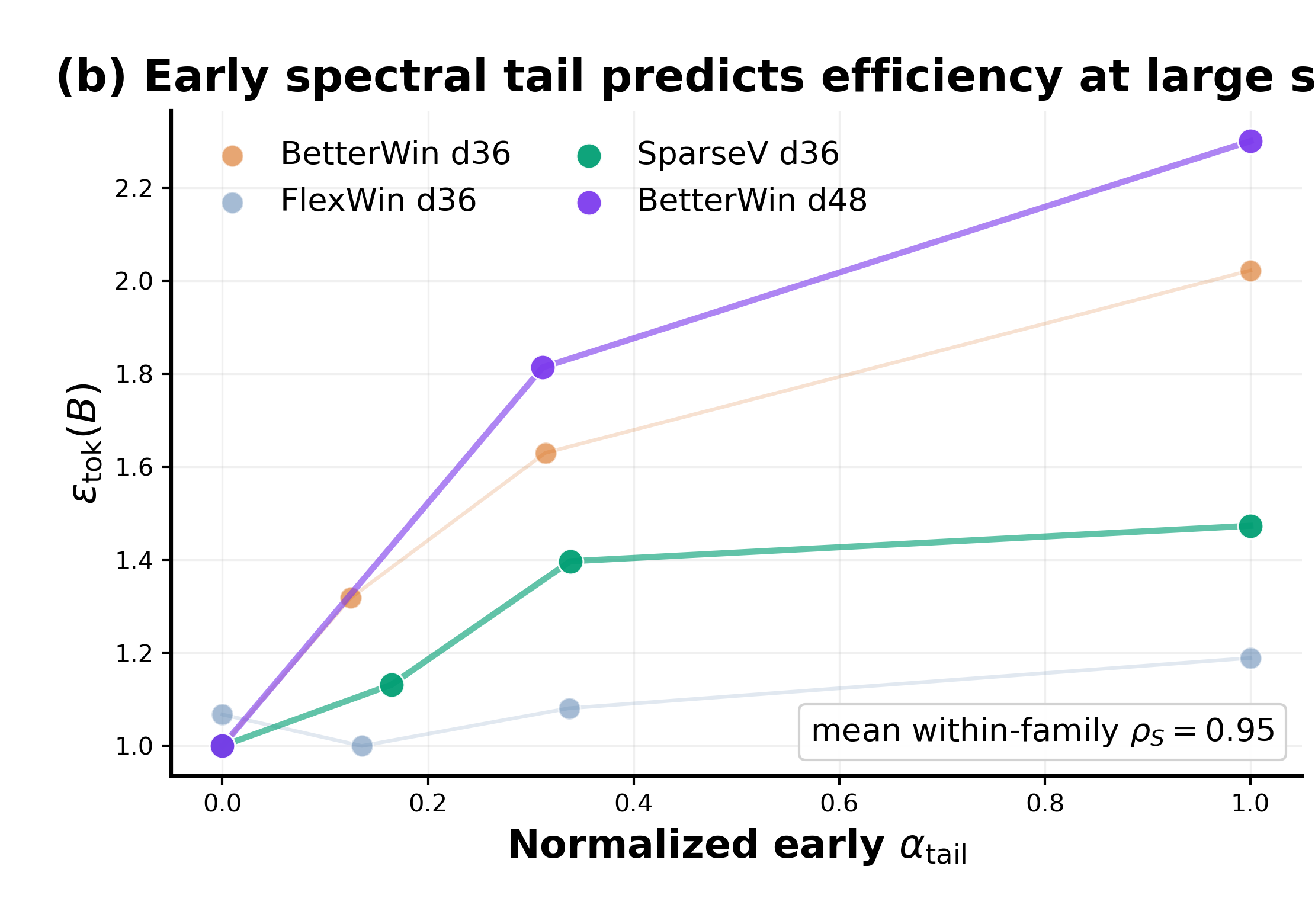}
  \end{subfigure}

\caption{\textbf{Early spectral-tail signal predicts efficient training regimes across scale.}
In both panels, the x-axis shows the normalized early tail exponent $\alpha_{\mathrm{tail}}$, while the y-axis shows token inefficiency $\epsilon_{\mathrm{tok}}(B)$, so lower values indicate better token efficiency.
(a) At d12 scale, this early spectral-tail statistic already organizes later efficiency across model families: families that move to the right also tend to move upward, yielding mean within-family Spearman correlation $\rho_S = 0.97$ and mean across-family $\rho_S = 0.93$.
(b) The same normalized early-tail view remains predictive across the available 36-/48-layer settings, with mean within-family Spearman correlation $\rho_S = 0.95$.
Together, these panels show that a spectral-tail signal is already informative about which training regimes will be most token-efficient, and that this signal persists from d12 to the 36- and 48-layer support runs.}

  \label{fig:scaled_hidden_regimes}
\end{figure}

\begin{enumerate}[leftmargin=*, topsep=0pt, itemsep=2pt]
    \item \textbf{Matched loss does not imply matched geometry.} Empirically, matched-loss runs within a given architecture settle into systematically different activation-covariance spectra, as demonstrated in Fig.~\ref{fig:killer}(a). We justify the universality of this observation in Appendix~\ref{sec:appendix_additional_plots}, providing spectra across all tested model families and scales. Besides, since our architectural optimizations are cumulative, with most built upon the Muon optimizer, we perform a separate ablation study in Appendix~\ref{sec:appendix_muon_adam} comparing Adam and Muon. We find that batch size remains a meaningful latent determinant under Adam, though Muon is more batch-sensitive, producing sharper head concentration and greater spectral separation. We provide a mechanistic analogue of this phenomenon in Section~\ref{sec:toy}. This same toy model is used to mathematically demonstrate that equal loss does not guarantee an equivalent spectrum. 
    \item \textbf{Early spectra predict efficient tiers.} As initially demonstrated in Fig.~\ref{fig:killer}b, the early activation covariance tail exponent ($\alpha_{\mathrm{tail}}$) exhibits a strong positive correlation with the total training tokens required to reach a target loss. We expand this analysis in Fig.~\ref{fig:scaled_hidden_regimes}a by aggregating all depth-12 model
variants across their respective batch-size tiers. To enable cross-variant
comparisons, we apply a per-variant min-max normalization to the early
$\alpha_{\mathrm{tail}}$ statistic, scaling each variant's raw values linearly
so that its smallest exponent maps to 0 and its largest to 1.

This normalized view yields strong evidence that early spectra accurately diagnose proximity to the token-optimal batch size. We quantify this predictive power using the Spearman rank correlation ($\rho_S$) to measure monotonic alignment, where $\rho_S = 1$ indicates a perfectly monotonic relationship between the early diagnostic and final token efficiency. The high mean \emph{within-family} correlation ($\rho_S = 0.97$) demonstrates that early $\alpha_{\mathrm{tail}}$ correctly orders the eventual token efficiency of batch tiers inside any given architecture. Furthermore, the \emph{across-family} correlation ($\rho_S = 0.93$) on the pooled normalized data confirms that this transformation successfully aligns distinct architectures onto a shared efficiency axis. To track how this predictive signal develops, Appendix~\ref{sec:appendix_predictive_ablations} provides an ablation study mapping the evolution of the correlation coefficient across training progress percentages. Finally, Section~\ref{sec:toy} provides a mechanistic interpretation of this empirical signal, where we demonstrate that flatter informative tails mathematically correspond to shorter remaining time-to-target when learning is bottlenecked by the recruitment of task-relevant tail features.

\item \textbf{The predictive signal survives scale.} 
Our larger-tier experiments confirm that both facets of this phenomenon generalize to the 36- and 48-layer architectures, including the 3.57B-parameter BetterWin d48 run. The d36 hidden-state panel in Fig.~\ref{fig:scaled_hidden_regimes} demonstrates that matched-loss spectral separation persists across batch tiers despite increased depth and context length. Crucially, the early predictive signal remains highly reliable; correlation analysis on the scaled d36/d48 models (Fig.~\ref{fig:scaled_hidden_regimes}b) yields a strong mean within-family correlation of $\rho_S=0.95$. In deeper models, the informative tail windows shifts to higher ranks, consistent with deeper models having a quantitatively higher rank of learned features.
\end{enumerate}

Appendix~\ref{sec:appendix_predictive_ablations} provides two controls for the early-prediction result. First, a random-seed control shows that FlexWin tier-16 runs initialized with different seeds cluster much closer to one another than to different batch tiers, supporting that the observed separation is driven by effective batch size. Second, a d36 layerwise ablation shows that the predictive signal is depth-sensitive: early and middle layers are weak or sign-inconsistent, while the deepest stored probe layer gives the strongest positive correlation across the available d36 families. This supports our convention of using the deepest available activation layer, while leaving full layer-selection rules to future work.

\section{Spectral Taxonomy of Architectural Acceleration}
\label{sec:taxonomy}

\begin{figure}[!htbp]
    \centering
    \includegraphics[width=\textwidth]{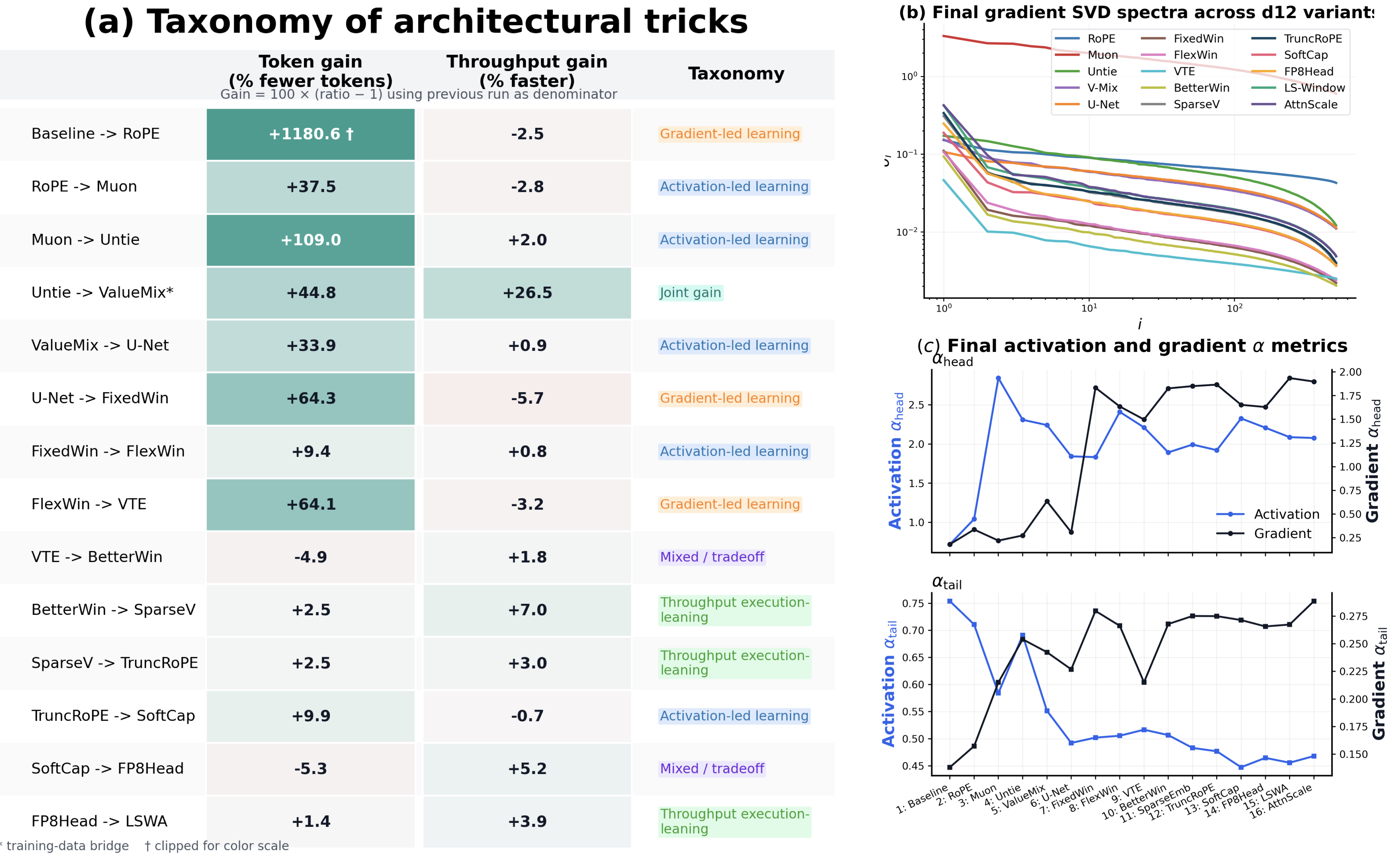}
  \caption{\textbf{Architectural tricks fall into clear empirical taxonomy.}
  (a) Each consecutive d12 transition is summarized by token gain, throughput gain, and its taxonomy label. The outcome columns separate learning-side, throughput-side, joint, and tradeoff effects, but the activation-led versus gradient-led split requires the spectral evidence in panels (b)--(c); a representative path-level example is deferred to Appendix~\ref{sec:appendix_additional_plots}.
  (b) Final gradient SVD spectra across the d12 variants show distinct update-subspace regimes rather than a single smooth continuum.
  (c) Final activation and gradient $\alpha$ summaries vary partly independently across the chain, helping distinguish activation-side gains from gradient-side gains.}
  \label{fig:taxonomy}
\end{figure}

Our second main result turns the spectral framework into an optimization-level diagnostic. Moving beyond the batch-size analysis, we ask a different operational question: once an architectural intervention improves performance, \emph{how} did it help? 

We address this by placing each consecutive transition along our incremental refinement path onto two observable outcome axes: token gain and throughput gain (defined in Section~\ref{subsec:Tran-out}). Since these architectural tricks are cumulative, the reported percentage changes represent the incremental improvement relative to the immediately preceding variant. As validated in Appendix~\ref{sec:appendix_training_setup}, the 100B-token and 10B-token splits of FineWeb produce nearly identical spectral signatures. Because of this baseline equivalence, we link the optimization tricks together regardless of the dataset split, evaluating the entire progression (variants~1--18) as a single, continuous chain in Fig.~\ref{fig:taxonomy}a.

While the outcome coordinates categorize performance, they lack a mechanistic explanation. To bridge this gap, we use activation covariance and gradient SVD spectra as in Fig.~\ref{fig:taxonomy}(b--c) to diagnose the underlying geometric driver, classifying transitions into four explicit categories:
\begin{itemize}[leftmargin=*, topsep=2pt, itemsep=2pt, parsep=0pt]
    \item \textcolor{tabblue}{\textbf{Activation-led learning:}} The token gain corresponds to a clear displacement in final activation covariance spectrum, in particular $\alpha_{\text{head}}$. These are interventions that directly alter representational degrees of freedom or readout geometry (e.g., RoPE, untied heads, FlexWin, SoftCap).
    \item \textcolor{taborange}{\textbf{Gradient-led learning:}} The dominant geometric shift appears in the gradient spectra or checkpoint-wise path separation, even if the final activation endpoint remains relatively static. These typically involve modifications to routing or attention horizons (e.g., FixedWin, VTE).
    \item \textcolor{tabgreen}{\textbf{Throughput execution-leaning}:} These interventions primarily recondition execution cost or trade off learning dynamics for throughput. They encompass masking, kernel refinements, and optimizer tunings (e.g., SparseV, TruncRoPE, FP8Head).
\item \textcolor{tabpurple}{\textbf{Mixed / tradeoff:}} The intervention improves one training objective while weakening the other comparably (e.g., positive token gain with negative throughput gain), thus failing to land cleanly in a single mechanistic bucket.
\end{itemize}
This diagnostic split aligns systematically with the underlying architectural mechanisms detailed in Appendix~\ref{sec:appendix_training_setup}. By examining the operational definition of each intervention, we can trace why specific tricks map to specific spectral outcomes:

\textbf{Representational shifts manifest as \textcolor{tabblue}{activation-led learning}.}
The clearest cases are transitions that directly expand or reshape the model's representational degrees of freedom. In Muon$\rightarrow$Untied, removing the weight-tying constraint between token embedding and LM head expands the output-layer parameterization. In TruncRoPE$\rightarrow$SoftCap, the bounded $\tanh$ reshapes the output geometry seen by the loss. In both cases, the dominant signature is a clear displacement in the final activation covariance spectrum.

\textbf{Routing and horizon changes manifest as \textcolor{taborange}{gradient-led learning}.}
These transitions modify how information moves through the network without comparably large changes to the final readout space, leaving the activation endpoint relatively stable while changing the update path. U\mbox{-}Net$\rightarrow$FixedWin replaces full causal attention with a sliding FlexAttention mask, altering which tokens interact rather than the feature space itself. FlexWin$\rightarrow$VTE injects a layer-by-layer value-token embedding pathway, changing value routing. Both are evidenced empirically by sharper changes in gradient concentration and checkpoint-wise path separation than in the final activation endpoint.

\textbf{Systems-motivated constraints manifest as \textcolor{tabgreen}{throughput execution-leaning}.}
These are late-trunk transitions whose main effect is to improve execution efficiency while producing comparatively smaller activation-side movement. BetterWin$\rightarrow$SparseV trades dense value embeddings for sparse reusable tables, SparseV$\rightarrow$TruncRoPE simplifies positional application, and FP8Head$\rightarrow$LSWA primarily improves the execution side relative to the preceding row. Their common signature is clearer throughput gain than learning-side geometric displacement.

Crucially, this taxonomy relies on different spectral features than our earlier batch-size analysis. While early prediction (Section~\ref{sec:batch}) depends on the activation \emph{tail}—reflecting the distribution of unresolved mass—this architectural taxonomy is governed by activation \emph{head} movement and gradient concentration. As mathematically grounded by our toy model (Section~\ref{sec:toy}), head and gradient dynamics capture the shifting balance between learned and residual energy, rather than unresolved structural capacity.\footnote{Phase-like geometry dynamics (collapse--expansion--compression) are also observable in our low-batch runs. However, because their visibility depends heavily on measurement alignment, we treat them as secondary qualitative evidence and defer the full analysis to Appendix~\ref{sec:appendix_additional_plots}.}

\section{Mechanistic Model: From Spectral Signatures to Feature Learning} 
\label{sec:toy}

\begin{figure}[t]
  \centering
  \begin{subfigure}[t]{0.32\textwidth}
    \centering
    \includegraphics[width=\linewidth]{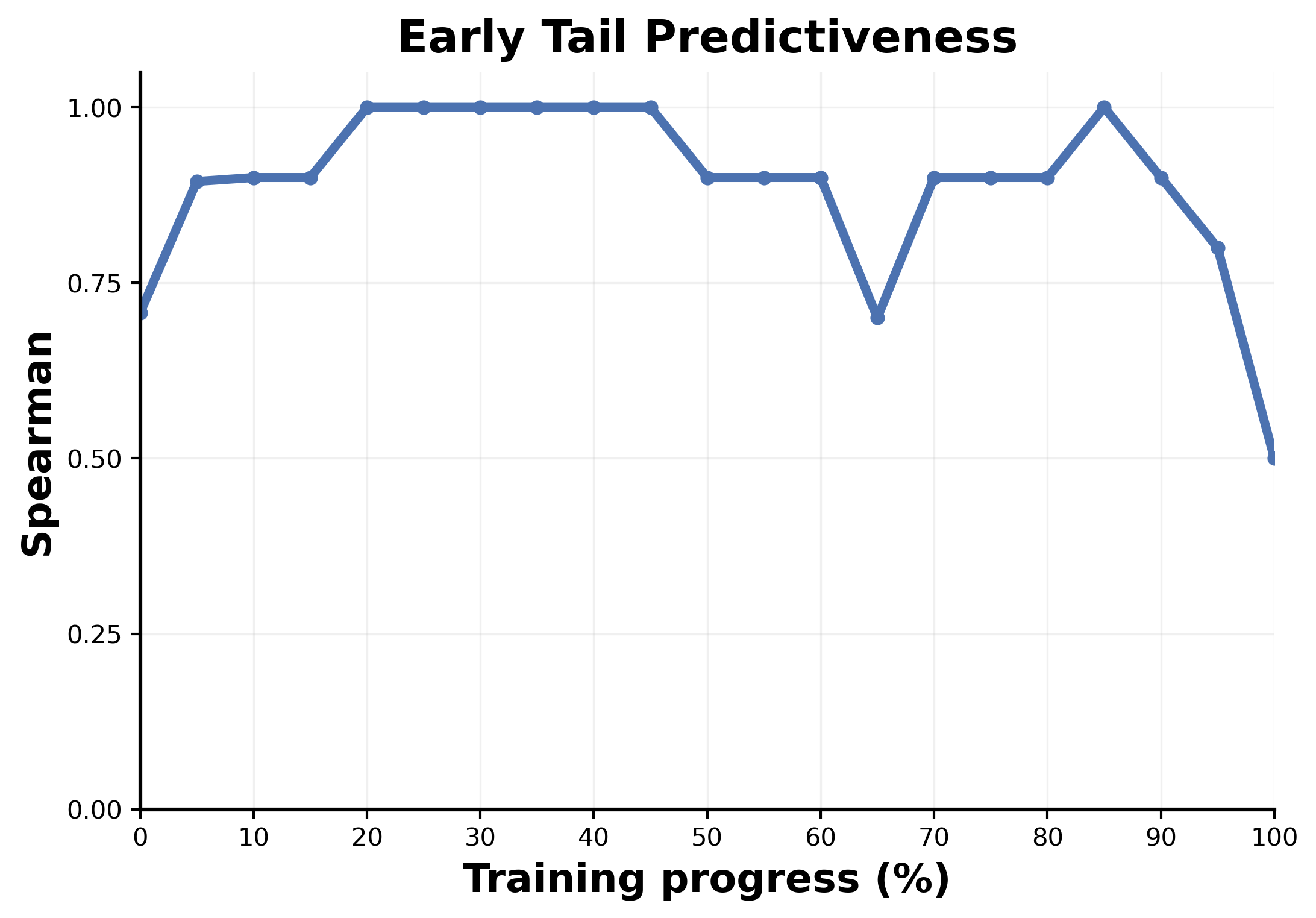}
  \end{subfigure}\hfill
  \begin{subfigure}[t]{0.32\textwidth}
    \centering
    \includegraphics[width=\linewidth]{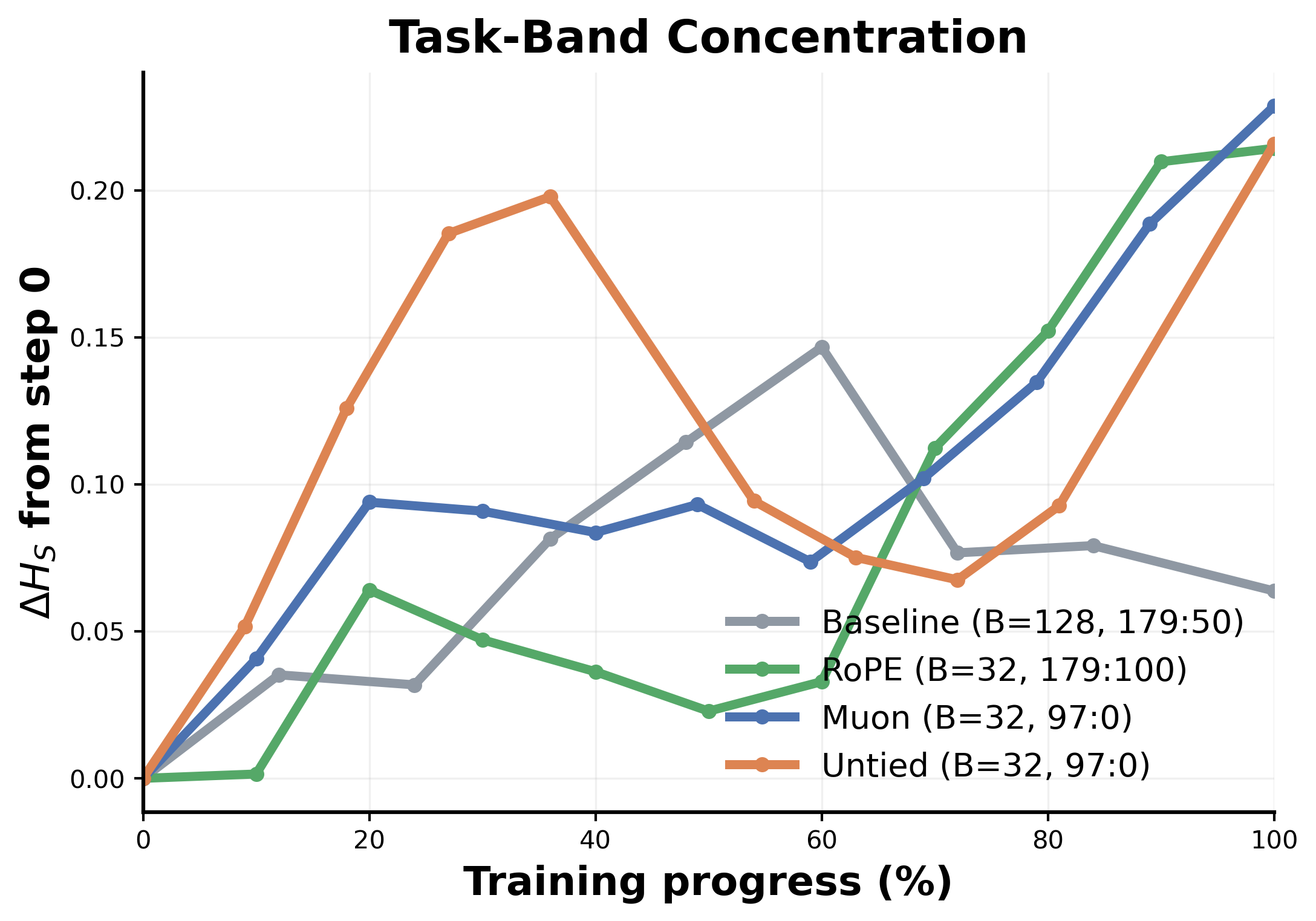}
  \end{subfigure}\hfill
  \begin{subfigure}[t]{0.32\textwidth}
    \centering
    \includegraphics[width=\linewidth]{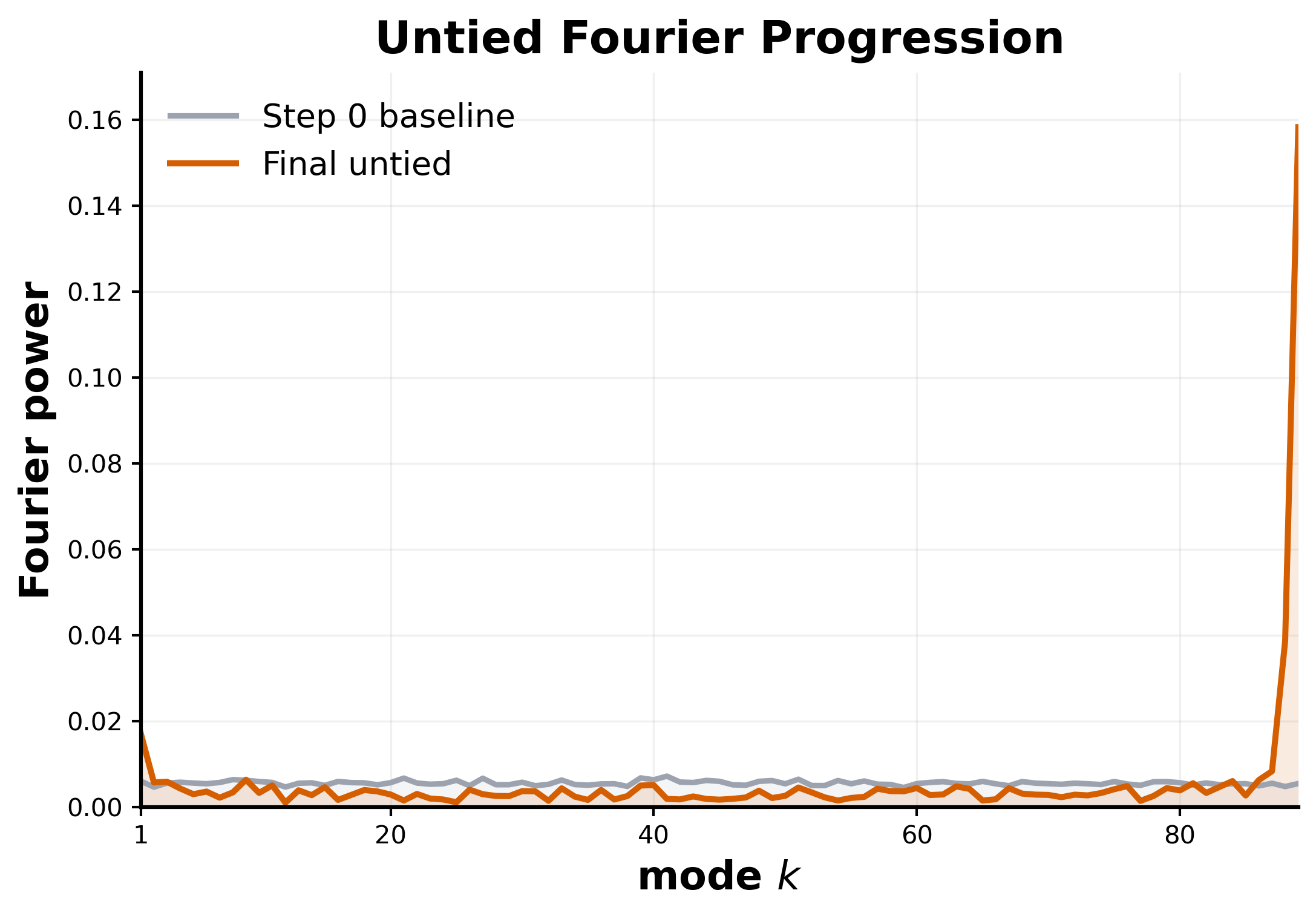}
  \end{subfigure}
  \caption{\textbf{Toy simulation links spectral diagnostics to Fourier feature learning.}
  (a) In the Muon stage, a local activation-tail statistic on ranks $10{:}40$ predicts eventual token efficiency early in training, reaching perfect Spearman correlation around $20\%$ training progress.
  (b) Targeted best-regime replays track $H_S$, the task-band concentration score used in the theory, across the baseline, RoPE, Muon, and untied stages.
  (c) The untied replay makes feature emergence explicit: the initial Fourier profile is diffuse, while the final profile concentrates on a few modes.
  Together, the panels support the interpretation that spectral changes track task-aligned feature recruitment in this controlled setting.}
  \label{fig:toy}
\end{figure}

Sections~\ref{sec:batch}--\ref{sec:taxonomy} show that spectra predict and diagnose training regimes empirically. This section explains why by studying toy tasks whose Fourier features are known, so activation spectra can be compared directly to task-aligned feature learning. Inspired by mechanistic interpretability and grokking studies on algorithmic and modular-arithmetic tasks \citep[e.g.,][]{nanda2023progress,liu2022towards,gromov2023grokking}, we introduce a controlled Fourier random feature model for next-token prediction where the latent task structure is analytically visible. 
Fix an integer cycle length $c \ge 2$, a step size $d \in \{0,1,\dots,c-1\}$, an offset $o$, and a context length $L \ge 1$. 
For a latent phase $a \in \Z_c$, we define the clean single-component sequence and its target as:
\begin{equation}
    x(a)=\bigl(o+(a+jd)\bmod c\bigr)_{j=0}^{L-1}, \qquad y(a)=o+(a+Ld)\bmod c.
\end{equation}
As the essential latent state is the phase $a$ on the finite cyclic group $\Z_c$, the natural representation coordinates are the Fourier characters $\chi_r(a)=e^{2\pi i r a/c}$ for $r \in \{0,1,\dots,c-1\}$. 
These characters form an orthonormal basis on $\Z_c$, serving as a dictionary of fundamental Fourier random features. 

To bridge these observables with our empirical taxonomy, we incrementally apply selected optimization tricks RoPE, Muon, and untied embeddings, to a baseline, allowing us to empirically track how each accelerates task-aligned feature learning (full protocol in Appendix~\ref{sec:appendix_toy_setup}). To formally ground these observations, our theoretical analysis proceeds through models of increasing complexity. We begin with a linearized gradient-flow model, progress to a two-layer diagonal Fourier factor model, and finally analyze a full two-layer Transformer. The main results for each setting are outlined below with proofs detailed in Appendix~\ref{sec:appendix_toy_theory}.

\begin{enumerate}[leftmargin=*, topsep=0pt, itemsep=2pt]
    \item \textbf{Informal result 1: Tail shape predicts family-local token efficiency.}
Using a linearized gradient-flow model, we first prove that distinct batch regimes can achieve identical loss while possessing fundamentally different activation spectra. 
In the cyclic-shift core where Fourier modes diagonalize the dynamics, the activation spectrum under specialization develops a learned head and an unresolved tail with exponent $\alpha_{\mathrm{tail}}(B) = p + 2q_B$. 
The appendix proves a sharper "efficiency theorem" (Corollary~\ref{cor:appendix_headmatched}): given two runs with the same matched early progress at an anchor rank, the run with the smaller (flatter) tail exponent $\alpha_{\mathrm{tail}}$ is guaranteed to reach any deeper task-relevant cutoff in fewer tokens. 
This establishes early spectral tails as a reliable mechanistic forecast for later efficiency, provided the comparison remains within an architectural family and a task-relevant fit window.

\item \textbf{Informal result 2: Task-band concentration tracks feature learning.}
The two-layer diagonal Fourier factor model makes feature learning visible as concentration of learned mass on the teacher-supported Fourier band.
In this model, each learned Fourier coefficient $m_r(t)=u_r(t)v_r(t)$ follows a nonlinear gradient-flow equation: modes with teacher coefficient $\beta_r>0$ grow toward their target value, while modes with $\beta_r=0$ decay.
Consequently, if the teacher signal lies in a task-relevant band $S$, the band-concentration statistic
\begin{equation}
H_S(t)=\frac{\sum_{r\in S} m_r(t)}{\sum_r m_r(t)}
\end{equation}
increases monotonically as long as both the in-band mass $\sum_{r\in S} m_r(t)$ and the off-band mass $\sum_{r\notin S} m_r(t)$ are nonzero (Appendix~\ref{sec:appendix_toy_theory}, especially Corollary~\ref{cor:appendix_hpeak}). This gives the theoretical reason to interpret increasing task-band concentration as feature learning in the toy model.

\item \textbf{Informal result 3: RoPE, Muon, and untied readouts act through distinct geometric mechanisms.}
The appendix proves an intervention-aligned mechanism results for the three toy-model tricks. RoPE restores the task's cyclic symmetry at the attention-score level: scores are shift-equivariant and depend on position only through relative offsets, while nontrivial absolute positional tables generically break this equivariance (Theorem~\ref{thm:appendix_rope_equiv} and Proposition~\ref{prop:appendix_absolute_breaks}). Muon changes the optimization geometry, not the model class: replacing a matrix gradient by its polar factor is steepest descent for an operator-norm trust region, equalizing singular directions and acting as spectral preconditioning (Theorem~\ref{thm:appendix_muon}). Untying the readout changes the model class: tied token-to-logit maps are confined to the embedding output subspace, while untied heads strictly enlarge the realizable class and remove this bottleneck (Theorem~\ref{thm:appendix_untied} and Corollary~\ref{cor:appendix_tied_lower_bound}). Thus faster concentration of task-relevant Fourier mass can arise from three sources: symmetry bias, matrix-update geometry, or output-factorization freedom.

\end{enumerate}

To connect the informal theory to an observable training process, we instantiate the cyclic task above with a small modular-arithmetic Transformer. The simulation uses a two-layer, four-head model and replays the cumulative prefix Baseline$\rightarrow$RoPE$\rightarrow$Muon$\rightarrow$Untied while sweeping batch size. Because the latent phase and Fourier coordinates are known by construction, this setting lets us measure activation spectra, Fourier-mode energy, and feature concentration directly rather than treating spectra only as black-box diagnostics. Full training, replay, and probe details are given in Appendix~\ref{sec:appendix_toy_setup}.

The simulation mirrors the larger-model phenomenon and then explains it in feature space.
As in the main experiments, different batch sizes can reach comparable objectives while retaining distinct spectra; the matched-loss batch-spectrum control in Appendix~\ref{fig:toy_appendix_matched_loss} shows this directly. Fig.~\ref{fig:toy} then asks whether those spectral differences correspond to feature recruitment. In Fig.~\ref{fig:toy}(a), the Muon local tail statistic reaches $\rho_S=1$ at roughly $20\%$ of training progress, showing that early spectral shape can already rank later token efficiency. Fig.~\ref{fig:toy}(b) tracks the same task-band concentration statistic used in Informal Result 2, $H_S(t)=\sum_{r\in S}m_r(t)/\sum_r m_r(t)$: larger values mean that a greater share of hidden-state Fourier energy lies in the task-relevant band, so its growth indicates increasingly selective Fourier feature learning. Fig.~\ref{fig:toy}(c) makes the same transition explicit, with the step-0 grey curve spread broadly across modes and the final untied curve developing sharp spikes on a small number of Fourier modes.
Together, these panels turn the spectral diagnostics from correlates of training into measurable signatures of task-aligned feature recruitment in the toy setting.

\section{Conclusion}
In this work we treat activation and gradient spectra as practical diagnostics for LLM
training. Across batch sweeps and an architectural intervention chain, the same two
measurements expose hidden batch-size regimes that scalar loss cannot distinguish,
predict token efficiency from early activation-tail shape, and separate learning-side
architectural gains from execution-side speedups. A modular-arithmetic toy model grounds
these signals in task-aligned feature learning. 

\paragraph{Limitations.} The current efficiency target is pretraining validation loss rather than downstream capability, so extending the same early-tail diagnostic to zero-shot or few-shot behavior remains an open question. Likewise, the present taxonomy is established on decoder-only cumulative intervention families and may require adaptation for MoE settings. Within this scope, the current analysis still compresses some detail: compact spectral summaries do not fully disentangle representation and systems changes in every variant, some early-prediction correlations are informative but still noisy,  the informative tail window is selected by a stable empirical protocol rather than a fully automatic estimator, and the toy models are best viewed as mechanistic abstractions rather than faithful models of language. Natural next steps are online batch-size or learning-rate control from spectral feedback, layer-resolved diagnostics of how spectral signatures propagate through depth, and automatic rules for locating the resolved-head / unresolved-tail boundary.

\section*{Acknowledgements}
We acknowledge helpful discussions with Atish Agarwala. This research was supported in part by the Yale Office of the Provost and experiments were run on the Yale Bouchet Cluster. E.~P was supported by an NSERC Discovery Grant RGPIN-2025-04643, an FRQNT--NSERC NOVA Grant, a CIFAR Catalyst Grant, an AFOSR grant and a gift from Google Canada.

\clearpage

\begingroup
\sloppy
\ifdefined\bibhang
\setlength{\bibhang}{0pt}
\fi
\bibliographystyle{unsrtnat}
\bibliography{neurips_2026}
\endgroup

\newpage
\appendix

\section{Additional Related Work}
\label{sec:appendix_related_work}
The main paper uses spectra as operational diagnostics, so the closest prior work spans representation geometry, optimization spectra, spectral learning theory, and simplified models of algorithmic feature learning.

\paragraph{Representation spectra and effective-rank summaries.}
RankMe~\citep{garrido2023rankme} motivates the entropy effective rank as a compact unsupervised summary of representation spread, while $\alpha$-ReQ~\citep{agrawal2022alphareq} uses eigenspectrum decay as a representation-quality signal in self-supervised learning. Covariance and dimensionality summaries also appear implicitly in redundancy-reduction objectives such as Barlow Twins and VICReg~\citep{zbontar2021barlow,bardes2022vicreg}, where representation collapse or excessive concentration is treated as a training pathology. In language models, anisotropy and low-dimensional dominant directions have long been observed in contextual embeddings~\citep{ethayarajh2019contextual,timkey2021rogue}; these results motivate reporting the whole spectrum, or at least several spectral summaries, rather than relying only on raw cosine geometry. More recent work uses covariance eigendecay and geometry phases to study representation changes during pretraining and post-training~\citep{li2025tracing,chang2022geometry}. The most directly related activation-spectrum precursor is \emph{Evolution of the Spectral Dimension of Transformer Activations}~\citep{liu2025evolution}, which studies heavy-tailed activation spectra and spectral-exponent evolution across layers and training. Our use is complementary: we test whether early and matched-loss spectra predict batch efficiency and distinguish architectural interventions.

\paragraph{Optimization-side spectra and heavy-tail views.}
Optimization spectra provide a complementary lens on the update path. Prior work shows that gradient descent can concentrate in a low-dimensional subspace~\citep{gurari2018subspace}, stochastic-gradient covariance can exhibit power-law structure~\citep{xie2023stochastic}, and Hessian spectra contain informative outliers tied to data and class structure~\citep{ghorbani2018hessian,papyan2019hessian}. Heavy-tailed random-matrix analyses likewise connect spectral decay exponents to optimization regimes and batch-size effects~\citep{mahoney2019heavy}, while scaling-law theory connects spectral structure to learning curves and generalization~\citep{bahri2024explaining,sharma2022manifold}. We therefore treat activation and gradient spectra as related but non-interchangeable measurements: one describes represented variance, the other describes update concentration.

\paragraph{Spectral learning theory and random-feature models.}
The toy analysis starts from linearized gradient flow because, in kernel and random-feature regimes, eigenvalues of the feature covariance or kernel operator directly control learning rates across target components. Random Fourier features provide a controlled finite-dimensional approximation to shift-invariant kernels~\citep{rahimi2007}, and later learning-curve analyses make the dependence on kernel spectra and task alignment explicit~\citep{bordelon2020spectrum,canatar2021spectral,spigler2019asymptotic}. Recent nonlinear spiked-covariance theory studies related signal-propagation and feature-selection questions~\citep{wang2024spiked}. This line of work motivates the appendix's first toy step: before studying transformers, isolate how a spectrum over Fourier modes shapes which task components are learned early, late, or not at all.

\paragraph{Fourier mechanisms in grokking and modular arithmetic.}
Modular arithmetic is a natural second toy setting because the cyclic group has an explicit Fourier basis. The original grokking experiments showed delayed generalization on small algorithmic datasets, including modular tasks~\citep{power2022grokking}. Subsequent work connected these dynamics to structured representation learning and phase diagrams~\citep{liu2022towards}, reverse-engineered modular-addition transformers into Fourier-space circuits with continuous progress measures~\citep{nanda2023progress}, and studied interpretable two-layer solutions for modular arithmetic~\citep{gromov2023grokking}. Recent analyses of two-layer modular-addition networks further emphasize single-frequency Fourier features, phase alignment, and gradient-flow competition among frequencies~\citep{he2026mechanism}. Other grokking studies frame delayed generalization through competing subnetworks or early spectral signatures of the learning curve~\citep{merrill2023tale,notsawo2023predicting}. Our toy-model sequence follows this literature but uses it for a narrower purpose: to check whether the spectral diagnostics used in language-model experiments track task-aligned feature recruitment, rather than only measuring black-box covariance concentration.

\section{Model Training Setup}
\label{sec:appendix_training_setup}
This section elaborates on the architectural tricks, model cards, dataset choices, and learning-rate selection protocol used by the language-model experiments. The suite follows a cumulative intervention path: each later label inherits the implementation choices of the previous label unless the row states a new change.

\subsection{Architectural tricks and variants}
\label{sec:appendix_variant_attribution}
Table~\ref{tab:appendix_variant_integrated} integrates the intervention descriptions, external provenance, and mechanism-level grouping used throughout the paper. The table is intentionally descriptive rather than code-level: it names what changes in the model or optimizer and how that change should be interpreted in the spectral analysis.

\begin{table*}[t]
\centering
\scriptsize
\setlength{\tabcolsep}{10pt}
\begin{tabular}{@{}p{0.10\textwidth}p{0.17\textwidth}p{0.43\textwidth}p{0.22\textwidth}@{}}
\toprule
\textbf{Stage} & \textbf{Group} & \textbf{Incremental change} &\textbf{ Provenance / reference} \\
\midrule
Baseline & Reference trunk & GPT-2-small-style decoder with learned absolute positions, tied token embedding and LM head, dense causal attention, RMS normalization, and squared-ReLU MLPs. & GPT-2~\citep{radford2019language}; modded-NanoGPT record~\citep{modnanogpt_record1_baseline}. \\
RoPE & Positional control & Replaces learned positions with rotary embeddings and explicit query/key/value projections; normalizes query/key states before rotation. & RoFormer~\citep{su2021roformer}; record~\citep{modnanogpt_record2_rope}. \\
Muon & Optimizer & Keeps the RoPE trunk but changes hidden-layer matrix updates to Muon; embeddings, LM head, and scalar/control parameters remain being trained with Adam. & Muon~\citep{jordan2024muon}; record~\citep{modnanogpt_record3_muon}. \\
Untied & Parameterization & Unties the token embedding and LM head and assigns them separate optimizer groups. & Record~\citep{modnanogpt_record8_untied}. \\
ValueMix & Value pathway & Adds cross-layer value mixing through learned interpolation between the current value tensor and previous-layer value state. & Record~\citep{modnanogpt_record9_valuemix}. \\
U-Net & Depth routing & Adds encoder--decoder-style skip connections across the 12-layer stack. & U-Net~\citep{ronneberger2015unet}; record~\citep{modnanogpt_record11_unet}. \\
FixedWin & Attention geometry & Replaces dense causal attention with document-aware local FlexAttention over a fixed horizon. & Record~\citep{modnanogpt_record12_fixedwin}. \\
FlexWin & Attention geometry & Warms the local attention horizon during training instead of using one fixed window throughout. & Record~\citep{modnanogpt_record13_flexwin}. \\
VTE & Value pathway & Adds a dedicated value-token embedding pathway injected into the value stream. & Record~\citep{modnanogpt_record14_vte}. \\
BetterWin & Attention/value geometry & Refines the VTE/windowed trunk with split value embeddings, block sliding windows, and separate block masks. & Record~\citep{modnanogpt_record16_betterwin}. \\
SparseV & Value sparsity & Replaces dense per-layer value-token embeddings with a sparse reusable embedding pattern. & Record~\citep{modnanogpt_record17_sparsev}. \\
TruncRoPE & Positional shaping & Applies rotary phase information to only a subset of each head dimension. & Record~\citep{modnanogpt_record17_sparsev}. \\
SoftCap & Output shaping & Applies bounded logit soft-capping before the loss. & Gemma 2~\citep{gemma2024}; record~\citep{modnanogpt_record18_softcap}. \\
FP8Head & Output/system path & Moves LM-head computation to an FP8 matrix-multiply path with explicit scaling. & FP8 formats~\citep{micikevicius2022fp8}; record~\citep{modnanogpt_record19_fp8head}. \\
LSWA & Attention geometry & Uses paired long and short attention masks assigned across layers. & Gemma 2~\citep{gemma2024}; record~\citep{modnanogpt_record20_lswindow}. \\
AttnScale/SubLR & Schedule controls & Rescales attention scores or selected parameter learning rates while keeping the late trunk fixed. & Record family~\citep{modnanogpt_record20_lswindow}. \\
\bottomrule
\end{tabular}
\caption{\textbf{Cumulative intervention catalog with mechanism groupings.}
The 16 named variants form a single incremental chain: each row inherits every
prior row's changes and describes only its own delta. The \textbf{Group} column
encodes the mechanism-level axis used in the spectral taxonomy of
Section~\ref{sec:taxonomy}: positional control, optimizer, parameterization,
value pathway, depth routing, attention geometry, and output/system path; allowing
each transition's spectral signature to be read back against its architectural role.
Provenance and contributor attributions follow the modded-NanoGPT record chain.}
\label{tab:appendix_variant_integrated}
\end{table*}

\subsection{Model card and parameter counts}
Table~\ref{tab:appendix_model_tiers} gives the canonical tier notation used throughout the paper: d12, d36, and d48 refer to 12-, 36-, and 48-layer Transformer profiles, respectively. Table~\ref{tab:appendix_model_card_params} then summarizes the model families and parameter counts used in the paper. Counts include registered trainable parameters and exclude buffers, masks, cached rotary tables, and other runtime state. All counts use the padded training vocabulary size of 50{,}304 tokens. Optimizer-only controls, such as the LSWA Adam comparison, therefore share the same parameter count as the corresponding architectural family.

\begin{table*}[t]
\centering
\scriptsize
\setlength{\tabcolsep}{3pt}
\begin{tabular}{@{}p{0.10\textwidth}p{0.08\textwidth}p{0.12\textwidth}p{0.15\textwidth}p{0.12\textwidth}p{0.13\textwidth}p{0.22\textwidth}@{}}
\toprule
Tier & Layers & Heads / width & Parameter range & Context length & Dataset split & Representative runs \\
\midrule
d12 prefix & 12 & 6 / 768 & 123.57M--162.20M & 1{,}024 & FineWeb-100B & Baseline, RoPE, Muon, Untied. \\
d12 trunk & 12 & 6 / 768 & 162.20M--625.80M & 65{,}536 & FineWeb-10B & ValueMix through AttnScale. \\
d36 support & 36 & 20 / 1280 & 836.57M--2.00B & 32{,}768 & FineWeb-10B & FlexWin, BetterWin, SparseV, AttnScale. \\
d48 support & 48 & 25 / 1600 & 1.87B--3.57B & 32{,}768 & FineWeb-10B & BetterWin and SparseV scale follow-ups. \\
\bottomrule
\end{tabular}
\caption{\textbf{Canonical model-tier notation.} The paper names scale primarily by layer count: d12, d36, and d48 denote the 12-, 36-, and 48-layer profiles. Parameter ranges vary because architectural families differ in value-token embeddings, sparse value pathways, and output parameterization.}
\label{tab:appendix_model_tiers}
\end{table*}

\begin{table*}[t]
\centering
\small
\setlength{\tabcolsep}{3pt}
\begin{tabular}{@{}p{0.18\textwidth}p{0.22\textwidth}p{0.20\textwidth}p{0.14\textwidth}p{0.16\textwidth}@{}}
\toprule
Family & Representative variants & Shape & Parameters & Training role \\
\midrule
Baseline learned-position tied & Baseline & 12 layers / 6 heads / width 768 & 124.35M & Short-context reference. \\
RoPE tied & RoPE, Muon & 12 / 6 / 768 & 123.57M & Positional and optimizer prefix. \\
Untied RoPE & Untied & 12 / 6 / 768 & 162.20M & Output-parameterization control. \\
ValueMix & ValueMix & 12 / 6 / 768 & 162.20M & Cross-layer value mixing. \\
U-Net/fixed/flex & U-Net, FixedWin, FlexWin & 12 / 6 / 768 & 162.20M & Main local-attention trunk before VTE. \\
Full VTE & VTE & 12 / 6 / 768 & 625.80M & Dense value-token embedding pathway. \\
Half VTE / BetterWin & BetterWin & 12 / 6 / 768 & 394.00M & Split/reused value-token embedding pathway. \\
Sparse family & SparseV, TruncRoPE, SoftCap, FP8Head, LSWA, LSWA-Adam, SubLR & 12 / 6 / 768 & 275.74M & Late sparse value and output-shaping trunk. \\
Scaled U-Net/flex & FlexWin d36 & 36 / 20 / 1280 & 836.57M & Scale robustness for the local-attention trunk. \\
Scaled BetterWin & BetterWin d36 / d48 & 36 / 20 / 1280 or 48 / 25 / 1600 & 2.00B / 3.57B & Largest half-VTE robustness runs. \\
Scaled sparse family & SparseV d36 / d48, AttnScale d36 & 36 / 20 / 1280 or 48 / 25 / 1600 & 1.02B / 1.87B & Larger sparse-trunk support. \\
\bottomrule
\end{tabular}
\caption{\textbf{Model-card summary with parameter counts.} The 12-layer experiments use GPT-2-small-scale width unless otherwise noted. The d36 and d48 follow-ups use the larger long-context profiles. Scaled sparse-family counts use the runtime-resolved sparse attention pattern used in training. The largest completed robustness run reported in the paper is BetterWin d48 at 3.57B parameters.}
\label{tab:appendix_model_card_params}
\end{table*}

The scaled full-VTE profile is larger than the half-VTE BetterWin profile, but the completed scale-robustness analysis reported here uses BetterWin and sparse-family follow-ups. We therefore reserve the ``largest run'' statement for the completed BetterWin d48 experiment rather than for every model profile that can be instantiated by the architecture.

\subsection{Training specifics}
The short-context prefix variants use sequence length $1{,}024$ and the 100B-token FineWeb sample for training; the long-context variants uses sequence length $65{,}536$ for the 12-layer runs and the 10B-token split for training. The d36 and d48 robustness runs use the long-context family with sequence length $32{,}768$. All 12-layer runs are trained on a single H200; the d36/d48 follow-up uses a single B200. Unless stated otherwise, the main d12 comparisons target validation loss 3.2 on a fixed held-out validation set drawn from the FineWeb-10B validation split, which is reused across the 100B/10B training-data bridge. Token efficiency is measured by the number of training tokens consumed before the first checkpoint meeting that target.

FineWeb is a cleaned English CommonCrawl corpus drawn from 96 dumps spanning 2013 through April 2024 and containing approximately 15 trillion GPT-2 tokens~\citep{lozhkov2024fineweb}. The two sampled splits are used for practical reasons: less token-efficient early variants are run on the 100B-token sample, while later variants use the 10B-token split. To verify that this switch does not introduce a spectral confound, we compare $1{,}024$ windows of $1{,}024$ tokens from the local FineWeb-10B shard against the Hugging Face FineWeb sample-100BT split. As shown in Fig.~\ref{fig:appendix_fineweb_compare}, the covariance spectra nearly overlap: the Jensen--Shannon divergence between the two splits is $4.31 \times 10^{-4}$, well below the run-to-run variability observed in the same spectral diagnostic. This supports treating the two data samples on a common spectral scale while explicitly noting the protocol difference.

\paragraph{Spectral measurement protocol.}
Spectra are extracted from deterministic prefixes of a held-out validation shard and reused across checkpoints and runs within each family. From FixedWin onward in the main d12 trunk, we use $T=65{,}536$, 512 validation sequences for activation covariance, and 512 validation sequences for per-sample gradient SVD, with covariance batch size=2 and gradient batch size=2; activation covariance therefore uses $N = 512 \times 65{,}536$ token-position rows. Before FixedWin, the short-context prefix uses the equivalent activation budget $32{,}768 \times 1{,}024$ and 512 gradient samples. Short-context targets are formed by shifting within a sequence and masking the last position, whereas the long-context trunk reads length-$(T+1)$ chunks so that each input token has a true next-token target.

\begin{figure}[t]
  \centering
  \includegraphics[width=0.6\textwidth]{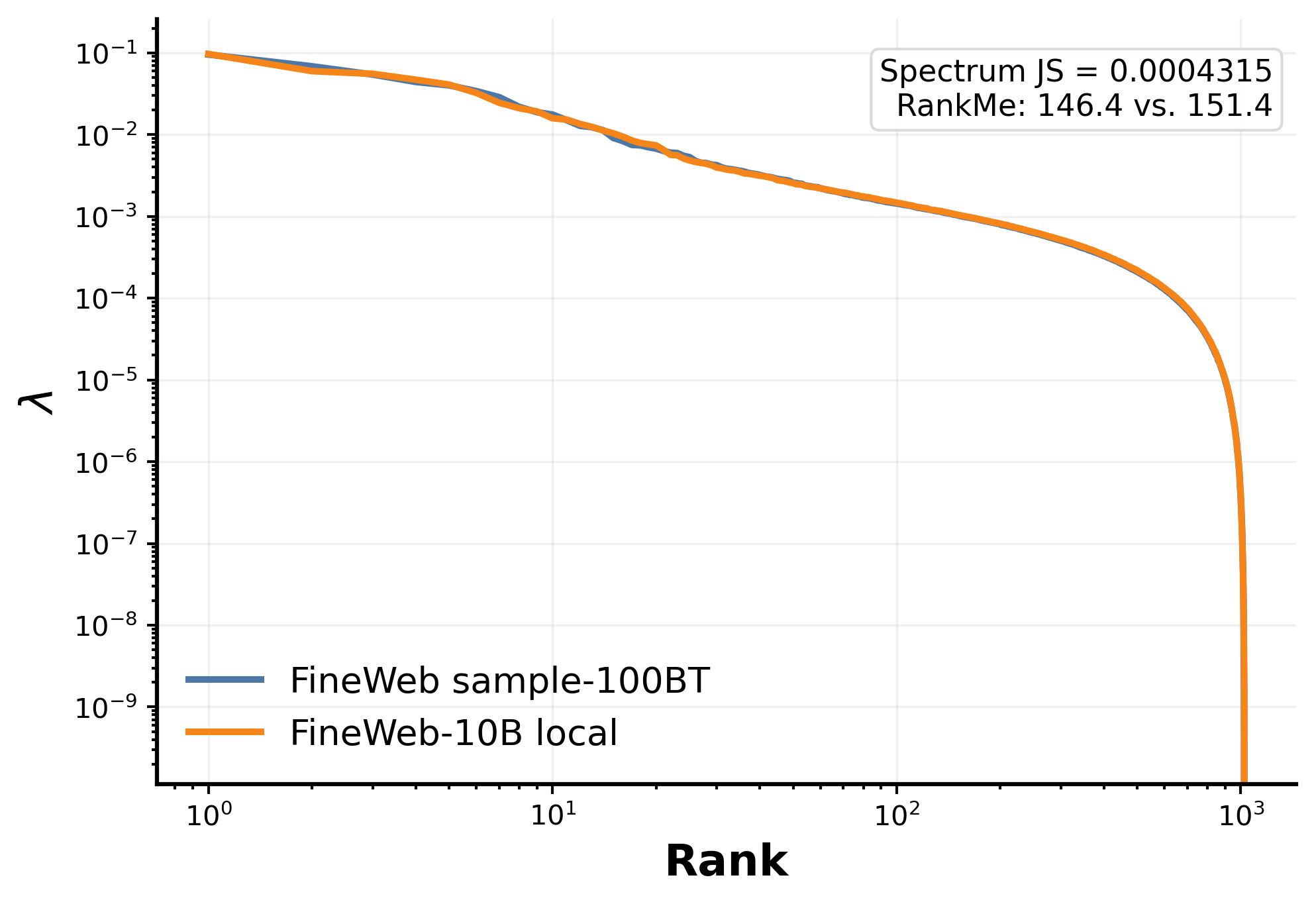}
  \caption{\textbf{FineWeb-10B and FineWeb-100B samples have nearly identical token-window spectra.} The plot compares trace-normalized covariance spectra from matched 1{,}024-token windows. The very small spectral Jensen--Shannon divergence supports treating the data switch as a minor spectral confound relative to the batch and architecture effects analyzed in the paper.}
  \label{fig:appendix_fineweb_compare}
\end{figure}

\paragraph{Learning-rate selection.}
Learning rates are selected separately for each model family and effective batch tier.
The reference tier matches the tier-8 modded-NanoGPT token batch: 8 sequences for
65{,}536-token runs and 512 sequences for the 1{,}024-token prefix runs, with reference
rates taken from the corresponding modded-NanoGPT configuration. Concretely, the tied
AdamW prefix uses $6\times10^{-4}$; the Muon prefix uses embedding and hidden-matrix
rates $(3.6\times10^{-3},\,3.6\times10^{-4})$; the untied prefix uses
$(0.3,\,0.002,\,0.02)$ for embedding, LM-head, and Muon; the long-context trunk uses
$(0.6,\,0.008,\,0.04,\,0.04)$ for embedding, LM-head, hidden-matrix, and
scalar/control parameters. The LSWA Adam control keeps the same embedding, head, and
scalar rates but sets the Adam matrix rate to $0.004$.

To scale to a target tier $B$ from reference $B_0$, we form scaling centers
$\sqrt{B/B_0}$ and $B/B_0$ and evaluate each at local multipliers
$\{0.5,\,1/\!\sqrt{2},\,1,\,\sqrt{2},\,2\}$, deduplicating coincident candidates.
Each multiplier is applied uniformly across all optimizer groups, preserving relative
rates between parameter classes while varying the global step size.

All d12 variants sweep over $\{1,2,4,8,16,32\}$; d36 uses $\{2,4,8,16\}$ and d48
uses $\{1,2,4,8\}$, subject to hardware constraints. Candidates are filtered by a
synchronized successive-halving procedure~\citep{li2020system}: all candidates train
to 50M tokens, survivors extend to 150M, and remaining candidates extend to 500M,
with validation loss evaluated every 20M tokens. Promotion retains the top half at
each rung, always keeping at least one survivor per family/tier pair. Power-law
extrapolations to 1B tokens are recorded as diagnostics but do not drive promotion,
which is based on observed validation loss at each rung. Final candidates were
selected under the top-half rule throughout; a top-third rule used in early pilots
was abandoned before the main sweep. The surviving configuration for each family and
batch tier is then used for the final constant-loss spectral runs---a necessary step,
since matched-loss spectral comparisons are only meaningful when each tier trains
under its own well-tuned schedule.

\section{Per-Sample Gradient Computation}
\label{sec:appendix_gradient_computation}
\label{sec:appendix_grad_matrix_choice}
For a hidden activation matrix $H\in\mathbb{R}^{N\times d}$ collected from a fixed measurement pool, we compute the centered covariance $C=N^{-1}(H-\bar H)^\top(H-\bar H)$ and analyze its eigenvalue spectrum after trace normalization. RankMe is the entropy effective rank,
\begin{equation}
\operatorname{RankMe}(C)=\exp\left(-\sum_j p_j\log p_j\right),\qquad p_j=\lambda_j/\sum_k\lambda_k,
\end{equation}
where $\lambda_j$ are the covariance eigenvalues. We also fit band-restricted power-law slopes $\lambda_j\propto j^{-\alpha}$ over head or tail rank windows; these exponents summarize whether variance is concentrated in leading modes or spread through a heavy tail.

For gradient spectra, choose a trainable weight matrix $W\in\mathbb{R}^{d_{\rm out}\times d_{\rm in}}$ and compute one loss gradient per sample from a fixed held-out pool of validation sequences, where one \emph{sample} means one validation sequence. Writing $g_m=\operatorname{vec}(\nabla_W \ell_m)$ for the gradient of the sample-mean autoregressive loss on sample $m$, we stack the rows into
\begin{equation}
G=\begin{bmatrix}
g_1^\top\\
\cdots\\
g_M^\top
\end{bmatrix}\in\mathbb{R}^{M\times d_{\rm out}d_{\rm in}},
\end{equation}
and analyze the singular values of $G$. This matrix is a tensor-specific view of update concentration, not a global invariant of the architecture. In the main text we standardize on the deepest saved attention-output projection because it is broadly available and directly measures attention writeback into the residual stream.

\begin{figure}[t]
  \centering
  \includegraphics[width=0.95\textwidth]{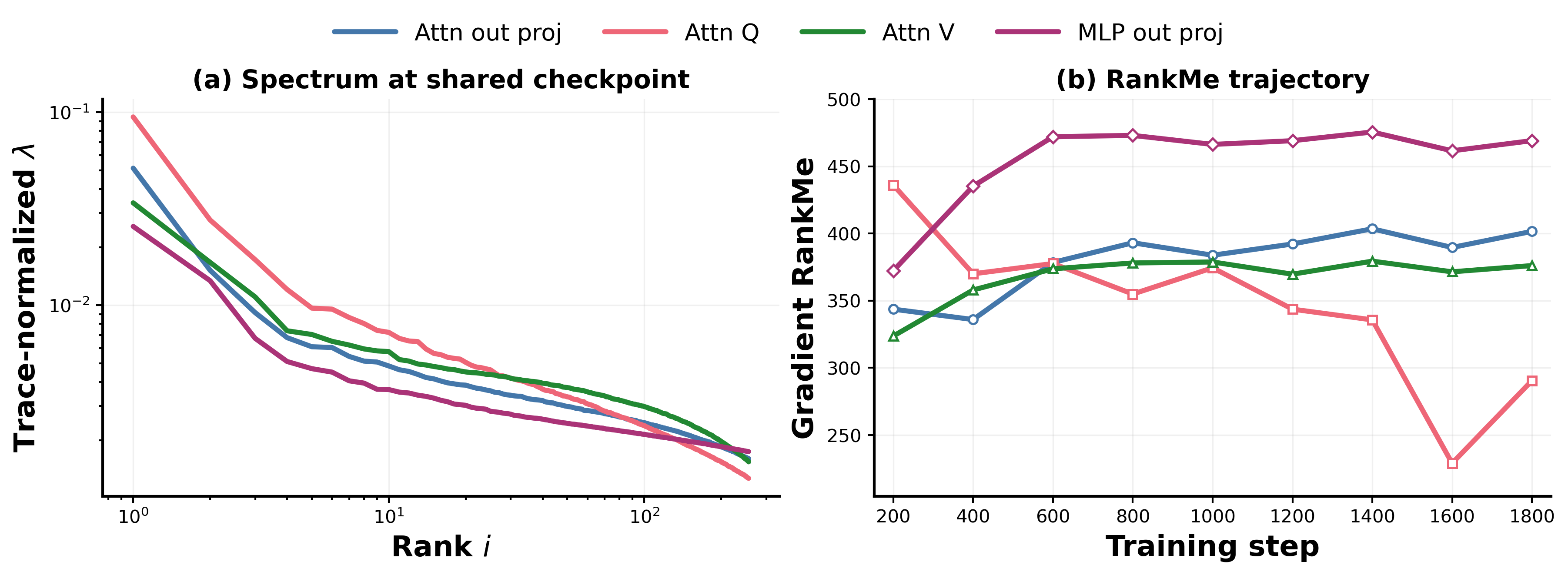}
  \caption{\textbf{Gradient spectra depend on the probed weight matrix.} Representative matrix-level comparison from the d12 BetterWin bs8 run at layer~11. The query projection, value projection, attention-output projection, and MLP-output projection produce different concentration levels and RankMe trajectories. This is why gradient spectra are interpreted as tensor-specific complements to activation spectra rather than architecture-level summaries.}
  \label{fig:appendix_grad_matrix_choice}
\end{figure}

Fig.~\ref{fig:appendix_grad_matrix_choice} shows the matrix-choice dependence. Each weight matrix exposes a different stage of the attention computation. Query and key
gradients reflect how the model is revising its \emph{attention routing}, which positions or feature directions a token should attend to or be attended by, upstream of any content movement. Value gradients reflect how the model adjusts \emph{what content gets aggregated} once routing is fixed: a large gradient here means the read-out per attended position is changing, not the selection pattern itself. Attention-output gradients capture something distinct from both: $W_O$ is the matrix that projects the concatenated multi-head outputs back into the residual stream, making it the unique write gate through which all attended content must pass before influencing downstream computation. Its gradient therefore summarizes the attention layer's net contribution to the residual stream after routing and
content selection have both been applied. MLP-output gradients play the analogous role for the feed-forward pathway.

We use $W_O$ of the final attention block as our primary gradient probe for two reasons. First, it is the natural complement to the activation covariance spectrum: activation spectra answer \emph{what has been learned} (the state of the residual stream), while
$W_O$ gradients answer \emph{how the attention layer is currently updating that stream},
giving the dual-view its interpretive coherence. Second, $W_O$ is the most stable probe
across the intervention chain. Unlike $W_V$, whose shape and semantics shift when value
pathways are restructured (ValueMix, VTE, BetterWin), and unlike $W_Q$/$W_K$, whose
routing role changes with attention geometry (FixedWin, FlexWin, LSWA), $W_O$ retains
the same architectural role, attention writeback to the residual stream, across all
16 variants. This invariance is what allows gradient spectra to serve as a
consistent comparative diagnostic along the full intervention chain. As
Fig.~\ref{fig:appendix_grad_matrix_choice} confirms, the qualitative diagnostic
conclusions are robust to this choice; the different probes differ in concentration
level and tail shape but not in their implied taxonomic groupings.

\section{Muon/Adam Comparison}
\label{sec:appendix_muon_adam}
Most later variants use a mixed optimizer: Adam-style updates for embeddings, the LM head, and scalar/control parameters, and Muon updates for hidden-layer matrix parameters. The batch-size phenomenon studied in Section~\ref{sec:batch}, point 1 is therefore not automatically a property of Muon alone. To isolate this, we train the LSWA variant under both the standard mixed-Muon setup and an Adam matrix-update variant, keeping all other hyperparameters fixed.

\begin{figure}[t]
  \centering
  \includegraphics[width=0.96\textwidth]{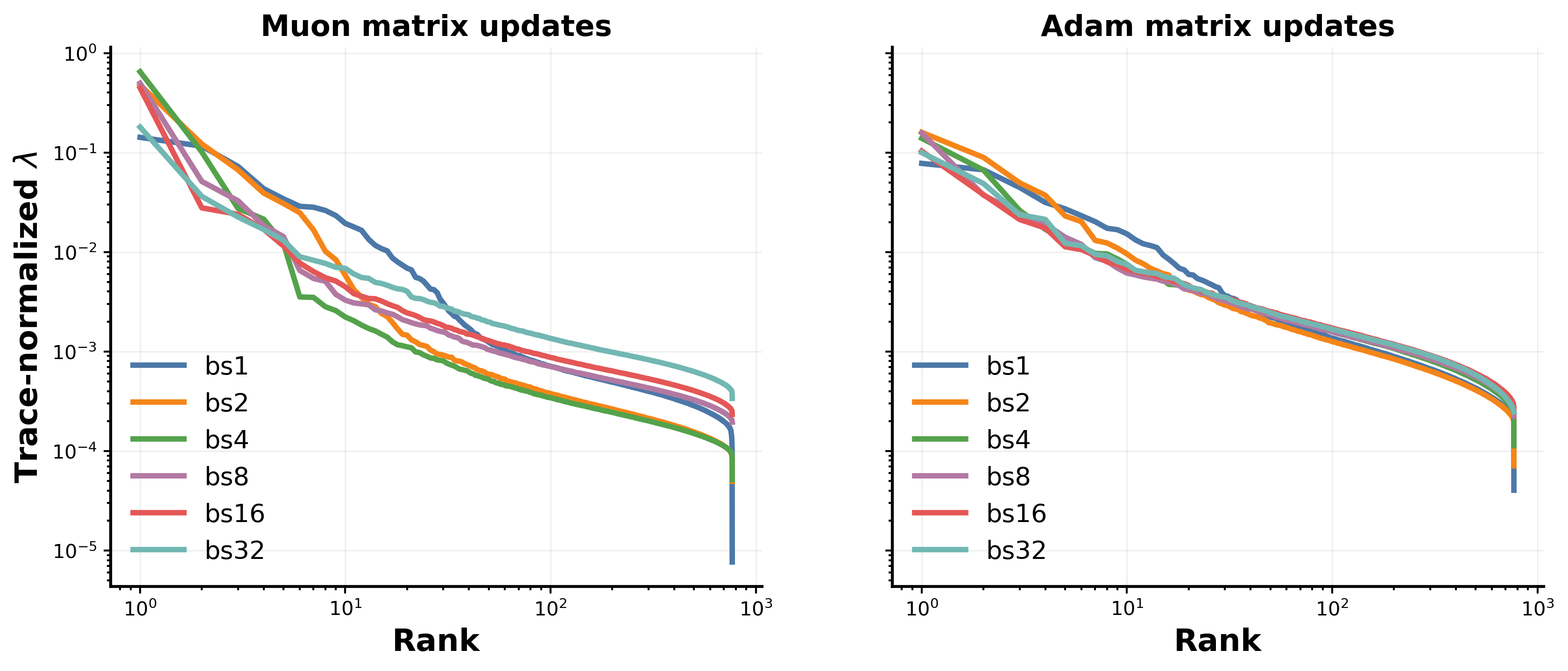}
  \caption{\textbf{Batch-dependent activation spectra appear under both Muon and Adam matrix updates.} Each panel shows final layer-11 activation covariance spectra for LSWA across effective batch tiers. The Adam variant still shows batch-dependent spectral separation, so the hidden-regime effect is not intrinsic to Muon. The separation is stronger and more sharply structured in the Muon runs, consistent with
  Muon changing the geometry of hidden-layer matrix updates.}
  \label{fig:appendix_lswa_adam_muon}
\end{figure}

Fig.~\ref{fig:appendix_lswa_adam_muon} shows that the Adam control preserves the key qualitative result: equal architecture and different batch tiers still produce measurably different activation spectra. Muon amplifies this separation, particularly in the leading modes, but its existence under Adam confirms that effective batch size is a latent determinant of representation geometry, not merely an artifact of the
matrix optimizer choice.

The amplified separation under Muon admits a natural mechanistic explanation from Theorem~\ref{thm:appendix_muon}. Muon's update is the polar factor $Q(G)=UV^\top$ of the gradient matrix, which equalizes every nonzero singular direction to unit magnitude. This scale-balancing is exactly what gives Muon its preconditioning advantage, but it also makes the \emph{direction} of each update sensitive to the singular structure of the empirical gradient $G$ itself. At small effective batch size, $G$ contains substantial mass in noise singular directions whose magnitudes scale as $\mathcal{O}(1/\sqrt{B})$, and orthogonalization treats these directions as equivalent to signal directions rather than down-weighting them by their amplitude. As $B$ grows, the noise singular values shrink relative to signal, and $Q(G)$ converges toward the polar factor of the population gradient. Adam's per-coordinate normalization, by contrast, rescales magnitudes coordinatewise but does not redistribute mass between singular directions at all, so its update direction depends less sharply on $B$. Concurrent work corroborates this picture by showing that Muon admits a substantially larger critical batch size than Adam under matched recipes, so the tiers swept here span more of Muon's rising compute-time curve and produce correspondingly larger trajectory differences.

Two practical implications follow. First, batch-tier selection is more
consequential under Muon than under Adam: an Adam-tuned batch regime does
not transfer cleanly, and the apparent within-architecture variance attributable to batch choice is larger when the matrix optimizer is Muon. Second, the spectral diagnostics introduced in this paper have higher discrimination value under Muon, which is consistent with the sharper batch separation visible throughout the main figures. The amplified batch dependence is not a pathology of Muon but a direct geometric consequence of replacing magnitude-aware updates with direction-equalizing ones.

\section{Predictive Ablations}
\label{sec:appendix_predictive_ablations}
This section tests how robust the early-prediction story is to training progress, random seed, and probe layer. These ablations are intentionally narrower than the main result: they ask whether the diagnostic survives plausible measurement choices rather than claiming that one scalar is universally optimal.

\begin{figure}[t]
  \centering
  \includegraphics[width=\textwidth]{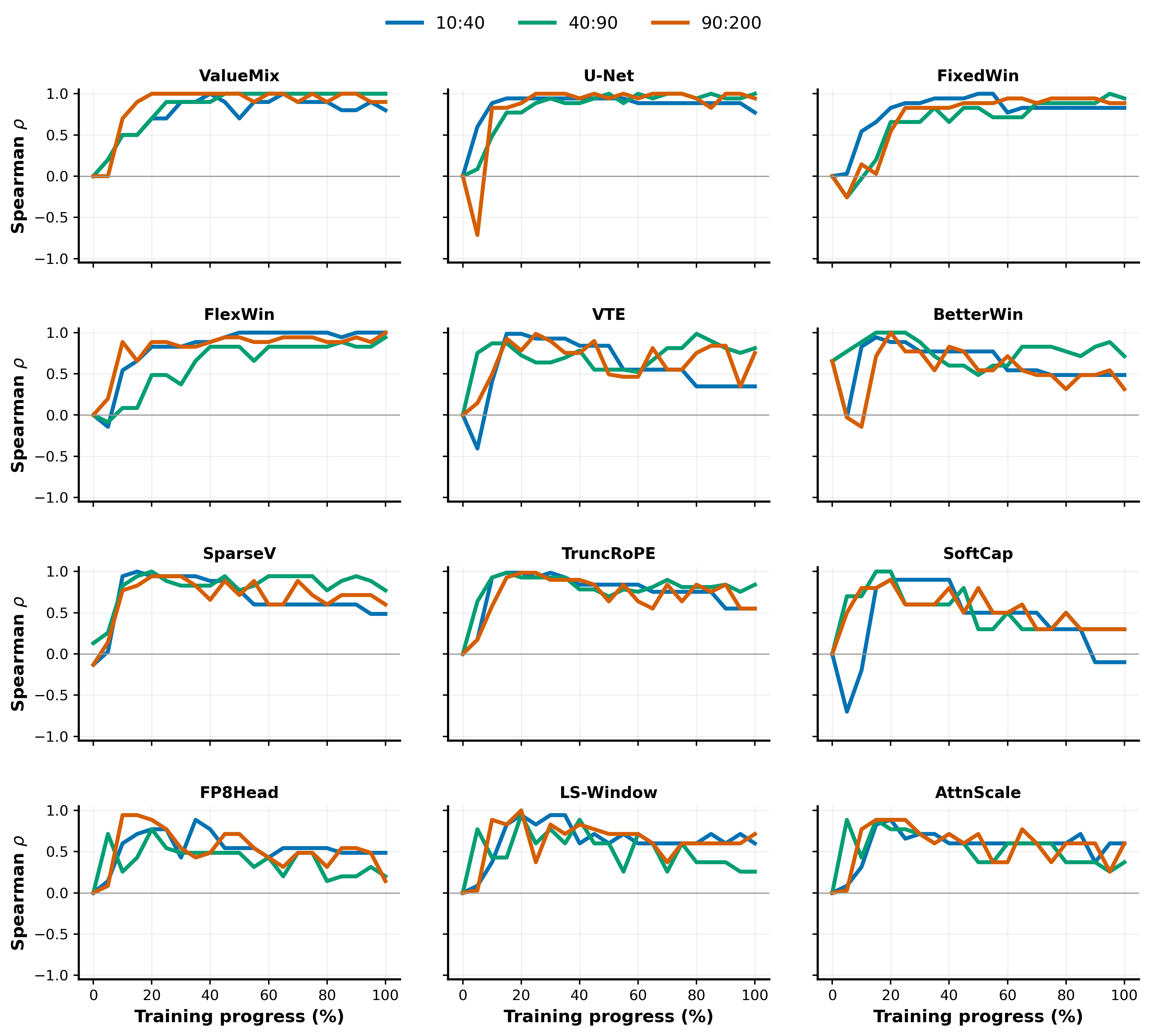}
    \caption{\textbf{Early-prediction strength evolves with training progress for d12 variants.} Each small panel shows one d12 model family. The colored lines report Spearman correlation between final token efficiency and local activation-spectrum exponents fit over rank windows 10--40, 40--90, and 90--200. The deeper-tail window 90--200 typically reaches a high positive correlation by about 20\% of training, often saturating near $\rho_S=1$, while shallower windows are more variant-dependent.}
  \label{fig:appendix_spearman_progress}
\end{figure}

\subsection{Spearman correlation over training percentage}
Fig.~\ref{fig:appendix_spearman_progress} asks when the early-prediction signal becomes visible across the d12 variant chain. Each panel fixes one d12 model family and plots the Spearman correlation between a local activation-spectrum exponent and the final token-efficiency ranking of that family's batch tiers. The three curves correspond to power-law fits over progressively deeper rank windows of the trace-normalized activation
covariance spectrum: ranks $10$--$40$ probe the upper modes just past the leading head, ranks $40$--$90$ probe an intermediate band, and ranks $100$--$200$ correspond to the informative tail window used by our main early-prediction analysis (Eq.~3).

A unifying observation across all panels is that the deepest-window correlation
\emph{peaks near roughly $20\%$ of training progress}, regardless of variant. This holds even for the variants where the signal is otherwise weak or noisy: FP8Head, LSWA, and AttnScale all attain their highest $\rho_S$ near the same early window
before drifting downward over the rest of training. The $20\%$ checkpoint is therefore not a feature of "easy" variants---it is a property of the diagnostic itself, and a practical one: once a variant has produced its peak early-tail signal, additional
training does not improve and may erode the diagnostic. This is consistent with the mechanistic picture in Section~\ref{sec:toy}, where early tail shape reflects which slow modes are still being recruited; once recruitment completes, that informative window closes. Beyond this shared peak time, the panels separate into three regimes that map cleanly onto the architectural groups in Table~2 and the taxonomy of Section~\ref{sec:taxonomy}.

\paragraph{Clean monotone signal in single-mechanism variants.} In ValueMix, U-Net, FixedWin, FlexWin, SparseV, and TruncRoPE, the deepest-window correlation rises smoothly, reaches its maximum around $20\%$ progress, and \emph{holds} near that maximum for the
remainder of training. These variants share a structural property: each introduces a single, uniformly-applied change---one new value-mixing rule, one skip pattern, one window size, one sparsity pattern, or one rotary truncation---so the activation tail
develops along a single timescale and the early shape stably ranks batch tiers throughout training.

\paragraph{Delayed-onset inversion in multi-component or amplitude-dependent variants.}
In VTE, BetterWin, and SoftCap, the deepest-window correlation begins strongly negative and climbs to positive values during training, again peaking near $20\%$ progress. Each of these variants introduces a structural mechanism whose effect on the tail emerges with delay rather than from initialization. VTE injects a dedicated value-token embedding pathway that must itself be learned before its spectral signature stabilizes; BetterWin simultaneously layers split value embeddings, block sliding windows, and separate block masks, requiring the three components to differentiate across blocks;
and SoftCap's bounded $\tanh$ on logits is amplitude-dependent, behaving nearly linearly until logits enter the saturation regime later in training. In all three cases the early tail does not yet reflect the intervention's eventual geometry, producing transient anti-correlation that resolves as the new component specializes. This is consistent with the two-component dynamics analyzed in Theorem~11 (two-layer Fourier
factor model), where mode growth is nonlinear and informative tail structure emerges only after a feature pathway has begun to recruit. Notably, in several of these panels the shallow $10$--$40$ window reaches a high correlation \emph{before} the deeper window does, mirroring the head-then-tail learning order predicted by Theorem~6.

\paragraph{Peak-then-decay in throughput-leaning variants.} FP8Head, LSWA, and AttnScale also peak around $20\%$ progress, but their correlations decay back toward $\rho_S \approx 0.5$ over the rest of training rather than holding. These are precisely
the variants classified as throughput-leaning in Section~\ref{sec:taxonomy}: FP8Head changes the LM-head's
numerical precision rather than its geometry; LSWA assigns different mask types to
different layers, so a single-layer probe sees only one mask regime; and AttnScale
rescales attention scores or selected learning rates while keeping the late-trunk
geometry fixed. None of these interventions reshape the late-trunk feature geometry
that the layer-11 probe measures, so the small early-tail signal that does exist is
gradually overwritten by execution-side noise as training progresses. Under this view
the decay is consistent with, rather than contrary to, our framework: the diagnostic
correctly reports low \emph{stable} discrimination when the underlying token-efficiency
gap is small, while still recovering the canonical $20\%$-peak behavior shared with the
rest of the chain.

Together, the three regimes turn Fig.~\ref{fig:appendix_spearman_progress} from a
robustness check into structural validation of the Section~\ref{sec:taxonomy} taxonomy: every variant
shares the same early-prediction time horizon, but only those whose architectural change
reshapes late-trunk feature geometry retain a stable diagnostic afterward. The
$20\%$-checkpoint rule for measuring $\alpha_{\mathrm{tail}}$ is therefore well-founded
across the entire intervention chain, while persistence of the signal beyond that point
is itself a useful secondary indicator of whether the intervention is learning-side or
execution-side.

\subsection{Random seed experiments}
Fig.~\ref{fig:appendix_seed_robustness_compact} provides a focused robustness check for FlexWin tier-16 configuration, comparing the original seed against
two additional seeds. All three panels support the same conclusion. The validation-loss curves are essentially indistinguishable throughout training, confirming that the three runs reach the same loss trajectory rather than
merely converging at a shared late checkpoint. At the common step-3000 checkpoint, both the activation covariance spectra and the gradient SVD spectra stack so tightly across seeds that the curves are nearly coincident---seed-to-seed variation is negligible relative to the scale of cross-tier separation visible in the main figures.
This is the relevant comparison: the batch-tier differences reported throughout the
paper are not an artifact of a single lucky or unlucky initialization, but reflect a
structural property of the training regime that is stable across independent runs.

\begin{figure}[t]
  \centering
  \includegraphics[width=0.96\textwidth]{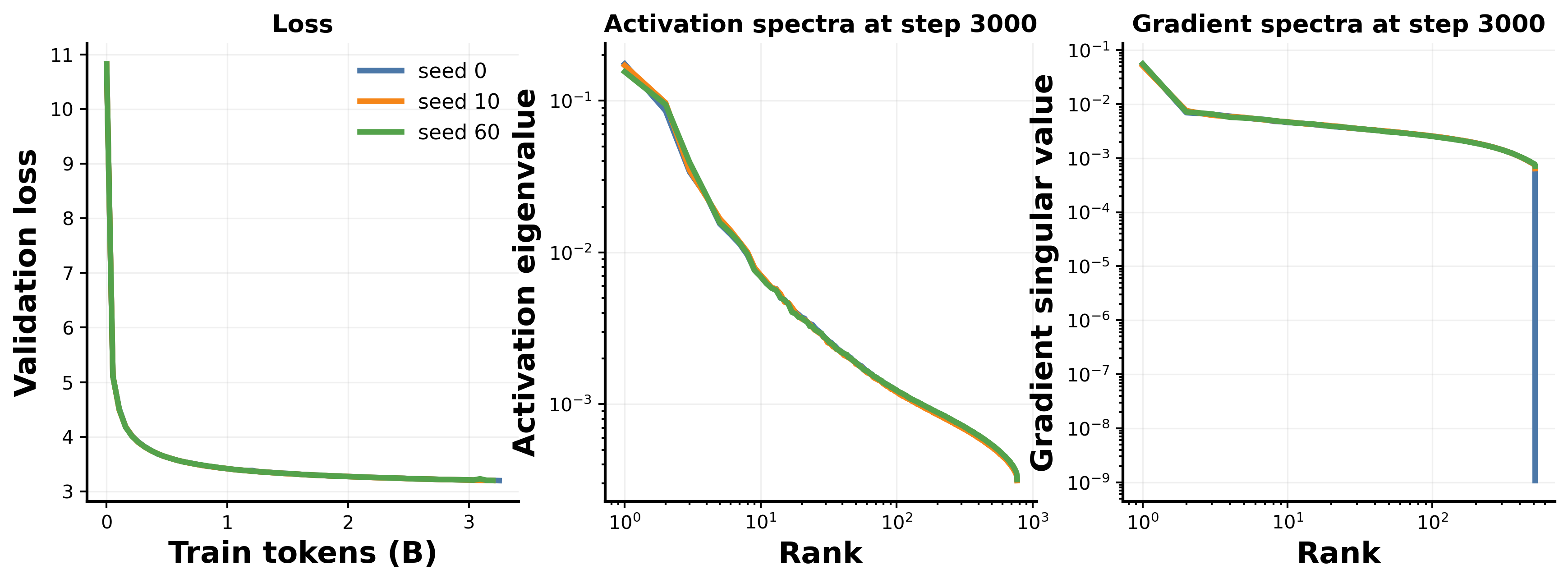}
  \caption{\textbf{FlexWin tier-16 spectra are stable across random seeds.} The left panel shows validation-loss curves; the middle and right panels show activation and gradient spectra at the shared step-3000 checkpoint. The small seed-to-seed variation supports treating the batch-tier separation as larger than ordinary seed noise in this setting.}
  \label{fig:appendix_seed_robustness_compact}
\end{figure}

\subsection{Layerwise ablation}
In this ablation we ask whether the final-layer activation probe used in the main early-prediction analysis is a principled measurement choice, or whether an earlier probe layer would serve equally well. For each d36 run, we select the saved
checkpoint at 0.21B training tokens and refit the tail exponent over ranks $[200,400]$, matching the predictive window used in the main figures. 

Fig.~\ref{fig:appendix_layerwise_prediction} shows the Spearman correlation between this early tail exponent and tokens-to-target as a function of probe layer, for the
three available d36 families. The pattern is consistent: early and middle layers
carry weak or sign-inconsistent signal, while the final stored probe layer (L35) shows
the strongest positive correlation in all three families. This late-layer advantage
is visible directly in the heatmap, correlations tend to become more positive moving
from L00 toward L35, and supports the convention of probing the deepest available
activation layer in the main experiments. We treat this as supporting evidence rather
than a universal layer-selection rule: the evidence is limited to three d36 families
and does not cover every d12 variant or every architectural regime in the chain.

\begin{figure}[t]
  \centering
  \includegraphics[width=\textwidth]{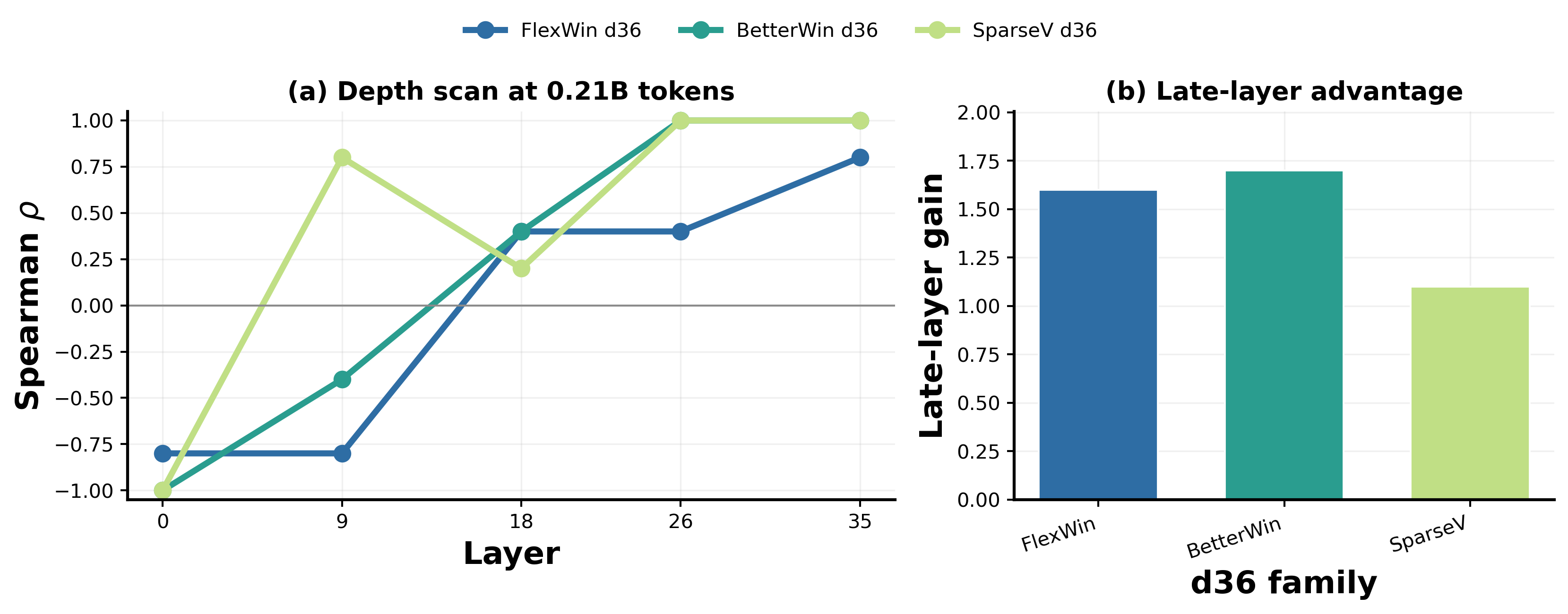}
  \caption{\textbf{Late-layer probes carry the clearest early-prediction signal in the d36 support runs.} (a) Spearman correlation between the early activation tail exponent and tokens-to-target proxy across batch tiers, computed at the saved checkpoint closest to $0.25$B training tokens. Each line is one d36 family and each x-position is a stored probe layer. The tail exponent is fit over ranks 200--400 of the activation covariance spectrum. (b) Difference between the final-layer correlation and the average correlation of the two earliest stored layers, $\rho(L35)-(\rho(L0)+\rho(L9))/2$. Positive values indicate that the late probe gives a stronger monotonic ordering of batch-tier token cost than early-layer probes.}
  \label{fig:appendix_layerwise_prediction}
\end{figure}

\section{Toy Model}
\label{sec:appendix_toy_setup}
The spectral measurements in the main paper are empirical: they track covariance eigenvalues and gradient singular values, but do not by themselves reveal why those quantities predict training efficiency or distinguish architectural interventions. The toy experiments address this gap through two controlled settings in which the same measurements can be traced back to explicit task structure. The analytic Fourier controls show how loss, feature spectra, and gradient spectra can decouple even when
the target basis is fully known. The modular-arithmetic Transformer then provides an empirical bridge: because the task is defined on a finite cyclic group, Fourier modes are explicit task-aligned coordinates, allowing feature learning to be measured directly rather than inferred from black-box covariance statistics.

\subsection{Linearized and diagonal Fourier models}
In the following we show that one-layer gradient-flow model shows that loss alone does not identify the hidden spectrum: different learning rates or noise levels can reach the same objective while retaining different activation covariance spectra, because Fourier modes approach their teacher coefficients at different rates. The two-layer diagonal Fourier model strengthens this by allowing each learned coefficient to factor through two trainable components, making mode growth nonlinear. Early spectral tails then become informative precisely because they reveal which unresolved modes still carry energy even after the scalar objective has moved substantially.

\subsection{Optimization and parameterization interventions}
The modular-arithmetic setting provides interpretable analogues of RoPE, Muon, and untied readouts. RoPE aligns the model with shift-equivariant Fourier coordinates, making the cyclic task structure natural. Muon changes the update geometry for matrix parameters, producing optimization efficiency gains without necessarily inducing an equally large change in feature concentration---a distinction visible in Fig.~\ref{fig:toy_appendix_matched_loss}(b). Untying the readout expands the accessible output subspace and removes constraints that force unrelated Fourier modes to share parameters. Together these distinctions mirror the main paper's taxonomy: some interventions primarily improve optimization efficiency, while others more directly reshape the learned representation.

\subsection{Two-layer Transformer and feature probes}
Our empirical Transformer toy uses a two-layer, four-head modular-arithmetic language model with context length $64$, vocabulary size $1{,}024$, dataset size $32{,}000$, and last-token pooling. We sweep $B \in \{32,64,128,256,512\}$ for the matched-loss spectra and use the cumulative chain Baseline$\rightarrow$RoPE$\rightarrow$Muon$\rightarrow$Untied
for the intervention analyses. Learning rates are selected by the same successive-halving logic used in the language-model experiments, and constant-loss comparisons use a validation-loss target of $2.0$.

Feature-learning probes are computed from saved checkpoints by Fourier-transforming hidden states along task-aligned bands. We report the task-band concentration statistic $H_S$, a peak statistic $H_{\mathrm{peak}}$, a smoother Gini-style whole-profile statistic, and PCA-based localization metrics. The main paper uses $H_S$ because it matches the theoretical quantity in Informal Result 2; the appendix uses $H_{\mathrm{peak}}$ and PC1 peak mass to expose where feature-learning signal enters the intervention chain.

Fig.~\ref{fig:toy_appendix_matched_loss} complements the main-text toy figure with three views. Panel~(a) confirms that even in this controlled modular task, matched validation loss does not imply matched spectra: smaller batches produce visibly steeper tails than larger ones. Panel~(b) decomposes the chain into consecutive $\Delta H_{\mathrm{peak}}$ gains. The Muon$\rightarrow$Untied step (purple) yields a jump roughly five times larger than either preceding transition, which remain small and
tightly clustered. This supports a clean distinction between gains from optimization geometry (RoPE, Muon) and gains from expanding the realizable output class (Untied).

Panel~(c) checks robustness across eight Fourier probe bands of varying width and offset. Baseline is uniformly low; Untied is consistently the strongest stage across every band; RoPE is broadly elevated with mild dips on the narrowest bands; and Muon peaks precisely on those narrow bands while remaining low on wider ones. We interpret
this complementarity as a spectral signature of the optimization-side versus
representation-side distinction: Muon's preconditioning concentrates on a narrow subset
of Fourier directions, while Untied's enlarged readout class produces a broadband
effect consistent with Theorem~15. The toy model thus supports a bounded but coherent
interpretation: the controlled setting links spectral diagnostics to task-aligned
feature recruitment, used qualitatively rather than as a claim about universal
transformer dynamics.

\begin{figure}[t]
  \centering
  \includegraphics[width=\textwidth]{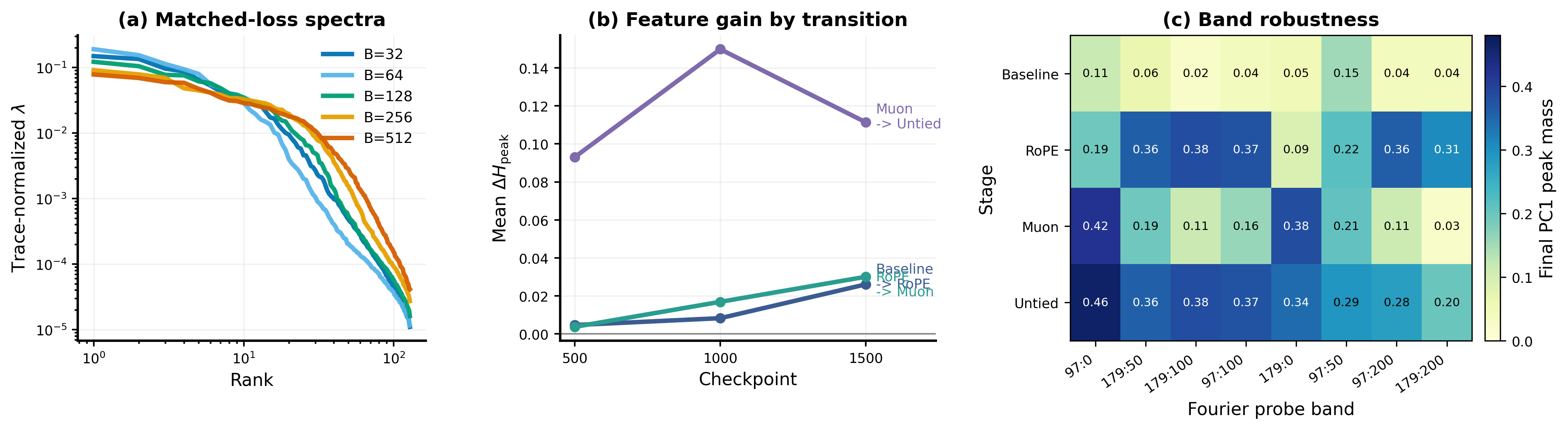}
  \caption{\textbf{The modular-arithmetic toy links matched-loss spectra to
  task-aligned feature learning.} (a) At matched validation loss, the Untied toy runs
  retain batch-dependent activation spectra across $B \in \{32,64,128,256,512\}$,
  paralleling the hidden-regime phenomenon in the language-model experiments.
  Smaller batches produce visibly steeper tails. (b) Consecutive intervention gains
  measured by mean $\Delta H_{\mathrm{peak}}$ at three checkpoints. The
  Muon$\rightarrow$Untied step produces a dramatic jump in feature concentration,
  while Baseline$\rightarrow$RoPE and RoPE$\rightarrow$Muon yield small, comparable
  gains. (c) A band sweep of final PC1 peak mass across eight Fourier probe bands.
  Untied is broadly strong across all bands; RoPE is broadly elevated but dips on the
  narrowest bands; Muon peaks precisely on those narrow bands while remaining low
  elsewhere; Baseline is uniformly weak. The pattern separates band-broad
  (representation-side) from band-narrow (optimization-side) signatures.}
  \label{fig:toy_appendix_matched_loss}
\end{figure}

\subsection{Formal toy-model theory and proofs}
\label{sec:appendix_toy_theory}
The main text states a small number of theorem-level claims in informal form.
This subsection gives full formal statements and complete proofs for the claims we use.
These are statements about the toy model and its measurements, not about full nonlinear transformer training.
We begin by formalizing the task itself: modular arithmetic reduces to tracking a phase on a finite cyclic group, and the Fourier characters on that group are the natural basis.
We then introduce a tractable shift-equivariant kernel-gradient-flow toy model and show that it decouples the learning problem into independent one-dimensional dynamics, one per Fourier mode.
Those dynamics can then be read back into the measurements used in the paper: activation covariance spectra weight learned mode energy, while gradient covariance spectra weight residual mode energy.
With that identification in place, the toy claims used in the main text---that loss-matched runs occupy different spectral states, that early spectra predict later efficiency, that a smooth power-law specialization yields a learned head plus unresolved tail, and that activation and gradient spectra are complementary---follow from the same modewise dynamics.

\subsubsection{One-layer Fourier model}
Fix an integer $c \ge 2$, a step size $d \in \{0,1,\dots,c-1\}$, an offset $o$, and a context length $L \ge 1$.
For each phase $a \in \Z_c$, define the clean single-component sequence
\begin{equation}
    x(a)=\bigl(o+(a+jd)\bmod c\bigr)_{j=0}^{L-1},
\end{equation}
with next-token target
\begin{equation}
    y(a)=o+(a+Ld)\bmod c.
\end{equation}
The essential latent state is therefore the phase $a$ on the finite cyclic group $\Z_c$.
Let
\begin{equation}
    \omega=e^{2\pi i/c},
    \qquad
    \chi_r(a)=\omega^{ra},
    \qquad r=0,1,\dots,c-1,
\end{equation}
be the Fourier characters on $\Z_c$.
They form an orthonormal basis of functions $f:\Z_c\to\C$ under the inner product
\begin{equation}
    \ip{f}{g} = \frac{1}{c}\sum_{a\in\Z_c} f(a)\overline{g(a)}.
\end{equation}

\begin{lemma}[The clean toy target is a cyclic shift]
\label{lem:toy_shift}
Let $S_q$ be the shift operator on functions $f:\Z_c\to\C$ defined by
\begin{equation}
    (S_q f)(a)=f(a+q \bmod c).
\end{equation}
Then the clean modular-arithmetic prediction map is the shift $S_{Ld}$, and each Fourier character is an eigenfunction:
\begin{equation}
    S_{Ld}\chi_r=\omega^{rLd}\chi_r.
\end{equation}
\end{lemma}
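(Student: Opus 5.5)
The plan is to make precise what ``the clean prediction map'' means, and then verify both claims by direct computation on $\Z_c$.

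First, identify the prediction map with an operator on functions of the phase. The input $x(a)$ is determined by its first token $x_0(a)=o+a \bmod c$, and hence by the phase $a$ (after subtracting the known offset $o$). The target is $y(a)=o+(a+Ld)\bmod c$, so its phase is $a+Ld$. Thus the clean task is the map $\Z_c\to\Z_c$ given by $a\mapsto a+Ld$. To turn this into an operator on functions $f:\Z_c\to\C$, I will use the natural pullback: a readout $f$ evaluated on the predicted phase gives $a\mapsto f(a+Ld)$. This is exactly $(S_{Ld}f)(a)$. This gives the first claim.

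Before doing so, I will check that $S_q$ is well defined. Since $(a+q)\bmod c$ depends only on the classes of $a$ and $q$ in $\Z_c$, the operator $S_q$ is well defined. Also $S_qS_{q'}=S_{q+q'}$, so $\{S_q\}$ is a representation of $\Z_c$.

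Second, the eigenfunction claim is a one-line character computation. We have $(S_{Ld}\chi_r)(a)=\omega^{r(a+Ld \bmod c)}$. Because $\omega^c=1$, the reduction mod $c$ in the exponent is harmless: $r(a+Ld \bmod c)\equiv r(a+Ld)\pmod c$, so $\omega$ raised to either gives the same value. Hence
\[
(S_{Ld}\chi_r)(a)=\omega^{ra}\,\omega^{rLd}=\omega^{rLd}\chi_r(a),
\]
as required. It also follows that $S_{Ld}$ is unitary for the stated inner product and is diagonal in the orthonormal character basis, with unimodular eigenvalues $\omega^{rLd}$. I will note this because later results use the diagonalization.

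The main, and only mild, obstacle is interpretive rather than computational. Saying the prediction map ``is'' $S_{Ld}$ requires fixing the convention that functions on outputs are pulled back to functions on phases. It also requires that the offset $o$ and the sequence structure reduce faithfully to the phase. Concretely, the map $a\mapsto x(a)$ must be injective, which holds because $x_0(a)$ already determines $a$.
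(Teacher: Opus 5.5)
Your proof is correct. Reading the prediction map as the phase translation $a\mapsto a+Ld$, pulled back to functions (which matches the paper's convention $(S_qf)(a)=f(a+q \bmod c)$), and then computing $\chi_r(a+Ld)=\omega^{rLd}\chi_r(a)$ from $\omega^c=1$ is exactly the evident argument. The paper states this lemma without a written proof, so your proposal supplies the omitted one-line check; your remarks on injectivity of $a\mapsto x(a)$ and on the shift convention are harmless clarifications.
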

Lemma~\ref{lem:toy_shift} characterizes the ideal solution but says nothing about the \emph{path} to it.
To make the training dynamics tractable, we introduce the linearized kernel-gradient-flow model
\begin{equation}
    \partial_t f_t = K(f^\star-f_t),
    \label{eq:appendix_gf}
\end{equation}
where $f^\star$ is the target function and $K$ is a positive semidefinite linear operator on functions on $\Z_c$.
The key modeling assumption is that $K$ is shift-equivariant---it respects the cyclic symmetry of the task:
\begin{equation}
    K(a+s,a'+s)=K(a,a')
    \qquad \text{for all } a,a',s\in\Z_c.
\end{equation}
Equivalently, there exists a kernel $k:\Z_c\to\C$ such that
\begin{equation}
    (Kf)(a)=\sum_{a'\in\Z_c} k(a-a')f(a').
\end{equation}

This symmetry is what makes the analysis simple.
Because $K$ commutes with cyclic shifts, the Fourier characters $\chi_r$ are simultaneously eigenfunctions of $K$ and of the task target, which means the learning problem decomposes into $c$ entirely independent one-dimensional dynamics.
Parts (a)--(c) of the following theorem make this decoupling precise; part (d) then connects the resulting abstract Fourier dynamics to the activation and gradient second moments measured in the toy analysis, after which Proposition~\ref{prop:appendix_centered_cov} converts them to the centered covariances used in the experiments.

\begin{theorem}[Modewise Fourier dynamics for a one-layer linearized model]
\label{thm:appendix_one_layer}
Assume \eqref{eq:appendix_gf} with the shift-equivariant positive semidefinite operator $K$ defined above.
Expand the target and predictor in the Fourier basis:
\begin{equation}
    f^\star=\sum_{r=0}^{c-1}\beta_r\chi_r,
    \qquad
    f_t=\sum_{r=0}^{c-1} a_r(t)\chi_r.
\end{equation}
Then:
\begin{enumerate}[label=(\alph*),leftmargin=1.5em]
    \item each $\chi_r$ is an eigenfunction of $K$ with eigenvalue
    \begin{equation}
        K\chi_r=\kappa_r\chi_r,
        \qquad
        \kappa_r = \sum_{\delta\in\Z_c} k(\delta)\omega^{-r\delta};
    \end{equation}
    \item each Fourier coefficient evolves independently as
    \begin{equation}
        \dot a_r(t) = -\kappa_r(a_r(t)-\beta_r),
    \end{equation}
    hence
    \begin{equation}
        a_r(t)=\beta_r+\bigl(a_r(0)-\beta_r\bigr)e^{-\kappa_r t};
    \end{equation}
    \item under the $L^2(\Z_c)$ loss
    \begin{equation}
        \mathcal L(t)=\frac12\norm{f_t-f^\star}_{L^2(\Z_c)}^2,
    \end{equation}
    we have
    \begin{equation}
        \mathcal L(t)=\frac12\sum_{r=0}^{c-1}|a_r(t)-\beta_r|^2;
    \end{equation}
    \item if $\{u_r\}_{r=0}^{c-1}$ is any orthonormal family of hidden directions in $\C^m$ and
    \begin{equation}
        h_t(a)=\sum_{r=0}^{c-1} a_r(t)u_r\chi_r(a),
    \end{equation}
    while
    \begin{equation}
        g_t(a)=\sum_{r=0}^{c-1}\kappa_r\bigl(\beta_r-a_r(t)\bigr)u_r\chi_r(a),
    \end{equation}
    then the second moments over uniform phase $a\sim \mathrm{Unif}(\Z_c)$ are
    \begin{align}
        M_h(t)
        &= \E\bigl[h_t(a)h_t(a)^\ast\bigr]
        = \sum_{r=0}^{c-1} |a_r(t)|^2 u_r u_r^\ast,
        \\
        M_g(t)
        &= \E\bigl[g_t(a)g_t(a)^\ast\bigr]
        = \sum_{r=0}^{c-1}\kappa_r^2 |\beta_r-a_r(t)|^2 u_r u_r^\ast.
    \end{align}
    In particular, the activation spectrum weights learned energy, while the update-side spectrum weights residual energy.
\end{enumerate}
\end{theorem}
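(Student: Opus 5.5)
The proof is a direct computation in the Fourier basis, carried out in the order (a)–(d). Each later part uses only the diagonalization from (a) and orthonormality of the characters.

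\textbf{Part (a).} Use the convolution form $(Kf)(a)=\sum_{a'}k(a-a')f(a')$ and substitute $\delta=a-a'$:
\begin{equation}
(K\chi_r)(a)=\sum_{\delta\in\Z_c}k(\delta)\,\omega^{r(a-\delta)}=\Bigl(\sum_{\delta}k(\delta)\omega^{-r\delta}\Bigr)\chi_r(a)=\kappa_r\chi_r(a).
\end{equation}
Positive semidefiniteness of $K$, together with the eigenfunction property, gives $\kappa_r=\ip{K\chi_r}{\chi_r}\ge 0$, hence $\kappa_r$ is real and nonnegative. We need this so that the exponentials in (b) decay and so that $\kappa_r^2=|\kappa_r|^2$ in (d).

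\textbf{Part (b).} Substitute the two expansions into \eqref{eq:appendix_gf} and use linearity and (a):
\begin{equation}
\sum_r\dot a_r\chi_r=\sum_r\kappa_r(\beta_r-a_r)\chi_r.
\end{equation}
Since the characters are orthonormal, hence linearly independent, we can take the inner product with $\chi_s$ to match coefficients. This gives the decoupled scalar linear ODEs $\dot a_r=-\kappa_r(a_r-\beta_r)$. The variable $a_r-\beta_r$ then satisfies $\frac{d}{dt}(a_r-\beta_r)=-\kappa_r(a_r-\beta_r)$, whose solution is $(a_r(0)-\beta_r)e^{-\kappa_r t}$. This is the claimed formula; uniqueness follows from standard linear ODE theory.

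\textbf{Part (c).} This is Parseval's identity for the normalized inner product: $\norm{\sum_r c_r\chi_r}^2=\sum_r|c_r|^2$ by orthonormality. Apply it with $c_r=a_r(t)-\beta_r$.

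\textbf{Part (d).} Expand the outer product and average over uniform $a$:
\begin{equation}
\E[h_th_t^\ast]=\sum_{r,s}a_r\overline{a_s}\,u_ru_s^\ast\,\E[\chi_r(a)\overline{\chi_s(a)}].
\end{equation}
The expectation $\E[\chi_r\overline{\chi_s}]$ equals $\ip{\chi_r}{\chi_s}=\delta_{rs}$ because the $L^2(\Z_c)$ inner product is exactly the uniform average. The cross terms therefore vanish, leaving $\sum_r|a_r|^2u_ru_r^\ast$. The same computation with coefficients $\kappa_r(\beta_r-a_r)$ gives $M_g$, using that $\kappa_r$ is real so $|\kappa_r|^2=\kappa_r^2$. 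Orthonormality of the $u_r$ makes both expressions spectral decompositions. The eigenvalues of $M_h$ are therefore the learned energies $|a_r|^2$, and those of $M_g$ are the weighted residual energies $\kappa_r^2|\beta_r-a_r|^2$, which is the interpretive claim.

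\textbf{Main obstacle.} None of these steps is deep. The one point needing care is the reality and nonnegativity of $\kappa_r$. It requires reading "positive semidefinite" as self-adjoint with respect to $\ip{\cdot}{\cdot}$, which forces $k(-\delta)=\overline{k(\delta)}$. I would state this convention explicitly. Lemma~\ref{lem:toy_shift} is not needed beyond motivating that $f^\star$ is expressible in the character basis, and that expansion holds for any function on $\Z_c$.
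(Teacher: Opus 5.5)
Your proof is correct and follows the paper's argument step for step: the substitution $\delta=a-a'$ in the convolution for (a), coefficient matching via orthogonality of the characters for (b), Parseval for (c), and the vanishing of the cross terms $\E[\chi_r\overline{\chi_s}]=\delta_{rs}$ for (d). Your explicit remark that $\kappa_r=\ip{K\chi_r}{\chi_r}$ is real and nonnegative is worth keeping. It is what makes $|\kappa_r|^2=\kappa_r^2$ in the formula for $M_g$, and the paper uses it without saying so.
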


\begin{proof}
For part (a), compute directly:
\begin{align}
    (K\chi_r)(a)
    &= \sum_{a'\in\Z_c} k(a-a')\chi_r(a')
    = \sum_{\delta\in\Z_c} k(\delta)\chi_r(a-\delta)
    \\
    &= \chi_r(a)\sum_{\delta\in\Z_c} k(\delta)\omega^{-r\delta}
    = \kappa_r\chi_r(a).
\end{align}
So each character is an eigenfunction.

For part (b), substitute the Fourier expansions into \eqref{eq:appendix_gf}:
\begin{equation}
    \sum_r \dot a_r(t)\chi_r
    = K\Big(\sum_r \bigl(\beta_r-a_r(t)\bigr)\chi_r\Big)
    = \sum_r \kappa_r\bigl(\beta_r-a_r(t)\bigr)\chi_r.
\end{equation}
since $K$ is a linear operator. Orthogonality of the $\chi_r$ gives
\begin{equation}
    \dot a_r(t) = -\kappa_r(a_r(t)-\beta_r),
\end{equation}
whose solution is the stated exponential formula.

Part (c) is Parseval in the orthonormal Fourier basis:
\begin{equation}
    \norm{f_t-f^\star}_{L^2(\Z_c)}^2
    = \sum_r |a_r(t)-\beta_r|^2.
\end{equation}

For part (d), use orthogonality of the hidden directions and Fourier characters.
For $M_h$,
\begin{align}
    M_h(t)
    &= \E\left[\sum_{r,s} a_r(t)\overline{a_s(t)}\,u_r u_s^\ast\,\chi_r(a)\overline{\chi_s(a)}\right]
    \\
    &= \sum_{r,s} a_r(t)\overline{a_s(t)}\,u_r u_s^\ast\,\E\bigl[\chi_r(a)\overline{\chi_s(a)}\bigr]
    \\
    &= \sum_r |a_r(t)|^2 u_r u_r^\ast.
\end{align}
The formula for $M_g$ is identical after replacing $a_r(t)$ by $\kappa_r(\beta_r-a_r(t))$.
\end{proof}

\begin{proposition}[Second moments and centered covariances]
\label{prop:appendix_centered_cov}
In the setting of Theorem~\ref{thm:appendix_one_layer}(d), define
\begin{equation}
    M_h(t)=\E[h_t(a)h_t(a)^\ast],
    \qquad
    M_g(t)=\E[g_t(a)g_t(a)^\ast].
\end{equation}
Then the means are
\begin{equation}
    \mu_h(t)=\E[h_t(a)] = a_0(t)u_0,
    \qquad
    \mu_g(t)=\E[g_t(a)] = \kappa_0\bigl(\beta_0-a_0(t)\bigr)u_0,
\end{equation}
and the centered covariances are
\begin{align}
    \mathrm{Cov}(h_t(a))
    &= M_h(t)-\mu_h(t)\mu_h(t)^\ast
    = \sum_{r=1}^{c-1} |a_r(t)|^2 u_r u_r^\ast,
    \\
    \mathrm{Cov}(g_t(a))
    &= M_g(t)-\mu_g(t)\mu_g(t)^\ast
    = \sum_{r=1}^{c-1}\kappa_r^2 |\beta_r-a_r(t)|^2 u_r u_r^\ast.
\end{align}
Thus the DC mode contributes to the mean and uncentered second moment, but it drops out of the centered covariance used by the experiments.
\end{proposition}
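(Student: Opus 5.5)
The plan is to compute the means directly from the Fourier expansions in Theorem~\ref{thm:appendix_one_layer}(d), and then subtract the rank-one mean term from the second moments already obtained there. For the mean, take the expectation term by term under $a\sim\mathrm{Unif}(\Z_c)$:
\[
\E[h_t(a)]=\sum_{r} a_r(t)u_r\,\E[\chi_r(a)].
\]
Since $\E[\chi_r(a)]=\ip{\chi_r}{\chi_0}=\mathbf{1}[r=0]$ by orthonormality of the characters (with $\chi_0\equiv 1$), only the $r=0$ term survives, which gives $\mu_h(t)=a_0(t)u_0$. The identical computation with $a_r(t)$ replaced by $\kappa_r(\beta_r-a_r(t))$ gives $\mu_g(t)=\kappa_0(\beta_0-a_0(t))u_0$.

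Next I use the standard identity $\mathrm{Cov}(X)=\E[XX^\ast]-\E[X]\E[X]^\ast$ for complex vector-valued $X$. It follows by expanding $\E[(X-\mu)(X-\mu)^\ast]$ and using linearity of expectation. Then I substitute the diagonal forms from Theorem~\ref{thm:appendix_one_layer}(d):
\[
M_h(t)=\sum_r |a_r(t)|^2u_ru_r^\ast,
\qquad
\mu_h\mu_h^\ast=|a_0(t)|^2u_0u_0^\ast.
\]
The $r=0$ term cancels exactly, leaving $\sum_{r\ge1}|a_r(t)|^2u_ru_r^\ast$. The same cancellation of the $r=0$ term, namely $\kappa_0^2|\beta_0-a_0(t)|^2u_0u_0^\ast$, handles $\mathrm{Cov}(g_t(a))$. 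This uses $|\kappa_0|^2=\kappa_0^2$, which holds because $K$ is positive semidefinite and so its eigenvalues $\kappa_r$ are real and nonnegative. The concluding remark about the DC mode is then just a reading of these formulas.

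There is no real obstacle here. The only points needing care are bookkeeping with complex conjugates: the outer product uses $u_0u_0^\ast$ and $\overline{a_0}$, so the product $\mu\mu^\ast$ matches the $r=0$ summand exactly. I would also note explicitly that the real, nonnegative eigenvalues of the PSD operator $K$ justify writing $\kappa_r^2$ rather than $|\kappa_r|^2$.
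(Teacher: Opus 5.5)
Your proposal is correct and matches the paper's proof. Both compute the means from $\E[\chi_r(a)]=\delta_{r,0}$, then observe that subtracting $\mu\mu^\ast$ from the second moments of Theorem~\ref{thm:appendix_one_layer}(d) removes exactly the $r=0$ summand; your added remark that $|\kappa_0|^2=\kappa_0^2$ because $K$ is positive semidefinite (so each $\kappa_r$ is real and nonnegative) makes explicit a point the paper leaves implicit.
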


\begin{proof}
Since $\E_a[\chi_r(a)]=\frac{1}{c}\sum_{a\in\Z_c}\omega^{ra}=\delta_{r,0}$, only the DC mode has nonzero mean:
\begin{align}
    \mu_h(t)
    &=\sum_{r=0}^{c-1} a_r(t)u_r \E[\chi_r(a)]
      = a_0(t)u_0,\\
    \mu_g(t)
    &=\sum_{r=0}^{c-1} \kappa_r(\beta_r-a_r(t))u_r \E[\chi_r(a)]
      = \kappa_0(\beta_0-a_0(t))u_0.
\end{align}
Subtracting $\mu_h(t)\mu_h(t)^\ast$ and $\mu_g(t)\mu_g(t)^\ast$ from the second moments in Theorem~\ref{thm:appendix_one_layer}(d) removes exactly the $r=0$ term, which yields the stated covariance formulas.
\end{proof}

Theorem~\ref{thm:appendix_one_layer} together with Proposition~\ref{prop:appendix_centered_cov} is the analytical foundation for all three empirical claims about the toy model.
Part (d) gives the second-moment dictionary, and Proposition~\ref{prop:appendix_centered_cov} converts it into the centered covariance form used in the experiments: activation spectra weight the energies $|a_r(t)|^2$ of modes that have already been learned, while gradient spectra weight the residual energies $\kappa_r^2|\beta_r - a_r(t)|^2$ of modes still being learned.
Since different training regimes induce different rates $\{\kappa_r\}$, the mode energies diverge across regimes even at equal total loss---and those divergences are directly visible in the observable covariances.
The three results below make this precise.

\begin{proposition}[Matched loss does not identify spectral state]
\label{prop:appendix_matched_loss}
Consider $m\ge 2$ modes with zero initialization and equal target coefficients $\beta_r=1$ for $r=1,\dots,m$.
Suppose regime $\mathrm{A}$ has isotropic rates $\kappa_r^{\mathrm{A}}=\kappa>0$ for all $r$, and regime $\mathrm{B}$ has strictly ordered anisotropic rates
\begin{equation}
    \kappa_1^{\mathrm{B}}>\kappa_2^{\mathrm{B}}>\cdots>\kappa_m^{\mathrm{B}}>0.
\end{equation}
Define the normalized activation mass on mode $1$ by
\begin{equation}
    P_j(t)=\frac{|a_1^{(j)}(t)|^2}{\sum_{r=1}^m |a_r^{(j)}(t)|^2},
    \qquad
    j\in\{\mathrm{A},\mathrm{B}\}.
\end{equation}
Then for every loss level $\ell\in(0,m/2)$ there exist unique times $t_{\mathrm{A}},t_{\mathrm{B}}>0$ such that
$\mathcal{L}_{\mathrm{A}}(t_{\mathrm{A}})=\mathcal{L}_{\mathrm{B}}(t_{\mathrm{B}})=\ell$, but
\begin{equation}
    P_{\mathrm{A}}(t_{\mathrm{A}})=\frac{1}{m},
    \qquad
    P_{\mathrm{B}}(t_{\mathrm{B}})>\frac{1}{m}.
\end{equation}
So equal scalar loss does not determine equal internal spectral state.
\end{proposition}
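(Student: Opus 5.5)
We work entirely inside the closed-form dynamics of Theorem~\ref{thm:appendix_one_layer}(b)--(c), restricted to the $m$ modes $r=1,\dots,m$ (all other modes have $\beta_r=0$ and zero initialization, so they stay at zero and contribute nothing to loss or activation mass). With $a_r(0)=0$ and $\beta_r=1$, part (b) gives $a_r^{(j)}(t)=1-e^{-\kappa_r^{(j)}t}\in[0,1)$ for $t\ge 0$. Part (c) then gives
\begin{equation}
    \mathcal L_j(t)=\frac12\sum_{r=1}^m e^{-2\kappa_r^{(j)}t}.
\end{equation}

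\textbf{Step 1: existence and uniqueness of the loss-matching times.} Each $\mathcal L_j$ is continuous and strictly decreasing on $[0,\infty)$, since every summand has strictly positive rate. It satisfies $\mathcal L_j(0)=m/2$ and $\mathcal L_j(t)\to 0$ as $t\to\infty$. By the intermediate value theorem and strict monotonicity, every $\ell\in(0,m/2)$ is attained at exactly one $t_j>0$. In regime A this time is explicit: $t_{\mathrm A}=\frac{1}{2\kappa}\log(m/(2\ell))$.

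\textbf{Step 2: regime A.} All $a_r^{(\mathrm A)}(t)$ are equal, so $P_{\mathrm A}(t)=1/m$ for every $t>0$. In particular $P_{\mathrm A}(t_{\mathrm A})=1/m$. The denominator is nonzero for $t>0$.

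\textbf{Step 3: regime B.} At $t_{\mathrm B}>0$ the map $\kappa\mapsto 1-e^{-\kappa t_{\mathrm B}}$ is strictly increasing. Strict ordering of the rates therefore gives
\begin{equation}
    a_1^{(\mathrm B)}(t_{\mathrm B})>a_r^{(\mathrm B)}(t_{\mathrm B})>0 \qquad (r\ge 2).
\end{equation}
Since all values are positive, squaring preserves the inequality. Hence $|a_1|^2$ strictly exceeds the average of $\{|a_r|^2\}_{r=1}^m$; this uses $m\ge 2$, so that at least one strictly smaller term is present. That is exactly $P_{\mathrm B}(t_{\mathrm B})>1/m$.

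There is no real obstacle. The only points needing care are the ones flagged in Step 3: positivity of $t_{\mathrm B}$, which ensures strict inequalities and a nonzero denominator, and the role of $m\ge 2$. I would also note explicitly that the conclusion holds at every positive time, not just at the loss-matched one.
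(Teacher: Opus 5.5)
Your proof is correct and follows essentially the same route as the paper's: the closed form $a_r^{(j)}(t)=1-e^{-\kappa_r^{(j)}t}$, strict monotone decrease of $\mathcal L_j$ from $m/2$ to $0$ giving unique matched-loss times, $P_{\mathrm A}\equiv 1/m$ by symmetry, and $P_{\mathrm B}(t)>1/m$ for all $t>0$ from the strict ordering of the rates. Your explicit formula for $t_{\mathrm A}$ and your remarks on positivity and on the role of $m\ge 2$ are small additions that the paper leaves implicit.
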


\begin{proof}
Under zero initialization, Theorem~\ref{thm:appendix_one_layer} gives $a_r^{(j)}(t)=1-e^{-\kappa_r^{(j)}t}$.
In regime $\mathrm{A}$ all modes evolve identically, so $P_{\mathrm{A}}(t)=1/m$ for every $t>0$ and
$\mathcal{L}_{\mathrm{A}}(t)=\frac{m}{2}e^{-2\kappa t}$, which decreases strictly and continuously from $m/2$ to $0$.
In regime $\mathrm{B}$,
\begin{equation}
    \mathcal{L}_{\mathrm{B}}(t)
    = \frac{1}{2}\sum_{r=1}^{m} e^{-2\kappa_r^{(\mathrm{B})}t},
\end{equation}
which also decreases strictly and continuously from $m/2$ to $0$, so for each $\ell\in(0,m/2)$ there are unique matched-loss times.
Since $\kappa_1^{(\mathrm{B})}>\kappa_r^{(\mathrm{B})}$ for all $r>1$, we have
$1-e^{-\kappa_1^{(\mathrm{B})}t}>1-e^{-\kappa_r^{(\mathrm{B})}t}$ for every $t>0$,
hence $P_{\mathrm{B}}(t)>1/m$ for every $t>0$.
Therefore the matched-loss spectral states differ.
\end{proof}

Proposition~\ref{prop:appendix_matched_loss} shows that different regimes leave different spectral fingerprints even at equal loss.
The next result shows that these fingerprints are visible \emph{early}: a run's spectral concentration at any fixed early time $t_0$ already determines its time-to-target, within a family of problems that share the same fast-mode rate but vary in how quickly the slow mode is learned.

\begin{proposition}[Early band concentration predicts time-to-target]
\label{prop:appendix_early_predict}
Consider $m$ modes partitioned into a fast band $F=\{1,\dots,k\}$ with shared rate $\bar\kappa>0$ and a slow band $S=\{k+1,\dots,m\}$ with shared rate $\kappa_s\in(0,\bar\kappa)$, with zero initialization and $\beta_r=1$ for all $r$.
At an early time $t_0>0$, define the band concentration on the fast modes by
\begin{equation}
    C^{(k)}(t_0;\kappa_s)
    = \frac{k\,(1-e^{-\bar\kappa t_0})^2}{k\,(1-e^{-\bar\kappa t_0})^2+(m-k)(1-e^{-\kappa_s t_0})^2}.
\end{equation}
For a target loss $\varepsilon\in(0,m/2)$, let $T_\varepsilon(\kappa_s)$ be the unique time satisfying
\begin{equation}
    \frac{1}{2}\Bigl(k\,e^{-2\bar\kappa T_\varepsilon}+(m-k)e^{-2\kappa_s T_\varepsilon}\Bigr)=\varepsilon.
\end{equation}
Then
\begin{equation}
    \frac{\partial C^{(k)}}{\partial\kappa_s}(t_0;\kappa_s)<0,
    \qquad
    \frac{dT_\varepsilon}{d\kappa_s}(\kappa_s)<0.
\end{equation}
Hence larger early band concentration $C^{(k)}$ implies larger time-to-target $T_\varepsilon$.
\end{proposition}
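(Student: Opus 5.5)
Both claims reduce to monotonicity computations, and I would treat them separately. By Theorem~\ref{thm:appendix_one_layer}(b) with zero initialization and $\beta_r=1$, every mode obeys $a_r(t)=1-e^{-\kappa_r t}$. The fast modes therefore contribute $A:=k(1-e^{-\bar\kappa t_0})^2>0$, which does not depend on $\kappa_s$. The slow modes contribute $B(\kappa_s):=(m-k)(1-e^{-\kappa_s t_0})^2$. We then have $C^{(k)}=A/(A+B)$.

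\emph{Concentration.} I would differentiate $B$ with respect to $\kappa_s$:
\[
\partial_{\kappa_s}B=2(m-k)\,t_0\,e^{-\kappa_s t_0}\bigl(1-e^{-\kappa_s t_0}\bigr).
\]
This is strictly positive because $t_0>0$ and $\kappa_s>0$; the assumption $k<m$ ensures the slow band is nonempty. It follows that
\[
\partial_{\kappa_s}C^{(k)}=-\frac{A\,\partial_{\kappa_s}B}{(A+B)^2}<0.
\]

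\emph{Time-to-target.} I would first confirm that $T_\varepsilon$ is well defined. The function
\[
\Phi(T,\kappa_s)=\tfrac12\bigl(k e^{-2\bar\kappa T}+(m-k)e^{-2\kappa_s T}\bigr)
\]
is continuous and strictly decreasing in $T$, going from $m/2$ at $T=0$ to $0$ as $T\to\infty$. So for $\varepsilon\in(0,m/2)$ there is a unique root with $T_\varepsilon>0$.

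Next I would apply the implicit function theorem to $\Phi(T,\kappa_s)=\varepsilon$. The partial derivatives are
\[
\partial_T\Phi=-\bigl(k\bar\kappa e^{-2\bar\kappa T}+(m-k)\kappa_s e^{-2\kappa_s T}\bigr)<0,
\qquad
\partial_{\kappa_s}\Phi=-(m-k)\,T\,e^{-2\kappa_s T}<0.
\]
The second inequality uses $T_\varepsilon>0$. This gives
\[
\frac{dT_\varepsilon}{d\kappa_s}=-\frac{\partial_{\kappa_s}\Phi}{\partial_T\Phi}<0.
\]

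\emph{Conclusion and main obstacle.} Both $C^{(k)}(t_0;\cdot)$ and $T_\varepsilon(\cdot)$ are strictly decreasing in $\kappa_s$. Hence the map $C^{(k)}\mapsto T_\varepsilon$ is strictly increasing within the family, which is the concluding claim. I expect no real obstacle. The only points needing care are the degenerate case $k=m$ and checking $T_\varepsilon>0$; the condition $\kappa_s<\bar\kappa$ is not actually needed for the signs.
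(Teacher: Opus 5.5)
Your proposal is correct and follows the paper's proof essentially line for line: the same split $C^{(k)}=A/(A+B(\kappa_s))$ with $B'(\kappa_s)>0$, and the same implicit differentiation of the loss equation with both partial derivatives negative. Your extra points are accurate refinements that the paper leaves implicit: the existence and positivity of $T_\varepsilon$, the need for a nonempty slow band ($k<m$) to get strict inequalities, and the fact that $\kappa_s<\bar\kappa$ plays no role in the signs.
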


\begin{proof}
Set $A=k(1-e^{-\bar\kappa t_0})^2$ and $B(\kappa_s)=(m-k)(1-e^{-\kappa_s t_0})^2$.
Since
\begin{equation}
    \frac{dB}{d\kappa_s}=2(m-k)\,t_0\,e^{-\kappa_s t_0}(1-e^{-\kappa_s t_0})>0,
\end{equation}
we get $\partial C^{(k)}/\partial\kappa_s = -AB'(\kappa_s)/(A+B(\kappa_s))^2<0$.

For $T_\varepsilon$, define $F(T,\kappa_s)=\frac{1}{2}(k\,e^{-2\bar\kappa T}+(m-k)e^{-2\kappa_s T})-\varepsilon$.
By construction $F(T_\varepsilon(\kappa_s),\kappa_s)=0$.
Implicit differentiation gives
\begin{equation}
    \frac{dT_\varepsilon}{d\kappa_s}
    = -\frac{\partial_{\kappa_s}F}{\partial_T F}
    = -\frac{-(m-k)\,T_\varepsilon\,e^{-2\kappa_s T_\varepsilon}}{-k\bar\kappa\,e^{-2\bar\kappa T_\varepsilon}-(m-k)\kappa_s\,e^{-2\kappa_s T_\varepsilon}}<0.
\end{equation}
Since both $C^{(k)}$ and $T_\varepsilon$ decrease in $\kappa_s$, higher early band concentration implies longer time-to-target.
\end{proof}

Proposition~\ref{prop:appendix_early_predict} is the minimal two-band version of the batch-selection story.
It is intentionally family-local: runs are compared only after fixing the progress of the fast band and varying the recruitment speed of the slower band.
The next stylized specialization sharpens that same idea into the line-shape language used in the main text: learned head, crossover rank, unresolved tail, exact band-recruitment time, and the outward shift of the useful tail-fit window.

\subsubsection{Smooth-spectrum extension for family-local batch selection}
Assume the teacher energy follows a power law
\begin{equation}
    \beta_r^2 = C r^{-p},
    \qquad
    C>0,
    \quad
    p>0,
    \label{eq:appendix_teacher_power}
\end{equation}
and the batch-dependent learning rates take the form
\begin{equation}
    \kappa_r(B)=\eta_B r^{-q_B},
    \qquad
    \eta_B>0,
    \quad
    q_B\ge 0.
    \label{eq:appendix_rate_power}
\end{equation}
Under zero initialization, Theorem~\ref{thm:appendix_one_layer} gives
\begin{equation}
    a_r(B,t)=\beta_r\bigl(1-e^{-\eta_B t r^{-q_B}}\bigr),
\end{equation}
so the activation-side covariance eigenvalues are
\begin{equation}
    \lambda_r^{\mathrm{act}}(B,t)
    =
    |a_r(B,t)|^2
    =
    C r^{-p}\Bigl(1-e^{-\eta_B t r^{-q_B}}\Bigr)^2.
    \label{eq:appendix_smooth_act}
\end{equation}

\begin{theorem}[Three-zone activation spectrum]
\label{thm:appendix_threezone}
Assume \eqref{eq:appendix_teacher_power}--\eqref{eq:appendix_smooth_act} with $q_B>0$, and define the crossover rank
\begin{equation}
    r_*(B,t):=(\eta_B t)^{1/q_B}.
    \label{eq:appendix_crossrank}
\end{equation}
Then the activation spectrum has three asymptotic zones:
\begin{enumerate}[label=(\roman*),leftmargin=1.5em]
    \item if $r \ll r_*(B,t)$, then
    \begin{equation}
        \lambda_r^{\mathrm{act}}(B,t)=C r^{-p}\bigl(1+o(1)\bigr);
    \end{equation}
    this is the learned head;
    \item if $r \gg r_*(B,t)$, then
    \begin{equation}
        \lambda_r^{\mathrm{act}}(B,t)=C\eta_B^2 t^2 r^{-(p+2q_B)}\bigl(1+o(1)\bigr);
    \end{equation}
    this is the unresolved tail, whose exponent is
    \begin{equation}
        \alpha_{\mathrm{tail}}(B)=p+2q_B;
        \label{eq:appendix_tail_alpha}
    \end{equation}
    \item the bend between the two slopes is centered on $r\asymp r_*(B,t)$.
\end{enumerate}
\end{theorem}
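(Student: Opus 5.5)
The plan is to reduce everything to the single scalar function $\phi(x)=(1-e^{-x})^2$, evaluated at $x=x_r:=\eta_B t\,r^{-q_B}$. The key identity is
\begin{equation}
    x_r=\bigl(r_*(B,t)/r\bigr)^{q_B},
\end{equation}
which follows directly from the definition \eqref{eq:appendix_crossrank} of $r_*$. With it, \eqref{eq:appendix_smooth_act} becomes $\lambda_r^{\mathrm{act}}(B,t)=C r^{-p}\,\phi\bigl((r_*/r)^{q_B}\bigr)$. The whole theorem then amounts to reading off the two asymptotic regimes of $\phi$, at $x\to\infty$ and at $x\to 0$, and translating them back into rank. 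Because $q_B>0$, the map $r\mapsto (r_*/r)^{q_B}$ is strictly decreasing. It sends $r/r_*\to 0$ to $x\to\infty$ and $r/r_*\to\infty$ to $x\to 0$. I would make the asymptotic statements precise in terms of the ratio $\rho=r/r_*$: $\rho\to0$ for "$r\ll r_*$" and $\rho\to\infty$ for "$r\gg r_*$", uniformly in $t$ and $B$, since everything depends on $(r,t,B)$ only through $\rho$ and $q_B$.

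For zone (i), $x\to\infty$ gives $\phi(x)=1-2e^{-x}+e^{-2x}=1+o(1)$. Hence $\lambda_r^{\mathrm{act}}=Cr^{-p}(1+o(1))$, with an exponentially small error $O(e^{-\rho^{-q_B}})$. For zone (ii), $x\to0$ gives $1-e^{-x}=x(1+O(x))$, so $\phi(x)=x^2(1+o(1))$. Substituting $x^2=\eta_B^2t^2r^{-2q_B}$ yields $\lambda_r^{\mathrm{act}}=C\eta_B^2t^2r^{-(p+2q_B)}(1+o(1))$, with relative error $O(\rho^{-q_B})$. The exponent $p+2q_B$ is then read off directly. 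To make the claim about $\alpha_{\mathrm{tail}}$ meaningful as a fitted slope, I would also note that the local log-log slope has the closed form
\begin{equation}
    -\frac{d\log\lambda}{d\log r}=p+q_B\,x\,\frac{\phi'(x)}{\phi(x)}=p+\frac{2q_B\,x\,e^{-x}}{1-e^{-x}}.
\end{equation}
This converges to $p+2q_B$ as $x\to0$ and to $p$ as $x\to\infty$. So a least-squares fit over any window with $\rho\gg1$ throughout returns $p+2q_B$ up to $o(1)$, and a fit over a window with $\rho\ll1$ returns $p$.

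Zone (iii) is the only part that needs an interpretation decision, and I expect it to be the main (mild) obstacle, since "the bend is centered on $r\asymp r_*$" has to be given a precise meaning. I would use the local-slope formula above. The function $s(x)=2xe^{-x}/(1-e^{-x})=2x/(e^x-1)$ is strictly decreasing from $2$ to $0$; checking this is routine from the convexity of $e^x$. Therefore the local exponent interpolates monotonically between $p+2q_B$ and $p$. Moreover, the transition, meaning the set of $x$ where $s(x)$ lies between $\delta$ and $2-\delta$ for any fixed $\delta\in(0,1)$, is a compact interval $[x_-(\delta),x_+(\delta)]$ independent of $B$ and $t$. Translated back, this gives $r\in[r_*x_+^{-1/q_B},\,r_*x_-^{-1/q_B}]$, that is, $r\asymp r_*$ with constants depending only on $\delta$ and $q_B$. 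The midpoint slope $p+q_B$ is attained at the unique $x_0$ with $s(x_0)=1$, so $r=r_*x_0^{-1/q_B}\asymp r_*$. This pins down the bend location rigorously. As a corollary remark, it also explains the paper's outward-shifting tail window, since $r_*$ grows like $t^{1/q_B}$.
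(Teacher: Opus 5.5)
Your proposal is correct and is essentially the paper's proof: the paper makes the same substitution $x_{r,B}(t)=\eta_B t r^{-q_B}$ (your $(r_*/r)^{q_B}$) and reads off zones (i) and (ii) from $1-e^{-x}\to 1$ as $x\to\infty$ and $1-e^{-x}=x+O(x^2)$ as $x\to 0$. The only difference is in zone (iii). The paper simply identifies the bend with the regime $x\asymp 1$, whereas your local-slope formula $p+2q_Bx/(e^x-1)$ and its monotonicity give that claim, and its link to fitted window exponents, a precise meaning; note only that your ``uniformly in $B$'' aside needs $q_B$ bounded away from $0$.
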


\begin{proof}
Set
\begin{equation}
    x_{r,B}(t)=\eta_B t r^{-q_B}.
\end{equation}
Then \eqref{eq:appendix_smooth_act} becomes
\begin{equation}
    \lambda_r^{\mathrm{act}}(B,t)=C r^{-p}(1-e^{-x_{r,B}(t)})^2.
\end{equation}
If $r \ll r_*(B,t)$, then $x_{r,B}(t)\to\infty$, so $1-e^{-x_{r,B}(t)}\to 1$, which gives the head asymptotic.
If $r \gg r_*(B,t)$, then $x_{r,B}(t)\to 0$, and the Taylor expansion $1-e^{-x}=x+O(x^2)$ yields
\begin{equation}
    (1-e^{-x_{r,B}(t)})^2
    =
    \eta_B^2 t^2 r^{-2q_B}\bigl(1+o(1)\bigr),
\end{equation}
which proves the tail formula and \eqref{eq:appendix_tail_alpha}.
The crossover statement is exactly the regime $x_{r,B}(t)\asymp 1$.
\end{proof}

\begin{theorem}[Band-recruitment law]
\label{thm:appendix_band_recruit}
Assume \eqref{eq:appendix_teacher_power}--\eqref{eq:appendix_smooth_act}, zero initialization, and monotone rates in rank.
Fix a cutoff $R$ and a tolerance $\delta\in(0,1)$, and define
\begin{equation}
    T_{R,\delta}(B)
    :=
    \inf\Bigl\{
        t\ge 0:
        a_r(B,t)\ge (1-\delta)\beta_r
        \text{ for all } 1\le r\le R
    \Bigr\}.
\end{equation}
Then
\begin{equation}
    T_{R,\delta}(B)
    =
    \frac{\log(1/\delta)}{\eta_B}R^{q_B}
    =
    \frac{\log(1/\delta)}{\eta_B}R^{(\alpha_{\mathrm{tail}}(B)-p)/2}.
    \label{eq:appendix_bandlaw}
\end{equation}
Consequently, the exact family-local objective is to maximize the tail-band rate $\eta_B R^{-q_B}$.
\end{theorem}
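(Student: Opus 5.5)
The plan is to work mode by mode from the closed form already obtained from Theorem~\ref{thm:appendix_one_layer}: under zero initialization, $a_r(B,t)=\beta_r\bigl(1-e^{-\kappa_r(B)t}\bigr)$ with $\kappa_r(B)=\eta_B r^{-q_B}$. Since $\beta_r>0$ by \eqref{eq:appendix_teacher_power} (take the positive root), the condition $a_r(B,t)\ge(1-\delta)\beta_r$ becomes $e^{-\kappa_r(B)t}\le\delta$. This is equivalent to
\begin{equation*}
    t\ge \tau_r:=\frac{\log(1/\delta)}{\kappa_r(B)}=\frac{\log(1/\delta)}{\eta_B}\,r^{q_B}.
\end{equation*}
So the admissible set for a single mode is the closed half-line $[\tau_r,\infty)$, because $t\mapsto 1-e^{-\kappa_r t}$ is continuous and strictly increasing.

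Next I would intersect over $1\le r\le R$. The set of $t$ satisfying all conditions is $[\max_{r\le R}\tau_r,\infty)$, so the infimum is $\max_{r\le R}\tau_r$. The monotone-rates hypothesis (explicitly, $q_B\ge0$ makes $\kappa_r(B)$ nonincreasing in $r$) makes $\tau_r$ nondecreasing in $r$. Hence the maximum is attained at $r=R$, which gives the first equality in \eqref{eq:appendix_bandlaw}. The case $q_B=0$ is consistent, since then all $\tau_r$ coincide.

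For the second equality I would simply substitute $q_B=(\alpha_{\mathrm{tail}}(B)-p)/2$, which is a rearrangement of \eqref{eq:appendix_tail_alpha} from Theorem~\ref{thm:appendix_threezone}. I would note that this is used only as an identity defining $\alpha_{\mathrm{tail}}$, so no asymptotics are needed. For the "consequently" clause, I would observe that $T_{R,\delta}(B)=\log(1/\delta)/\bigl(\eta_B R^{-q_B}\bigr)=\log(1/\delta)/\kappa_R(B)$. Minimizing time-to-cutoff over $B$ at fixed $(R,\delta)$ is therefore exactly maximizing the tail-band rate $\eta_B R^{-q_B}$.

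There is no real obstacle here. The only points needing care are strict monotonicity of each mode's trajectory, so that the per-mode feasible sets are half-lines and the infimum is attained, and the requirement $\beta_r>0$, so that the division is legitimate. Both follow directly from the earlier assumptions.
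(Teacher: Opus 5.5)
Your proposal is correct and follows essentially the paper's argument. The paper likewise uses monotone rates to reduce the all-modes condition to the slowest mode $r=R$, solves $e^{-\eta_B tR^{-q_B}}\le\delta$, and obtains the second form by substituting $\alpha_{\mathrm{tail}}(B)=p+2q_B$; your intersection of closed half-lines is just a more explicit version of its ``necessary and sufficient'' step. You are also right that the argument needs $\beta_r>0$. The paper assumes this silently when it divides by $\beta_R$, since \eqref{eq:appendix_teacher_power} only fixes $\beta_r^2$.
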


\begin{proof}
Because the rates are monotone decreasing in $r$, the slowest mode among $\{1,\dots,R\}$ is mode $R$.
Under zero initialization,
\begin{equation}
    a_R(B,t)=\beta_R\bigl(1-e^{-\eta_B t R^{-q_B}}\bigr).
\end{equation}
The condition $a_R(B,t)\ge (1-\delta)\beta_R$ is equivalent to
\begin{equation}
    e^{-\eta_B t R^{-q_B}}\le \delta,
\end{equation}
or
\begin{equation}
    t\ge \frac{\log(1/\delta)}{\eta_B}R^{q_B}.
\end{equation}
This is necessary and sufficient for all $1\le r\le R$ because every earlier mode has rate at least as large as mode $R$.
The second expression in \eqref{eq:appendix_bandlaw} follows from \eqref{eq:appendix_tail_alpha}.
\end{proof}

\begin{corollary}[Head-matched early measurement gives an exact tail criterion]
\label{cor:appendix_headmatched}
Assume the setting of Theorem~\ref{thm:appendix_band_recruit}.
Suppose batches are compared at an early time $t_0$ after matching the progress of a head-anchor mode $r_h$, in the sense that
\begin{equation}
    a_{r_h}(B,t_0)=\xi\beta_{r_h}
    \qquad
    \text{for the same }
    \xi\in(0,1)
    \text{ and every } B.
\end{equation}
Assume also that $R>r_h$, so the target band extends beyond the matched head.
Then
\begin{equation}
    T_{R,\delta}(B)
    =
    \frac{\log(1/\delta)}{-\log(1-\xi)}t_0\Bigl(\frac{R}{r_h}\Bigr)^{q_B}
    =
    C_{\delta,\xi,t_0}\Bigl(\frac{R}{r_h}\Bigr)^{(\alpha_{\mathrm{tail}}(B)-p)/2},
    \label{eq:appendix_headmatched_law}
\end{equation}
for a constant $C_{\delta,\xi,t_0}$ independent of $B$.
Therefore the family-local token-optimal batch is exactly the batch with the smallest informative tail exponent.
\end{corollary}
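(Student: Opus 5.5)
The approach is to use the head-matching condition to solve for the unknown rate prefactor $\eta_B$ in terms of $t_0$, $\xi$, $r_h$ and $q_B$, and then to substitute this into the band-recruitment law of Theorem~\ref{thm:appendix_band_recruit}.

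\textbf{Step 1: solve for $\eta_B$.} Under zero initialization, $a_{r_h}(B,t_0)=\beta_{r_h}\bigl(1-e^{-\eta_B t_0 r_h^{-q_B}}\bigr)$. Assuming $\beta_{r_h}\neq 0$, which holds by \eqref{eq:appendix_teacher_power}, the matching condition $a_{r_h}(B,t_0)=\xi\beta_{r_h}$ is equivalent to
\begin{equation*}
e^{-\eta_B t_0 r_h^{-q_B}}=1-\xi .
\end{equation*}
Since $\xi\in(0,1)$, the logarithm is well defined and positive, giving
\begin{equation*}
\eta_B=\frac{-\log(1-\xi)}{t_0}\,r_h^{q_B}.
\end{equation*}

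\textbf{Step 2: substitute into the band law.} Plugging this into \eqref{eq:appendix_bandlaw} gives
\begin{equation*}
T_{R,\delta}(B)=\frac{\log(1/\delta)}{\eta_B}R^{q_B}=\frac{\log(1/\delta)}{-\log(1-\xi)}\,t_0\Bigl(\frac{R}{r_h}\Bigr)^{q_B}.
\end{equation*}
This is the first equality in \eqref{eq:appendix_headmatched_law}. The second equality uses $q_B=(\alpha_{\mathrm{tail}}(B)-p)/2$ from \eqref{eq:appendix_tail_alpha}, with
\begin{equation*}
C_{\delta,\xi,t_0}=\frac{\log(1/\delta)\,t_0}{-\log(1-\xi)}.
\end{equation*}
This constant visibly does not depend on $B$, because $p$ is a teacher quantity shared across batches.

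\textbf{Step 3: the optimality conclusion.} Because $R>r_h$, we have $\log(R/r_h)>0$, so the map $\alpha\mapsto (R/r_h)^{(\alpha-p)/2}$ is strictly increasing. Hence $T_{R,\delta}(B)$ is minimized exactly at the batch with the smallest $\alpha_{\mathrm{tail}}(B)$, which is the claimed family-local token optimality.

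\textbf{Main obstacle.} The only delicate point is bookkeeping of hypotheses. Theorem~\ref{thm:appendix_band_recruit} needs monotone decreasing rates, which requires $q_B\ge 0$; the identification with $\alpha_{\mathrm{tail}}$ needs $q_B>0$, as in Theorem~\ref{thm:appendix_threezone}. The case $q_B=0$ can be handled either by excluding it or by noting that the formula still holds, with the tail interpretation degenerate. Finally, "token-optimal" should be read as time-to-target in the gradient-flow clock, identified with tokens within the family.
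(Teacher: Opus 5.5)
Your proposal is correct and follows the paper's proof: you solve the head-anchor condition for $\eta_B=-t_0^{-1}r_h^{q_B}\log(1-\xi)$ and substitute it into the band-recruitment law. Your Step 3 spells out the monotonicity in $\alpha_{\mathrm{tail}}(B)$ via $R>r_h$, which the paper leaves implicit, and your hypothesis bookkeeping ($q_B>0$, and time read as tokens) is a reasonable addition.
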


\begin{proof}
The head-anchor condition gives
\begin{equation}
    1-e^{-\eta_B t_0 r_h^{-q_B}}=\xi,
\end{equation}
so
\begin{equation}
    \eta_B=-t_0^{-1}r_h^{q_B}\log(1-\xi).
\end{equation}
Substituting this identity into \eqref{eq:appendix_bandlaw} yields \eqref{eq:appendix_headmatched_law}.
\end{proof}

\begin{proposition}[Crossover-rank explanation of the tail-window shift]
\label{prop:appendix_window_shift}
In the setting of Theorem~\ref{thm:appendix_threezone}, a fixed fit window $[R_1,R_2]$ recovers the true tail exponent $p+2q_B$ only when it lies in the unresolved-tail regime, that is, when $R_1 \gg r_*(B,t)$.
If scale, probe depth, or early-time dynamics change in such a way that $r_*(B,t)$ increases, then any previously informative fixed window eventually enters the bend region, and the useful tail-fit window must move outward.
\end{proposition}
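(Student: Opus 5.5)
We would prove Proposition~\ref{prop:appendix_window_shift} directly from the exact form of the activation eigenvalues in \eqref{eq:appendix_smooth_act}, by computing the local log-log slope and then averaging it over the fit window. Writing $x=x_{r,B}(t)=\eta_B t r^{-q_B}=(r_*/r)^{q_B}$ as in the proof of Theorem~\ref{thm:appendix_threezone}, treat $r$ as continuous and differentiate
\begin{equation}
    \log\lambda_r^{\mathrm{act}}=\log C-p\log r+2\log\bigl(1-e^{-x}\bigr).
\end{equation}
Since $dx/d\log r=-q_B x$, the local exponent is
\begin{equation}
    \alpha_{\mathrm{loc}}(r):=-\frac{d\log\lambda_r^{\mathrm{act}}}{d\log r}=p+2q_B\,\phi(x),
    \qquad
    \phi(x)=\frac{x e^{-x}}{1-e^{-x}}=\frac{x}{e^{x}-1}.
\end{equation}

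The first key step is to record the elementary properties of $\phi$. It is strictly decreasing on $(0,\infty)$, it satisfies $\phi(x)\to1$ as $x\to0$ and $\phi(x)\to0$ as $x\to\infty$, and $1-\phi(x)=x/2+O(x^2)$. Consequently $\alpha_{\mathrm{loc}}(r)$ is strictly increasing in $r$. It interpolates between the head value $p$ (for $r\ll r_*$) and the tail value $p+2q_B$ (for $r\gg r_*$), and the bend occurs where $x\asymp1$, which is consistent with Theorem~\ref{thm:appendix_threezone}(iii).

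The second step passes from local slopes to the least-squares fit. The least-squares slope over $\log r\in[\log R_1,\log R_2]$ (continuous version) is a weighted average of $\alpha_{\mathrm{loc}}$ with nonnegative weights integrating to one. For the linear regression of $y$ on $s=\log r$, the weight is proportional to $(s-s_1)(s_2-s)$; for a discrete fit we would state the analogous bound with slack. Hence
\begin{equation}
    \alpha([R_1,R_2])\in\bigl[p+2q_B\,\phi(x_{R_1}),\,p+2q_B\,\phi(x_{R_2})\bigr],
\end{equation}
and the error satisfies $p+2q_B-\alpha\le 2q_B(1-\phi(x_{R_1}))\le q_B (r_*/R_1)^{q_B}$. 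This error vanishes exactly when $R_1\gg r_*$, which proves the ``recovers only when'' direction quantitatively. Conversely, if $R_1\lesssim r_*$, the weight placed on $x_{R_1}\gtrsim1$ keeps the fitted slope bounded away from $p+2q_B$ by a definite amount.

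The final step is the monotonicity argument in $r_*$. For fixed $[R_1,R_2]$, each $x_{r}=(r_*/r)^{q_B}$ increases with $r_*$, so each $\phi$ decreases and the fitted exponent decreases toward $p$. Once $r_*$ reaches order $R_1$ (and certainly once it reaches order $R_2$), the window sits in the bend or head region. To recover the tail exponent one therefore needs $R_1'\gg r_*$, meaning the window must move outward.

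The main obstacle is making the averaging step rigorous for a discrete least-squares fit on integer ranks, since the discrete weights are not exactly those of the continuous regression. We would handle this by using the mean-value form of the slope, bounding the discrete slope between the minimum and maximum of the local slopes over adjacent rank pairs, and using the fact that $\alpha_{\mathrm{loc}}$ is monotone.
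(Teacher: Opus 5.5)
Your proof is correct, but it takes a more quantitative route than the paper. The paper's proof is three sentences, read off the asymptotic zones of Theorem~\ref{thm:appendix_threezone}. The slope $p+2q_B$ holds only for $r\gg r_*(B,t)$. Once $R_1/r_*$ stops being large, the lower end of the window sits in the crossover, where the local slope is asserted to be shallower. So the fixed window no longer estimates $\alpha_{\mathrm{tail}}(B)$ and must move outward. The paper never computes a local slope and never touches the least-squares estimator $\alpha(I)$ itself.

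You instead derive the exact local exponent $\alpha_{\mathrm{loc}}(r)=p+2q_B\,\phi\bigl((r_*/r)^{q_B}\bigr)$. This actually proves the \emph{shallower} claim that the paper only asserts. You then write the fitted exponent as a convex combination of local exponents, with weights that do not depend on $r_*$. That gives a non-asymptotic statement:
\begin{itemize}
\item the fit always underestimates;
\item $0<p+2q_B-\alpha\le q_B(r_*/R_1)^{q_B}$, which holds for all $x>0$ and not just asymptotically, since $x/(e^x-1)\ge 1-x/2$;
\item the fitted exponent over a fixed window is strictly decreasing in $r_*$, which cleanly proves the \emph{eventually enters the bend} clause.
\end{itemize}
The paper's version is shorter but leaves \emph{reliably} undefined.

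Two small corrections.

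First, the discrete-fit obstacle disappears with no slack. For sorted log-ranks $s_1<\dots<s_n$, summation by parts gives $\sum_j(s_j-\bar s)y_j=\sum_{k=1}^{n-1}T_k(y_{k+1}-y_k)$ with $T_k=\sum_{j>k}(s_j-\bar s)\ge 0$. So the OLS slope is a convex combination of consecutive secant slopes. Each secant slope is an average of $-\alpha_{\mathrm{loc}}$ over $[s_k,s_{k+1}]$. Hence your interval bound and the monotonicity in $r_*$ hold exactly.

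Second, your converse direction appeals to the wrong endpoint. The continuous regression weights vanish at $s_1$, so the weight near $x_{R_1}$ does not keep the slope away from $p+2q_B$. Indeed, a window with $R_1\approx r_*$ but $R_2/R_1\to\infty$ does recover $p+2q_B$ in the limit. The correct lower bound is the other end of your interval: $p+2q_B-\alpha\ge 2q_B\bigl(1-\phi(x_{R_2})\bigr)\ge 2q_B\,x_{R_2}/(2+x_{R_2})$, using $e^x\ge 1+x+x^2/2$. Combine this with the fact that a fixed window has a fixed ratio $R_2/R_1$. Then $x_{R_2}=(R_1/R_2)^{q_B}x_{R_1}$ stays bounded below whenever $R_1\lesssim r_*$, which gives the converse.
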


\begin{proof}
By Theorem~\ref{thm:appendix_threezone}, the unresolved-tail asymptotic with slope $p+2q_B$ is valid only for ranks satisfying $r \gg r_*(B,t)$.
If $R_1/r_*(B,t)$ ceases to be large, then the lower end of the fit window enters the crossover region, where the local slope is shallower than the true tail slope.
So the same fixed window no longer estimates $\alpha_{\mathrm{tail}}(B)$ reliably, and moving the fit window outward restores the unresolved-tail condition.
\end{proof}

Theorem~\ref{thm:appendix_threezone}, Theorem~\ref{thm:appendix_band_recruit}, and Corollary~\ref{cor:appendix_headmatched} are the formal version of the family-local line-shape claim used in the main text.
They show that the useful early spectrum is not summarized by one universal scalar.
Within a matched family, the informative signal is the combination of a sufficiently progressed head, a crossover rank $r_*(B,t)$ pushed outward, and an unresolved tail that remains comparatively flat.

The smooth-spectrum extension explains \emph{when} a run will reach its target and which early tail shape is informative after normalizing for head progress; the final result explains \emph{why} the activation and gradient views of that trajectory are complementary rather than redundant.
Intuitively, the activation spectrum captures what has already been learned (high-$\kappa_r$ modes consolidate first), while the gradient spectrum captures what is still being actively updated (slow modes dominate residual energy at late times).
The following proposition formalizes this crossover.

\begin{proposition}[Activation--gradient complementarity via mode crossovers]
\label{prop:appendix_crossover}
Consider $m$ modes with zero initialization, equal target coefficients $\beta_r=1$, and strictly ordered rates $\kappa_1>\kappa_2>\cdots>\kappa_m>0$.
Define activation energies $A_r(t)=(1-e^{-\kappa_r t})^2$ and update-side energies $G_r(t)=\kappa_r^2 e^{-2\kappa_r t}$.
Then:
\begin{enumerate}[label=(\roman*),leftmargin=1.5em]
    \item $A_i(t)>A_j(t)$ for every pair $i<j$ and every $t>0$;
    \item for each pair $i<j$ there is a unique crossover time
    \begin{equation}
        t_{ij} = \frac{\log(\kappa_i/\kappa_j)}{\kappa_i-\kappa_j}>0
    \end{equation}
    such that $G_i(t)>G_j(t)$ for $0<t<t_{ij}$, $G_i(t)=G_j(t)$ at $t=t_{ij}$, and $G_i(t)<G_j(t)$ for $t>t_{ij}$.
\end{enumerate}
Setting $T^\ast=\max_{i<j}t_{ij}$, for all $t>T^\ast$ the activation ranking satisfies $A_1(t)>\cdots>A_m(t)$ while the update-side ranking is fully reversed: $G_m(t)>\cdots>G_1(t)$.
\end{proposition}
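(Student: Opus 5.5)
Both parts reduce to elementary one-variable monotonicity, using the closed form from Theorem~\ref{thm:appendix_one_layer}(b). Under zero initialization and $\beta_r=1$ we have $a_r(t)=1-e^{-\kappa_r t}$. Hence $A_r(t)=(1-e^{-\kappa_r t})^2$ is the activation energy and $G_r(t)=\kappa_r^2 e^{-2\kappa_r t}$ is the residual energy $\kappa_r^2|\beta_r-a_r(t)|^2$ appearing in $M_g$. For part (i), I would fix $t>0$ and note that $\kappa\mapsto 1-e^{-\kappa t}$ is strictly increasing and positive on $(0,\infty)$. Squaring preserves the strict order, so $\kappa_i>\kappa_j$ gives $A_i(t)>A_j(t)$. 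This is the same observation already used in the proof of Proposition~\ref{prop:appendix_matched_loss}.

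For part (ii), I would compare the $G_r$ on the log scale, since all of them are positive. I set
\begin{equation*}
  \phi_{ij}(t)=\tfrac12\log\frac{G_i(t)}{G_j(t)}=\log\frac{\kappa_i}{\kappa_j}-(\kappa_i-\kappa_j)t.
\end{equation*}
This is an affine function of $t$ with strictly negative slope $-(\kappa_i-\kappa_j)$ and strictly positive intercept $\log(\kappa_i/\kappa_j)$, since $i<j$ implies $\kappa_i>\kappa_j$. It therefore has exactly one zero, at $t_{ij}=\log(\kappa_i/\kappa_j)/(\kappa_i-\kappa_j)>0$. It is positive before $t_{ij}$ and negative after, and exponentiating translates these signs into the three stated relations between $G_i$ and $G_j$. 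Uniqueness of the crossover is immediate because $\phi_{ij}$ is strictly monotone.

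For the final claim, I would define $T^\ast=\max_{i<j}t_{ij}$, a maximum over finitely many positive numbers, so $T^\ast$ is finite and positive. For $t>T^\ast$, part (ii) gives $G_i(t)<G_j(t)$ for every pair $i<j$, which is exactly the fully reversed chain $G_m(t)>\cdots>G_1(t)$. Part (i) gives $A_1(t)>\cdots>A_m(t)$ for every $t>0$. I do not expect a real obstacle here. The only points needing care are choosing the log-ratio to linearize the comparison, and recording that $T^\ast$ is finite.
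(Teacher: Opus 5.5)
Your proposal is correct and matches the paper's proof essentially step for step: part (i) uses strict monotonicity of $\kappa\mapsto 1-e^{-\kappa t}$, and part (ii) uses the equivalence $G_i(t)<G_j(t)\iff e^{(\kappa_i-\kappa_j)t}>\kappa_i/\kappa_j$, which is your affine log-ratio $\phi_{ij}$ in exponentiated form. The full reversal for $t>T^\ast$ then follows by applying (ii) to every pair $i<j$, exactly as you describe.
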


\begin{proof}
Part (i): $\kappa_i>\kappa_j$ implies $1-e^{-\kappa_i t}>1-e^{-\kappa_j t}$ for all $t>0$; squaring gives $A_i(t)>A_j(t)$.

For part (ii), compare the update-side energies of modes $i$ and $j$:
\begin{equation}
    G_i(t)<G_j(t)
    \iff
    \kappa_i^2 e^{-2\kappa_i t}<\kappa_j^2 e^{-2\kappa_j t}
    \iff
    e^{(\kappa_i-\kappa_j)t}>\frac{\kappa_i}{\kappa_j}.
\end{equation}
Since $\kappa_i>\kappa_j$, this holds iff $t>t_{ij}$, and uniqueness follows from strict monotonicity of the exponential.
After $T^\ast$, every pair $(i,j)$ with $i<j$ satisfies $G_i(t)<G_j(t)$, giving the stated full reversal of the gradient ranking.
\end{proof}

In the smooth-spectrum specialization above, this same crossover logic is what separates the learned head from the unresolved tail and makes the tail-window proposition natural.
The point of Proposition~\ref{prop:appendix_crossover} is the complementary one: even when activation and gradient are generated by the same modewise dynamics, they need not rank modes the same way at the same time.

\subsubsection{Two-layer Fourier factor model}
Let $\{\phi_r\}_{r=1}^m$ be any orthonormal real Fourier-derived basis.
Consider the two-layer factor model
\begin{equation}
    f_t(a)=\sum_{r=1}^m u_r(t)v_r(t)\phi_r(a),
\end{equation}
with target
\begin{equation}
    f^\star(a)=\sum_{r=1}^m \beta_r \phi_r(a),
    \qquad
    \beta_r\ge 0,
\end{equation}
trained by gradient flow on the squared loss
\begin{equation}
    \mathcal J(u,v)=\frac12\sum_{r=1}^m (u_r v_r-\beta_r)^2.
\end{equation}

\begin{theorem}[Balanced two-layer Fourier factor dynamics]
\label{thm:appendix_two_layer}
Assume gradient flow on $\mathcal J$ with balanced positive initialization
\begin{equation}
    u_r(0)=v_r(0)>0
    \qquad \text{for all } r.
\end{equation}
Let
\begin{equation}
    m_r(t)=u_r(t)v_r(t).
\end{equation}
Then:
\begin{enumerate}[label=(\alph*),leftmargin=1.5em]
    \item for every $r$ and every $t\ge 0$,
    \begin{equation}
        u_r(t)=v_r(t),
        \qquad
        m_r(t)=u_r(t)^2=v_r(t)^2;
    \end{equation}
    \item each mode follows the autonomous logistic-type ODE
    \begin{equation}
        \dot m_r(t)=2m_r(t)(\beta_r-m_r(t));
    \end{equation}
    \item if $\beta_r>0$, then
    \begin{equation}
        m_r(t)=\frac{\beta_r}{1+\left(\frac{\beta_r}{m_r(0)}-1\right)e^{-2\beta_r t}},
    \end{equation}
    while if $\beta_r=0$, then
    \begin{equation}
        m_r(t)=\frac{m_r(0)}{1+2m_r(0)t}.
    \end{equation}
\end{enumerate}
\end{theorem}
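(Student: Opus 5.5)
The plan is to write down the gradient-flow equations explicitly and exploit their decoupling across modes. Since $\mathcal J(u,v)=\frac12\sum_r(u_rv_r-\beta_r)^2$, the partial derivatives are $\partial_{u_r}\mathcal J=v_r(u_rv_r-\beta_r)$ and $\partial_{v_r}\mathcal J=u_r(u_rv_r-\beta_r)$. Gradient flow therefore reads
\begin{equation}
    \dot u_r = v_r(\beta_r-u_rv_r),\qquad \dot v_r = u_r(\beta_r-u_rv_r).
\end{equation}
Each pair $(u_r,v_r)$ evolves independently of the other modes, so I will fix $r$ and treat a planar autonomous system.

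For part (a), set $w_r=u_r-v_r$. Subtracting the two equations gives $\dot w_r=-(\beta_r-u_rv_r)\,w_r$, a linear ODE in $w_r$ with a continuous coefficient along the solution. Since $w_r(0)=0$, uniqueness (Picard–Lindelöf, as the vector field is polynomial hence locally Lipschitz) forces $w_r\equiv0$. Equivalently, the diagonal $\{u_r=v_r\}$ is invariant: on it the system reduces to $\dot u_r=u_r(\beta_r-u_r^2)$, whose solution with the same initial data also solves the full system, and uniqueness identifies the two.

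For part (b), differentiate $m_r=u_rv_r$:
\begin{equation}
    \dot m_r=\dot u_rv_r+u_r\dot v_r=(v_r^2+u_r^2)(\beta_r-m_r).
\end{equation}
By (a), $u_r^2+v_r^2=2m_r$, which gives $\dot m_r=2m_r(\beta_r-m_r)$. I also need $m_r(t)>0$ for all $t$, so that $m_r=u_r^2$ with $u_r>0$ makes sense. This follows because $m_r\equiv0$ is an equilibrium of the reduced ODE, so by uniqueness a solution starting at $m_r(0)>0$ never reaches $0$.

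For part (c), solve the scalar ODE in each case.
\begin{itemize}
    \item When $\beta_r>0$, it is a logistic equation. Substituting $y=1/m_r$ gives the linear equation $\dot y=2-2\beta_r y$, so $y(t)=\beta_r^{-1}+(m_r(0)^{-1}-\beta_r^{-1})e^{-2\beta_r t}$. Inverting yields the stated formula.
    \item When $\beta_r=0$, the equation is $\dot m_r=-2m_r^2$, so $\dot y=2$ and $y=1/m_r(0)+2t$. Inverting gives $m_r(t)=m_r(0)/(1+2m_r(0)t)$.
\end{itemize}
Finally, I need global existence for $t\ge0$. In both cases the explicit solution is positive and bounded for all $t\ge0$: in the logistic case the denominator $1+(\beta_r/m_r(0)-1)e^{-2\beta_rt}$ stays positive, lying between $1$ and $\beta_r/m_r(0)$. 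Hence no blow-up occurs, and $u_r=v_r=\sqrt{m_r}$ exist for all time.

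The only real subtlety is the invariance/uniqueness argument in (a) together with positivity; everything else is routine integration.
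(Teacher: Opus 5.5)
Your proposal is correct and follows the paper's proof essentially step for step. It uses the same gradient-flow equations, the same linear ODE for $u_r-v_r$ with uniqueness forcing $u_r\equiv v_r$, the same equilibrium-at-zero argument for positivity, and the same computation $\dot m_r=(u_r^2+v_r^2)(\beta_r-m_r)=2m_r(\beta_r-m_r)$. The differences are cosmetic: you integrate the logistic equation with the substitution $y=1/m_r$ instead of partial fractions, and you explicitly check global existence from the bounded closed-form solution, a point the paper leaves implicit.
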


\begin{proof}
Gradient flow on $\mathcal J$ gives, for each $r$,
\begin{equation}
    \dot u_r=-(u_rv_r-\beta_r)v_r,
    \qquad
    \dot v_r=-(u_rv_r-\beta_r)u_r.
\end{equation}
Then
\begin{equation}
    \frac{d}{dt}(u_r^2-v_r^2)=2u_r\dot u_r-2v_r\dot v_r=0.
\end{equation}
Since $u_r(0)=v_r(0)$, we have $u_r(t)^2=v_r(t)^2$ for all $t$.
It is cleaner to track the difference directly:
\begin{equation}
    \frac{d}{dt}(u_r-v_r)=\dot u_r-\dot v_r=-(u_rv_r-\beta_r)(v_r-u_r)=(u_rv_r-\beta_r)(u_r-v_r).
\end{equation}
Because $u_r(0)-v_r(0)=0$, uniqueness of solutions implies $u_r(t)-v_r(t)\equiv 0$, so $u_r(t)=v_r(t)$ for all $t$.
Write this common value as $s_r(t)$.
Then
\begin{equation}
    \dot s_r=s_r(\beta_r-s_r^2).
\end{equation}
Since the vector field is locally Lipschitz and $s_r=0$ is an equilibrium, a solution starting from $s_r(0)>0$ cannot cross zero without violating uniqueness.
Hence $s_r(t)>0$ for all $t$, so the balanced positive branch is preserved.
This proves part (a).

For part (b), differentiate $m_r=u_rv_r$:
\begin{align}
    \dot m_r
    &= \dot u_r v_r + u_r \dot v_r
    \\
    &= -(u_rv_r-\beta_r)(v_r^2+u_r^2)
    \\
    &= -2m_r(m_r-\beta_r)
    = 2m_r(\beta_r-m_r).
\end{align}

For part (c), if $\beta_r>0$, separate variables:
\begin{equation}
    \frac{dm_r}{m_r(\beta_r-m_r)} = 2dt.
\end{equation}
Using the partial-fraction identity
\begin{equation}
    \frac{1}{m_r(\beta_r-m_r)}
    =
    \frac{1}{\beta_r}\left(\frac{1}{m_r}+\frac{1}{\beta_r-m_r}\right),
\end{equation}
we obtain
\begin{equation}
    \frac{1}{\beta_r}\log\!\left|\frac{m_r(t)}{\beta_r-m_r(t)}\right|=2t+C_r.
\end{equation}
Evaluating at $t=0$ gives
\begin{equation}
    C_r=\frac{1}{\beta_r}\log\!\left|\frac{m_r(0)}{\beta_r-m_r(0)}\right|,
\end{equation}
Exponentiating absorbs the sign into the integration constant, so the final closed-form solution is unchanged:
\begin{equation}
    \frac{m_r(t)}{\beta_r-m_r(t)}
    =
    \frac{m_r(0)}{\beta_r-m_r(0)}e^{2\beta_r t}.
\end{equation}
Solving for $m_r(t)$ yields
\begin{equation}
    m_r(t)=\frac{\beta_r}{1+\left(\frac{\beta_r}{m_r(0)}-1\right)e^{-2\beta_r t}}.
\end{equation}
If $\beta_r=0$, the equation becomes $\dot m_r=-2m_r^2$, whose solution is
\begin{equation}
    m_r(t)=\frac{m_r(0)}{1+2m_r(0)t}.
\end{equation}
\end{proof}

\begin{corollary}[Monotone task-band concentration]
\label{cor:appendix_hpeak}
Let $S\subseteq\{1,\dots,m\}$ be a task-relevant Fourier band.
Assume
\begin{equation}
    \beta_r>0 \text{ for } r\in S,
    \qquad
    \beta_r=0 \text{ for } r\notin S,
\end{equation}
and $0<m_r(0)<\beta_r$ for every $r\in S$.
Define the band-concentration statistic
\begin{equation}
    H_S(t)=\frac{\sum_{r\in S} m_r(t)}{\sum_{r=1}^m m_r(t)}.
\end{equation}
Then $H_S(t)$ is strictly increasing for every $t$ as long as both on-band and off-band mass are nonzero.
\end{corollary}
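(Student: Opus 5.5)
We plan to work directly from the modewise logistic dynamics of Theorem~\ref{thm:appendix_two_layer}(b), namely $\dot m_r = 2m_r(\beta_r - m_r)$. Two facts about each mode are needed. First, every $m_r$ stays strictly positive for all time. By Theorem~\ref{thm:appendix_two_layer}(a) we have $m_r=u_r^2$ with $u_r(t)>0$, and the closed forms in part (c) also make this explicit. Second, the sign of $\dot m_r$ is fixed by the band. For $r\in S$ with $0<m_r(0)<\beta_r$, the logistic solution stays in $(0,\beta_r)$, so $\dot m_r>0$. For $r\notin S$ we have $\beta_r=0$, so $\dot m_r=-2m_r^2<0$ whenever $m_r>0$.

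Next, write $I(t)=\sum_{r\in S} m_r(t)$ for the in-band mass and $O(t)=\sum_{r\notin S} m_r(t)$ for the off-band mass, so that $H_S = I/(I+O)$. Differentiating the quotient gives
\begin{equation}
\dot H_S = \frac{\dot I\,(I+O) - I(\dot I+\dot O)}{(I+O)^2} = \frac{\dot I\,O - I\,\dot O}{(I+O)^2}.
\end{equation}
The hypothesis that both on-band and off-band mass are nonzero means $S\neq\emptyset$ and $S^c\neq\emptyset$, which gives $I>0$ and $O>0$. Combined with the sign facts, $\dot I>0$ and $\dot O<0$. Both terms in the numerator are then strictly positive, so $\dot H_S>0$ and $H_S$ is strictly increasing.

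The only step needing care is the invariance claim $m_r(t)\in(0,\beta_r)$ for in-band modes. I would take it from the closed form $m_r(t)=\beta_r/(1+(\beta_r/m_r(0)-1)e^{-2\beta_r t})$: since $\beta_r/m_r(0)-1>0$, the denominator lies strictly between $1$ and $\beta_r/m_r(0)$. An alternative is a uniqueness argument, using that $0$ and $\beta_r$ are equilibria that cannot be crossed. With that established, the rest is the quotient-rule computation above. I would also note that the condition "mass nonzero" is automatic at all $t$ once it holds at $t=0$, because positivity is preserved.
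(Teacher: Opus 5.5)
Your proposal is correct and is essentially the paper's proof: the same split into in-band and off-band mass, the same quotient-rule identity $\dot H_S=(\dot I\,O-I\,\dot O)/(I+O)^2$, and the same sign facts $\dot I>0$, $\dot O<0$ from the logistic ODE of Theorem~\ref{thm:appendix_two_layer}(b). Your closed-form check that in-band modes stay in $(0,\beta_r)$ makes explicit what the paper only asserts as preservation by the logistic flow; the denominator actually lies in $(1,\beta_r/m_r(0)]$, with equality at $t=0$, which does not affect the conclusion.
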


\begin{proof}
Set
\begin{equation}
    A(t)=\sum_{r\in S} m_r(t),
    \qquad
    B(t)=\sum_{r\notin S} m_r(t),
    \qquad
    H_S(t)=\frac{A(t)}{A(t)+B(t)}.
\end{equation}
By Theorem~\ref{thm:appendix_two_layer}, for $r\in S$ we have
\begin{equation}
    \dot m_r = 2m_r(\beta_r-m_r)>0,
\end{equation}
because $0<m_r(t)<\beta_r$ is preserved by the logistic flow.
Thus $A'(t)>0$ whenever $A(t)>0$.

For $r\notin S$, $\beta_r=0$, so Theorem~\ref{thm:appendix_two_layer} gives
\begin{equation}
    \dot m_r = -2m_r^2<0
\end{equation}
whenever $m_r(t)>0$.
Thus $B'(t)<0$ whenever $B(t)>0$.

Differentiate $H_S$:
\begin{equation}
    H_S'(t)
    = \frac{A'(t)(A(t)+B(t)) - A(t)(A'(t)+B'(t))}{(A(t)+B(t))^2}
    = \frac{A'(t)B(t)-A(t)B'(t)}{(A(t)+B(t))^2}.
\end{equation}
If $A(t),B(t)>0$, then $A'(t)>0$ and $-B'(t)>0$, so both terms in the numerator are positive.
Hence $H_S'(t)>0$.
\end{proof}

\subsubsection{Variant-aligned mechanism theorems}
\paragraph{RoPE as symmetry restoration.}
Let $x=(x_t)_{t\in\Z}$ be a bi-infinite sequence with $x_t\in\R^m$, and for a shift $\tau\in\Z$ define
\begin{equation}
    (S_\tau x)_t = x_{t-\tau}.
\end{equation}
Let $W_q,W_k\in\R^{d\times m}$ be fixed matrices and define
\begin{equation}
    q_t(x)=W_qx_t,
    \qquad
    k_t(x)=W_kx_t.
\end{equation}
Assume a family of orthogonal matrices $(R_t)_{t\in\Z}\subset O(d)$ satisfying
\begin{equation}
    R_{s+t}=R_sR_t,
    \qquad
    R_0=I.
\end{equation}
Define the RoPE attention score
\begin{equation}
    a_{ij}(x)=\langle R_i q_i(x), R_j k_j(x)\rangle.
\end{equation}

\begin{theorem}[RoPE score equivariance under sequence shifts]
\label{thm:appendix_rope_equiv}
For every sequence $x$, all indices $i,j\in\Z$, and every shift $\tau\in\Z$,
\begin{equation}
    a_{i+\tau,\,j+\tau}(S_\tau x)=a_{ij}(x).
\end{equation}
Equivalently, the positional part of the score depends on sequence position only through the relative offset $j-i$.
\end{theorem}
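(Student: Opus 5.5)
The proof is a direct computation in two short steps. The first step checks how the shift acts on the projected queries and keys. The second uses orthogonality and the group law of $(R_t)$ to cancel the absolute position.

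\textbf{Step 1: shifted queries and keys.} By definition $(S_\tau x)_{i+\tau}=x_i$. Since the projections act pointwise in position,
\[
q_{i+\tau}(S_\tau x)=W_q x_i=q_i(x), \qquad k_{j+\tau}(S_\tau x)=k_j(x).
\]
Hence
\[
a_{i+\tau,j+\tau}(S_\tau x)=\langle R_{i+\tau}q_i(x),\,R_{j+\tau}k_j(x)\rangle .
\]

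\textbf{Step 2: cancel the shift.} Apply the homomorphism property $R_{i+\tau}=R_\tau R_i$ and $R_{j+\tau}=R_\tau R_j$, which follows from $R_{s+t}=R_sR_t$. Because $R_\tau\in O(d)$ preserves inner products,
\[
\langle R_\tau R_i q,\,R_\tau R_j k\rangle=\langle R_i q,\,R_j k\rangle .
\]
This gives $a_{i+\tau,j+\tau}(S_\tau x)=a_{ij}(x)$.

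\textbf{Relative-offset form.} For the equivalent statement, first note that $R_{-t}R_t=R_0=I$ gives $R_{-t}=R_t^{-1}=R_t^\top$. Therefore
\[
a_{ij}(x)=\langle q_i(x),\,R_i^\top R_j k_j(x)\rangle=\langle q_i(x),\,R_{j-i}k_j(x)\rangle .
\]
So the positional factor enters only through $R_{j-i}$, a function of the offset $j-i$.

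There is no real obstacle here. The only point to watch is bookkeeping: the shift convention $(S_\tau x)_t=x_{t-\tau}$ must be paired with evaluating the scores at indices $i+\tau$ and $j+\tau$, not $i-\tau$ and $j-\tau$. Orthogonality alone gives invariance only when both rotations are multiplied on the same side by the same matrix. The group law $R_{s+t}=R_sR_t$ is exactly what guarantees this.
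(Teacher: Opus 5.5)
Your proof is correct and follows essentially the paper's route. Both arguments first check $q_{i+\tau}(S_\tau x)=q_i(x)$ and $k_{j+\tau}(S_\tau x)=k_j(x)$, then use orthogonality together with the group law $R_{s+t}=R_sR_t$ to remove the absolute position. The only cosmetic difference is in how the positional factor is removed: the paper collapses $R_{i+\tau}^\top R_{j+\tau}$ directly to $R_{j-i}$, which gives both the invariance and the relative-offset form in one computation, while you first cancel the common factor $R_\tau$ and then derive the $R_{j-i}$ form separately.
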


\begin{proof}
By definition of the shifted sequence,
\begin{equation}
    q_{i+\tau}(S_\tau x)=W_q(S_\tau x)_{i+\tau}=W_qx_i=q_i(x),
\end{equation}
and similarly $k_{j+\tau}(S_\tau x)=k_j(x)$.
Therefore
\begin{align}
    a_{i+\tau,\,j+\tau}(S_\tau x)
    &= \left\langle R_{i+\tau}q_i(x), R_{j+\tau}k_j(x)\right\rangle
    \\
    &= q_i(x)^\top R_{i+\tau}^\top R_{j+\tau}k_j(x).
\end{align}
Because the $R_t$ are orthogonal and form a representation of $\Z$,
\begin{equation}
    R_{i+\tau}^\top R_{j+\tau}
    = R_{-(i+\tau)}R_{j+\tau}
    = R_{j-i}.
\end{equation}
Hence
\begin{equation}
    a_{i+\tau,\,j+\tau}(S_\tau x)
    = q_i(x)^\top R_{j-i}k_j(x)
    = \langle R_i q_i(x), R_j k_j(x)\rangle
    = a_{ij}(x).
\end{equation}
\end{proof}

\begin{proposition}[Absolute positional tables generically break shift equivariance]
\label{prop:appendix_absolute_breaks}
Let $(p_t)_{t\in\Z}\subset\R^m$ be a positional table and define
\begin{equation}
    \tilde q_t(x)=W_q(x_t+p_t),
    \qquad
    \tilde k_t(x)=W_k(x_t+p_t),
\end{equation}
with score
\begin{equation}
    b_{ij}(x)=\langle \tilde q_i(x), \tilde k_j(x)\rangle.
\end{equation}
Assume $W_q,W_k\in\R^{d\times m}$ have full row rank $d$.
Assume that for every sequence $x$ and every $i,j,\tau\in\Z$,
\begin{equation}
    b_{i+\tau,\,j+\tau}(S_\tau x)=b_{ij}(x).
\end{equation}
Then
\begin{equation}
    W_q p_t \equiv \text{constant in } t,
    \qquad
    W_k p_t \equiv \text{constant in } t.
\end{equation}
In particular, any nontrivial absolute positional signal that survives the $W_q$ or $W_k$ projection breaks translation equivariance.
\end{proposition}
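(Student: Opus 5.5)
The plan is to expand the assumed identity $b_{i+\tau,j+\tau}(S_\tau x)=b_{ij}(x)$ as a polynomial in the free sequence $x$ and compare terms of each degree. Since $(S_\tau x)_{i+\tau}=x_i$, substituting into the definition gives
\begin{equation}
    b_{i+\tau,j+\tau}(S_\tau x)=\langle W_q(x_i+p_{i+\tau}),\,W_k(x_j+p_{j+\tau})\rangle .
\end{equation}
The unshifted score is $b_{ij}(x)=\langle W_q(x_i+p_i),\,W_k(x_j+p_j)\rangle$. Subtracting, the quadratic term $\langle W_qx_i,W_kx_j\rangle$ cancels. This leaves an affine function of $(x_i,x_j)$ that must vanish identically:
\begin{equation}
    \langle W_qx_i,\,W_k(p_{j+\tau}-p_j)\rangle+\langle W_q(p_{i+\tau}-p_i),\,W_kx_j\rangle+\bigl[\langle W_qp_{i+\tau},W_kp_{j+\tau}\rangle-\langle W_qp_i,W_kp_j\rangle\bigr]=0 .
\end{equation}

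First I would take $i\neq j$. Then $x_i$ and $x_j$ are independent free vectors in $\R^m$, so each linear part must vanish separately. Setting $x_j=0$ and varying $x_i$ gives $W_q^\top W_k(p_{j+\tau}-p_j)=0$. Symmetrically, $W_k^\top W_q(p_{i+\tau}-p_i)=0$.

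Next I would convert these to the claimed statements using the rank hypothesis. Full row rank $d$ means $W_q$ and $W_k$ are surjective onto $\R^d$, so $W_q^\top$ and $W_k^\top$ are injective. Hence $W_q^\top y=0$ forces $y=0$. Applying this with $y=W_k(p_{j+\tau}-p_j)$ gives $W_kp_{j+\tau}=W_kp_j$ for all $j,\tau$, i.e.\ $W_kp_t$ is constant in $t$. The same argument applied to the other term shows $W_qp_t$ is constant in $t$. For every index $t$ one can pick a partner index $\neq t$, so no position escapes the argument.

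I expect no real obstacle. The only points needing care are that the coefficient extraction requires $i\neq j$ so the two linear terms involve distinct free variables, and that the rank assumption is what lets us pass from $W_q^\top W_k(\cdot)=0$ to $W_k(\cdot)=0$; both are handled above. The constant term then vanishes automatically, and the "in particular" clause is just the contrapositive.
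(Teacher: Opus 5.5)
Your proposal is correct and follows the paper's proof essentially step for step. You expand and subtract to get the affine identity, set one variable to zero while varying the other, and use full row rank (your injectivity of $W_q^\top$, the paper's spanning of $\{W_q x\}$) to conclude $W_k(p_{j+\tau}-p_j)=0$, and symmetrically for $W_q$. Your explicit restriction to $i\neq j$ is a step the paper leaves implicit when it sets $x_j=0$ while varying $x_i$, so it adds care rather than changing the argument.
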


\begin{proof}
Fix $i,j,\tau$.
Using $(S_\tau x)_{i+\tau}=x_i$ and $(S_\tau x)_{j+\tau}=x_j$, the assumed equivariance becomes
\begin{equation}
    \big\langle W_q(x_i+p_{i+\tau}), W_k(x_j+p_{j+\tau})\big\rangle
    =
    \big\langle W_q(x_i+p_i), W_k(x_j+p_j)\big\rangle
\end{equation}
for all $x_i,x_j\in\R^m$.
Expanding both sides and subtracting gives
\begin{align}
    0
    &= \langle W_qx_i, W_k(p_{j+\tau}-p_j)\rangle
    + \langle W_q(p_{i+\tau}-p_i), W_kx_j\rangle
    \\
    &\qquad
    + \langle W_qp_{i+\tau},W_kp_{j+\tau}\rangle
    - \langle W_qp_i,W_kp_j\rangle
\end{align}
for all $x_i,x_j$.
Now set $x_j=0$ and vary $x_i$ arbitrarily.
The first term must vanish for all $x_i$, so
\begin{equation}
    \langle W_qx_i,\,W_k(p_{j+\tau}-p_j)\rangle = 0
    \qquad
    \text{for all } x_i\in\R^m.
\end{equation}
Because $W_q$ has full row rank, its row space is all of $\R^d$, so the vectors $W_qx_i$ span $\R^d$.
Hence the only vector orthogonal to every $W_qx_i$ is the zero vector, and therefore
\begin{equation}
    W_k(p_{j+\tau}-p_j)=0.
\end{equation}
Because $j,\tau$ were arbitrary, $W_kp_t$ is independent of $t$.
By symmetry, setting $x_i=0$ and varying $x_j$ yields
\begin{equation}
    \langle W_q(p_{i+\tau}-p_i),\,W_kx_j\rangle = 0
    \qquad
    \text{for all } x_j\in\R^m.
\end{equation}
Because $W_k$ has full row rank, the vectors $W_kx_j$ span $\R^d$, so
\begin{equation}
    W_q(p_{i+\tau}-p_i)=0
\end{equation}
for all $i,\tau$, so $W_qp_t$ is also independent of $t$.
\end{proof}

\paragraph{Untied readout as reduced factorization constraint.}
Let $E\in\R^{d\times V}$ be the token embedding matrix with full row rank $d<V$.
Consider the two classes of effective token-to-logit maps
\begin{equation}
    \mathcal F_{\mathrm{tied}}=\{E^\top A E : A\in\R^{d\times d}\},
    \qquad
    \mathcal F_{\mathrm{untied}}=\{W^\top A E : A\in\R^{d\times d},\,W\in\R^{d\times V}\}.
\end{equation}

\begin{theorem}[Strict expressivity inclusion for untied heads]
\label{thm:appendix_untied}
Assume $E\in\R^{d\times V}$ has full row rank $d<V$.
Then:
\begin{enumerate}[label=(\arabic*),leftmargin=1.5em]
    \item
    \begin{equation}
        \mathcal F_{\mathrm{untied}} = \{BE : B\in\R^{V\times d}\};
    \end{equation}
    \item
    \begin{equation}
        \mathcal F_{\mathrm{tied}} \subsetneq \mathcal F_{\mathrm{untied}};
    \end{equation}
    \item every tied map $T\in\mathcal F_{\mathrm{tied}}$ satisfies
    \begin{equation}
        \operatorname{im}(T)\subseteq \operatorname{col}(E^\top),
        \qquad
        \operatorname{ker}(E)\subseteq \operatorname{ker}(T);
    \end{equation}
    \item untied maps preserve the input-side bottleneck $\operatorname{ker}(E)\subseteq \operatorname{ker}(T)$ but can realize arbitrary output subspaces of dimension at most $d$.
\end{enumerate}
\end{theorem}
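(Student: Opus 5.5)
Everything reduces to elementary linear algebra on the factorization $W^\top A E$. The one real fact needed is that when $E$ has full row rank $d$, the map $A\mapsto AE$ from $\R^{d\times d}$ and $B\mapsto BE$ from $\R^{V\times d}$ are injective, and $E$ admits a right inverse $E^+ = E^\top(EE^\top)^{-1}$ with $EE^+=I_d$. The four parts then follow in order.

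For part (1), note that $W^\top A\in\R^{V\times d}$, which gives $\mathcal F_{\mathrm{untied}}\subseteq\{BE\}$. For the reverse inclusion, given any $B\in\R^{V\times d}$, take $A=I_d$ and $W=B^\top$; then $W^\top A E = BE$. This settles (1) directly.

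For part (3), write $T=E^\top A E$. Then $\operatorname{im}(T)\subseteq\operatorname{im}(E^\top)=\operatorname{col}(E^\top)$, and $Ex=0$ implies $Tx=0$, so $\ker(E)\subseteq\ker(T)$.

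For part (2), the inclusion $\mathcal F_{\mathrm{tied}}\subseteq\mathcal F_{\mathrm{untied}}$ holds by taking $W=E$. Strictness is the step needing a construction. Since $d<V$, the column space $\operatorname{col}(E^\top)$ is a proper $d$-dimensional subspace of $\R^V$, so we can pick a unit vector $w\notin\operatorname{col}(E^\top)$ and any nonzero row $e_1^\top$ direction. Set $B = w\,e_1^\top\in\R^{V\times d}$ and $T_0 = BE = w\,(E^\top e_1)^\top$. Because $E$ has full row rank, $E^\top e_1\neq 0$, so $T_0\neq 0$ and $\operatorname{im}(T_0)=\operatorname{span}(w)\not\subseteq\operatorname{col}(E^\top)$. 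By part (3), $T_0\notin\mathcal F_{\mathrm{tied}}$, yet $T_0\in\mathcal F_{\mathrm{untied}}$ by part (1).

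For part (4), the kernel inclusion for untied maps is the same argument as in (3), applied to $T=BE$. For the output subspaces, given any subspace $U\subseteq\R^V$ with $\dim U=k\le d$, choose $B$ whose first $k$ columns are a basis of $U$ and whose remaining columns are zero. Then $\operatorname{im}(BE)=\operatorname{im}(B)=U$, because $E$ is surjective onto $\R^d$ and hence $\operatorname{im}(BE)=B(\R^d)$. Conversely, $\operatorname{rank}(BE)\le d$, so no untied map can realize an output subspace of dimension greater than $d$.

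The only delicate point in the whole argument is the surjectivity of $E$ (equivalently, its full row rank). It is used twice: to guarantee $E^\top e_1\neq0$ in (2) and to identify $\operatorname{im}(BE)$ with $\operatorname{im}(B)$ in (4). I would state this fact once at the start and cite it in both places.
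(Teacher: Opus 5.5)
Your proof is correct and takes essentially the same route as the paper: the factorization $B=W^\top A$ for (1), the rank-one witness $B=u\alpha^\top$ with $u\notin\operatorname{col}(E^\top)$ for (2) (you take $\alpha=e_1$), and the direct computation for (3). The only cosmetic difference is in (4): you identify $\operatorname{im}(BE)=B(\R^d)$ using surjectivity of $E$, while the paper uses the right-inverse identity $B=(BE)R$, and these express the same fact.
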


\begin{proof}
For (1), if $T=W^\top A E\in\mathcal F_{\mathrm{untied}}$, set $B=W^\top A\in\R^{V\times d}$, so $T=BE$.
Conversely, for any $B\in\R^{V\times d}$, choosing $A=I_d$ and $W=B^\top$ gives $T=W^\top A E=BE$.
Hence
\begin{equation}
    \mathcal F_{\mathrm{untied}}=\{BE:B\in\R^{V\times d}\}.
\end{equation}

For (2), every tied map is untied by taking $B=E^\top A$, so
\begin{equation}
    \mathcal F_{\mathrm{tied}}\subseteq\mathcal F_{\mathrm{untied}}.
\end{equation}
To show strictness, choose a nonzero vector $u\in\R^V$ with
\begin{equation}
    u\notin \operatorname{col}(E^\top).
\end{equation}
Such a vector exists because $\dim\operatorname{col}(E^\top)=d<V$.
Choose any nonzero $\alpha\in\R^d$ and set
\begin{equation}
    B=u\alpha^\top.
\end{equation}
Then
\begin{equation}
    T:=BE=u(\alpha^\top E)\in \mathcal F_{\mathrm{untied}}.
\end{equation}
Because $E$ has full row rank and $\alpha\neq 0$, we have $\alpha^\top E\neq 0$, so $T\neq 0$.
Moreover,
\begin{equation}
    \operatorname{im}(T)=\operatorname{span}\{u\}.
\end{equation}
Since $u\notin\operatorname{col}(E^\top)$, we have
\begin{equation}
    \operatorname{im}(T)\not\subseteq \operatorname{col}(E^\top),
\end{equation}
so $T\notin \mathcal F_{\mathrm{tied}}$.

For (3), if $T=E^\top A E$, then for every $x\in\R^V$,
\begin{equation}
    Tx = E^\top(A(Ex)),
\end{equation}
so $Tx\in\operatorname{col}(E^\top)$.
Thus $\operatorname{im}(T)\subseteq \operatorname{col}(E^\top)$.
Also if $x\in\operatorname{ker}(E)$, then $Tx=E^\top A E x=0$, so $\operatorname{ker}(E)\subseteq\operatorname{ker}(T)$.

For (4), every untied map has the form $T=BE$, so again $x\in\operatorname{ker}(E)$ implies $Tx=0$.
On the output side, however, $B$ is arbitrary.
Because $E$ has full row rank, it admits a right inverse $R\in\R^{V\times d}$ with $ER=I_d$.
Hence for every $B$,
\begin{equation}
    B=(BE)R,
\end{equation}
so $\operatorname{col}(B)\subseteq\operatorname{col}(BE)$, while trivially $\operatorname{col}(BE)\subseteq\operatorname{col}(B)$.
Therefore $\operatorname{col}(BE)=\operatorname{col}(B)$.
Since $B$ is arbitrary, any output subspace of dimension at most $d$ can occur.
\end{proof}

\begin{corollary}[Irreducible tied-head error outside the tied output subspace]
\label{cor:appendix_tied_lower_bound}
Let $P_{\mathrm{out}}$ be the orthogonal projector onto $\operatorname{col}(E^\top)$.
Then for every target map $T_\star\in\R^{V\times V}$,
\begin{equation}
    \inf_{T\in\mathcal F_{\mathrm{tied}}}\norm{T-T_\star}_F
    \ge
    \norm{(I-P_{\mathrm{out}})T_\star}_F.
\end{equation}
If in addition $T_\star=BE$ for some $B\in\R^{V\times d}$, then $T_\star\in\mathcal F_{\mathrm{untied}}$.
\end{corollary}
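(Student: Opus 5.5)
The lower bound follows from an orthogonal decomposition of the Frobenius error, using part (3) of Theorem~\ref{thm:appendix_untied}. The second claim is then a direct appeal to part (1).

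\textbf{Step 1: tied maps are fixed by $P_{\mathrm{out}}$.} For any $T\in\mathcal F_{\mathrm{tied}}$, part (3) of Theorem~\ref{thm:appendix_untied} gives $\operatorname{im}(T)\subseteq\operatorname{col}(E^\top)$. Every column of $T$ therefore lies in the range of $P_{\mathrm{out}}$, so $P_{\mathrm{out}}T=T$ and $(I-P_{\mathrm{out}})T=0$.

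\textbf{Step 2: split the residual.} Write
\begin{equation}
    T-T_\star = P_{\mathrm{out}}(T-T_\star) + (I-P_{\mathrm{out}})(T-T_\star)
    = P_{\mathrm{out}}(T-T_\star) - (I-P_{\mathrm{out}})T_\star .
\end{equation}
The two summands are orthogonal in the Frobenius inner product. Indeed, $\operatorname{tr}\bigl(X^\top P_{\mathrm{out}}^\top (I-P_{\mathrm{out}})Y\bigr)=0$ because $P_{\mathrm{out}}$ is a symmetric idempotent, so $P_{\mathrm{out}}^\top(I-P_{\mathrm{out}})=P_{\mathrm{out}}-P_{\mathrm{out}}^2=0$. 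Pythagoras then gives
\begin{equation}
    \norm{T-T_\star}_F^2=\norm{P_{\mathrm{out}}(T-T_\star)}_F^2+\norm{(I-P_{\mathrm{out}})T_\star}_F^2\ge \norm{(I-P_{\mathrm{out}})T_\star}_F^2 .
\end{equation}
Taking square roots and then the infimum over $T\in\mathcal F_{\mathrm{tied}}$ yields the claimed bound. The bound does not depend on $T$, so the infimum passes through trivially.

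\textbf{Step 3: the untied membership claim.} If $T_\star=BE$ with $B\in\R^{V\times d}$, then part (1) of Theorem~\ref{thm:appendix_untied} identifies $\mathcal F_{\mathrm{untied}}$ as exactly $\{BE : B\in\R^{V\times d}\}$. Hence $T_\star\in\mathcal F_{\mathrm{untied}}$. Concretely, take $W=B^\top$ and $A=I_d$.

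\textbf{Main obstacle.} No step is genuinely hard. The only point needing care is Step 2: one must check that the orthogonality of $P_{\mathrm{out}}(\cdot)$ and $(I-P_{\mathrm{out}})(\cdot)$ holds for the Frobenius inner product on matrices, not just column by column. This is handled by the symmetric-idempotent identity, or equivalently by applying Pythagoras to each column separately and summing.
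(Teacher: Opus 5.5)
Your proof is correct and follows the paper's argument essentially step for step. Both use part (3) of Theorem~\ref{thm:appendix_untied} to get $P_{\mathrm{out}}T=T$, split $T-T_\star$ into $P_{\mathrm{out}}(T-T_\star)$ and $-(I-P_{\mathrm{out}})T_\star$, apply Pythagoras in the Frobenius inner product, and invoke part (1) for the membership claim; your identity $P_{\mathrm{out}}^\top(I-P_{\mathrm{out}})=0$ is just a slightly more explicit version of the paper's column-wise orthogonality argument.
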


\begin{proof}
For any $T\in\mathcal F_{\mathrm{tied}}$, Theorem~\ref{thm:appendix_untied}(3) gives $T=P_{\mathrm{out}}T$.
Hence
\begin{equation}
    T-T_\star = P_{\mathrm{out}}T - P_{\mathrm{out}}T_\star - (I-P_{\mathrm{out}})T_\star.
\end{equation}
The first term has every column in $\operatorname{col}(E^\top)$, while the second has every column in its orthogonal complement, so these components are orthogonal in Frobenius inner product.
Therefore
\begin{equation}
    \norm{T-T_\star}_F^2
    =
    \norm{P_{\mathrm{out}}(T-T_\star)}_F^2 + \norm{(I-P_{\mathrm{out}})T_\star}_F^2
    \ge
    \norm{(I-P_{\mathrm{out}})T_\star}_F^2.
\end{equation}
Taking the infimum over $T\in\mathcal F_{\mathrm{tied}}$ proves the bound.
If $T_\star=BE$, then Theorem~\ref{thm:appendix_untied}(1) implies $T_\star\in\mathcal F_{\mathrm{untied}}$.
\end{proof}

\paragraph{Idealized Muon as spectral preconditioning.}
For a nonzero matrix $G\in\R^{m\times n}$ with compact SVD
\begin{equation}
    G=U\Sigma V^\top,
    \qquad
    \Sigma=\operatorname{diag}(\sigma_1,\dots,\sigma_r),
    \quad
    \sigma_i>0,
\end{equation}
define its exact polar factor
\begin{equation}
    Q(G)=UV^\top.
\end{equation}

\begin{theorem}[Idealized Muon is steepest descent under an operator-norm trust region]
\label{thm:appendix_muon}
Let $G\in\R^{m\times n}$ be nonzero with rank $r$.
Then:
\begin{enumerate}[label=(\arabic*),leftmargin=1.5em]
    \item for every scalar $c>0$,
    \begin{equation}
        Q(cG)=Q(G);
    \end{equation}
    \item
    \begin{equation}
        \max_{\norm{M}_{\mathrm{op}}\le 1} \langle G,M\rangle = \norm{G}_*,
    \end{equation}
    and the maximum is attained by $M=Q(G)$;
    equivalently,
    \begin{equation}
        -\eta Q(G)
    \end{equation}
    minimizes
    \begin{equation}
        \min_{\norm{\Delta}_{\mathrm{op}}\le \eta}\langle G,\Delta\rangle
    \end{equation}
    for every $\eta>0$;
    \item if $f:\R^{m\times n}\to\R$ is $L$-smooth with respect to the Frobenius norm and $G=\nabla f(W)$, then for
    \begin{equation}
        W^+ = W-\eta Q(G)
    \end{equation}
    one has
    \begin{equation}
        f(W^+)
        \le
        f(W)-\eta\norm{G}_*+\frac{L\eta^2}{2}r.
    \end{equation}
    In particular, every
    \begin{equation}
        0<\eta<\frac{2\norm{G}_*}{Lr}
    \end{equation}
    guarantees strict descent.
\end{enumerate}
\end{theorem}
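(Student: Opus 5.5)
The plan is to work entirely from the compact SVD $G=U\Sigma V^\top$, with $U\in\R^{m\times r}$ and $V\in\R^{n\times r}$ having orthonormal columns. Each of the three parts then reduces to a short computation.

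For part (1), I observe that for $c>0$ the factorization $cG=U(c\Sigma)V^\top$ is again a compact SVD with positive singular values $c\sigma_i$ and the same singular vectors, so $Q(cG)=UV^\top=Q(G)$. One subtlety is that the compact SVD is not unique when singular values repeat. The cleaner route is to note that $Q(G)=G(G^\top G)^{+1/2}$, where $(\cdot)^{+1/2}$ is the pseudo-inverse square root. This expression is basis-free, so $Q$ is well defined, and the scale invariance is then immediate because $(c^2G^\top G)^{+1/2}=c^{-1}(G^\top G)^{+1/2}$.

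For part (2), I will prove the duality between the operator norm and the nuclear norm in the form needed here.

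\emph{Upper bound.} For any $M$ with $\norm{M}_{\mathrm{op}}\le 1$,
\[
\langle G,M\rangle=\operatorname{tr}(V\Sigma U^\top M)=\sum_i \sigma_i\, u_i^\top M v_i\le\sum_i\sigma_i=\norm{G}_*,
\]
since $|u_i^\top M v_i|\le\norm{M}_{\mathrm{op}}$ for unit vectors $u_i,v_i$.

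\emph{Attainment.} Take $M=UV^\top$. It has operator norm $1$, because its nonzero singular values all equal $1$. It also gives $\langle G,UV^\top\rangle=\operatorname{tr}(V\Sigma U^\top UV^\top)=\operatorname{tr}(\Sigma)=\norm{G}_*$.

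\emph{Steepest-descent form.} Substitute $\Delta=-\eta M$. The constraint $\norm{\Delta}_{\mathrm{op}}\le\eta$ becomes $\norm{M}_{\mathrm{op}}\le1$, and $\langle G,\Delta\rangle=-\eta\langle G,M\rangle$. So minimizing $\langle G,\Delta\rangle$ is the same as maximizing $\langle G,M\rangle$, and $\Delta=-\eta Q(G)$ is a minimizer with value $-\eta\norm{G}_*$.

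For part (3), I apply the standard descent lemma for $L$-smooth functions in the Frobenius norm:
\[
f(W+\Delta)\le f(W)+\langle G,\Delta\rangle+\tfrac L2\norm{\Delta}_F^2 .
\]
With $\Delta=-\eta Q(G)$, part (2) gives $\langle G,\Delta\rangle=-\eta\norm{G}_*$. For the quadratic term, $\norm{Q(G)}_F^2=\operatorname{tr}(VU^\top UV^\top)=\operatorname{tr}(V^\top V)=r$. Together these give
\[
f(W^+)\le f(W)-\eta\norm{G}_*+\tfrac{L\eta^2}{2}\,r .
\]
The right-hand side is strictly below $f(W)$ exactly when $\eta\norm{G}_*>L\eta^2 r/2$, that is, when $0<\eta<2\norm{G}_*/(Lr)$. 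This step uses $\norm{G}_*>0$, which holds because $G\neq0$.

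I do not expect a real obstacle in any of these steps. The only point needing care is the well-definedness of $Q$ in part (1) when singular values are repeated, which the basis-free formula handles. I would also state the descent lemma explicitly as the assumed meaning of ``$L$-smooth'', namely that $\nabla f$ is $L$-Lipschitz in the Frobenius norm.
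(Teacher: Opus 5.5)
Your proposal is correct and follows the paper's proof essentially step for step: the same SVD computation for scale invariance, the same attainment check $\langle G,UV^\top\rangle=\operatorname{tr}(\Sigma)=\norm{G}_*$ with the substitution $\Delta=-\eta M$, and the same descent-lemma bound using $\norm{Q(G)}_F^2=r$. The only differences are small refinements the paper leaves out: you prove the operator/nuclear duality bound directly via $\sum_i\sigma_i u_i^\top M v_i\le\sum_i\sigma_i$ instead of citing it, and you settle well-definedness of $Q$ under repeated singular values through the basis-free formula $G(G^\top G)^{+1/2}$.
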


\begin{proof}
For (1), if $G=U\Sigma V^\top$, then $cG=U(c\Sigma)V^\top$ has the same singular vectors, so
\begin{equation}
    Q(cG)=UV^\top=Q(G).
\end{equation}

For (2), the dual norm of the operator norm is the nuclear norm, so
\begin{equation}
    \max_{\norm{M}_{\mathrm{op}}\le 1}\langle G,M\rangle = \norm{G}_*.
\end{equation}
We verify that $M=Q(G)$ attains the maximum:
\begin{equation}
    \langle G,Q(G)\rangle
    =
    \operatorname{tr}\bigl((U\Sigma V^\top)^\top UV^\top\bigr)
    =
    \operatorname{tr}(V\Sigma V^\top)
    =
    \operatorname{tr}(\Sigma)
    =
    \norm{G}_*.
\end{equation}
Hence $Q(G)$ is a maximizer.
Replacing $M$ by $-\Delta/\eta$ shows that $-\eta Q(G)$ is a minimizer of the operator-norm-constrained linearized objective.

For (3), by $L$-smoothness,
\begin{equation}
    f(W+\Delta)\le f(W)+\langle G,\Delta\rangle + \frac{L}{2}\norm{\Delta}_F^2.
\end{equation}
Substitute $\Delta=-\eta Q(G)$:
\begin{equation}
    f(W^+)\le f(W)-\eta\langle G,Q(G)\rangle + \frac{L\eta^2}{2}\norm{Q(G)}_F^2.
\end{equation}
By part (2),
\begin{equation}
    \langle G,Q(G)\rangle = \norm{G}_*.
\end{equation}
Also $Q(G)=UV^\top$ has $r$ singular values equal to $1$, so
\begin{equation}
    \norm{Q(G)}_F^2 = r.
\end{equation}
Therefore
\begin{equation}
    f(W^+) \le f(W)-\eta\norm{G}_*+\frac{L\eta^2}{2}r.
\end{equation}
The strict-descent condition follows immediately.
\end{proof}

\section{Additional Plots}
\label{sec:appendix_additional_plots}
This section collects supporting plots that audit the claims of the main paper but were too large to include there. RankMe is the entropy effective rank defined in Section~\ref{sec:appendix_gradient_computation}, used here as a compact trajectory summary alongside tail exponents and raw spectra.

\subsection{Loss atlases}
Fig.~\ref{fig:appendix_loss_atlases} shows the loss-curve evidence behind every matched-loss comparison in the main paper. Panel~(a) collects validation-loss trajectories for families with stored validation checkpoints; panel~(b) collects train-loss trajectories for the late-trunk and d36 support runs where validation checkpoints were unavailable. Two features matter. First, every batch tier within every family reaches the target loss, confirming that matched-loss comparisons are not
selecting only the easiest tiers. Second, the spread between tiers in tokens-to-target is visible directly: smaller batches sit on shallower learning curves and require substantially more tokens than the family-local efficient regime, which sits near the modal $B=8$ tier in most variants.

\begin{figure}[t]
  \centering
  \begin{subfigure}[t]{0.98\textwidth}
    \centering
    \includegraphics[width=\linewidth]{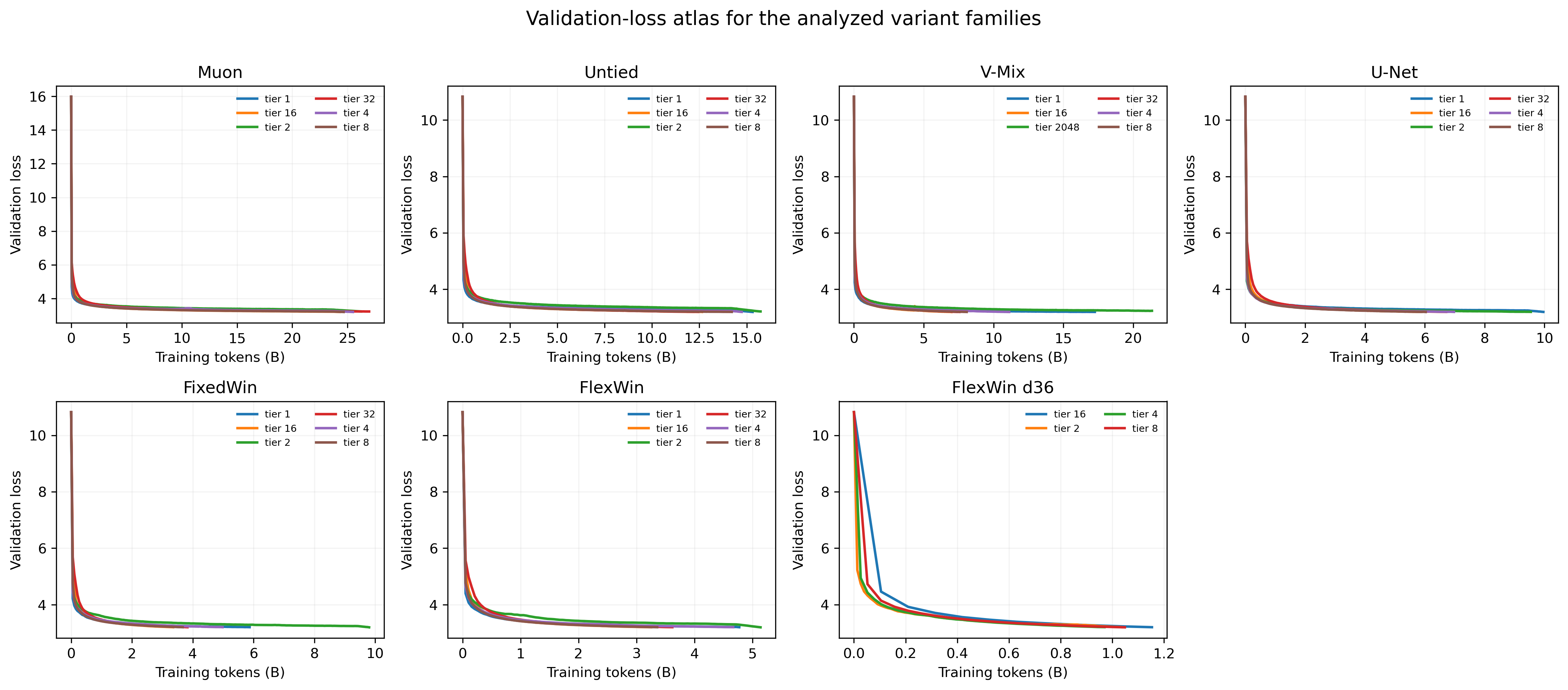}
    \caption{Validation-loss atlas for families with validation checkpoints.}
  \end{subfigure}
  \begin{subfigure}[t]{0.98\textwidth}
    \centering
    \includegraphics[width=\linewidth]{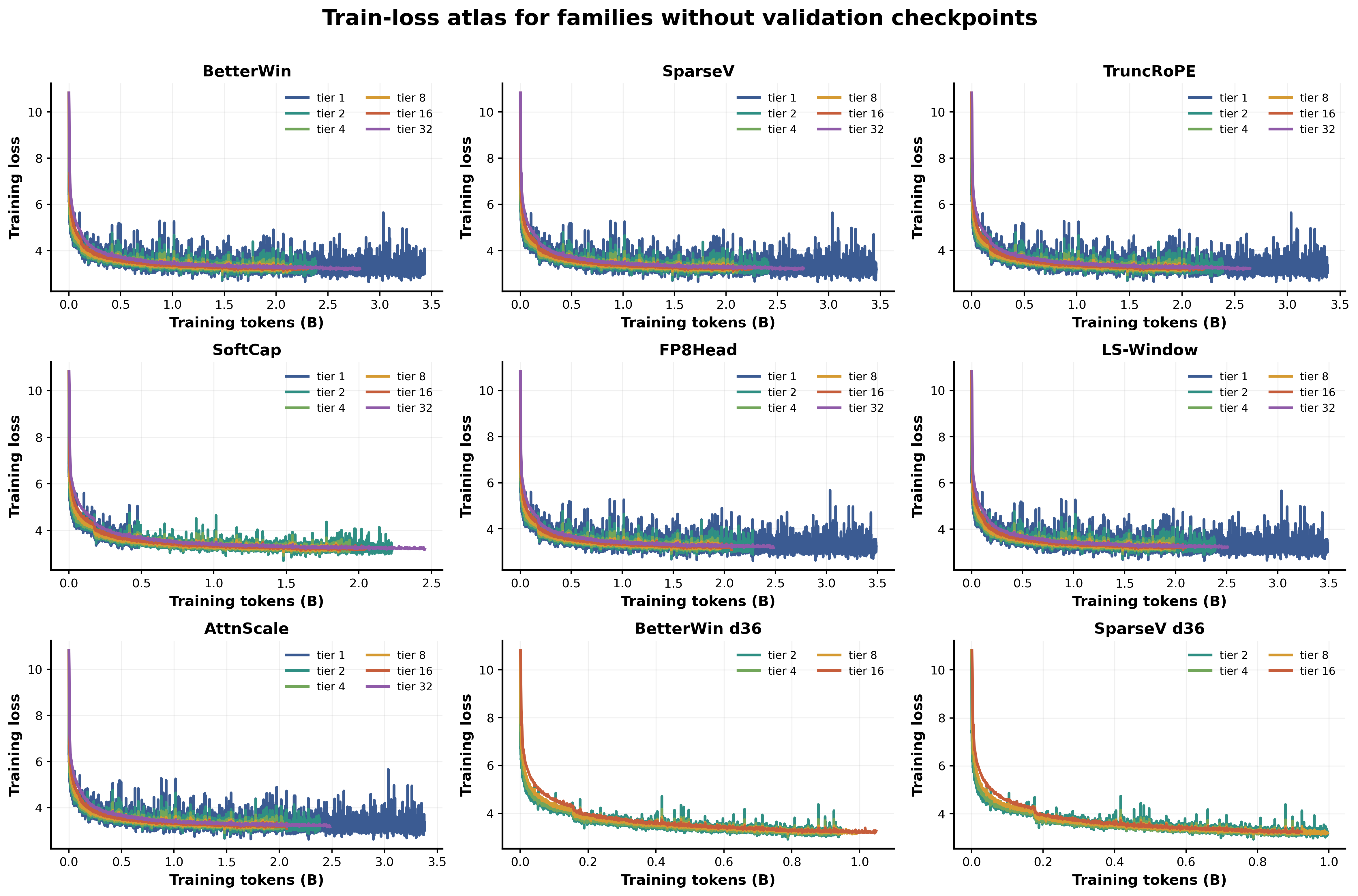}
    \caption{Train-loss atlas for d36 support families without validation checkpoints.}
  \end{subfigure}
  \caption{\textbf{Loss-curve atlases.} Validation-aligned and train-loss-only
  evidence are separated to keep the support runs distinct from the main protocol.
  Within every family, all batch tiers reach the target loss, and the
  tokens-to-target spread across tiers is visible directly in the curves.}
  \label{fig:appendix_loss_atlases}
\end{figure}

\subsection{Spectral atlases for individual models}
Figs.~\ref{fig:appendix_spectral_legacy}--\ref{fig:appendix_spectral_trunk_b} provide the per-variant raw evidence underlying the transition-level taxonomy of Section~\ref{sec:taxonomy}. Each row shows one variant in three views: trace-normalized activation covariance spectra, gradient SVD spectra, and summary trajectories of RankMe and $\alpha_{\mathrm{tail}}$ over training. The legacy prefix atlas (Fig.~\ref{fig:appendix_spectral_legacy}) contains the largest geometric shifts in the chain, spanning both the early gradient-led and activation-led transitions seen in the main taxonomy figure. The matched-trunk atlases (Figs.~\ref{fig:appendix_spectral_trunk_a}--\ref{fig:appendix_spectral_trunk_b}) show the smaller incremental changes from ValueMix through AttnScale, where the transition labels are best inferred from the joint activation--gradient summaries rather than from any single raw spectrum alone. The tier-2 atlas (Fig.~\ref{fig:appendix_d36_d48_tier2}) applies the same format to the larger-scale support runs and shows that the qualitative spectral signatures persist at scale.

\begin{figure}[t]
    \centering
    \includegraphics[width=\textwidth]{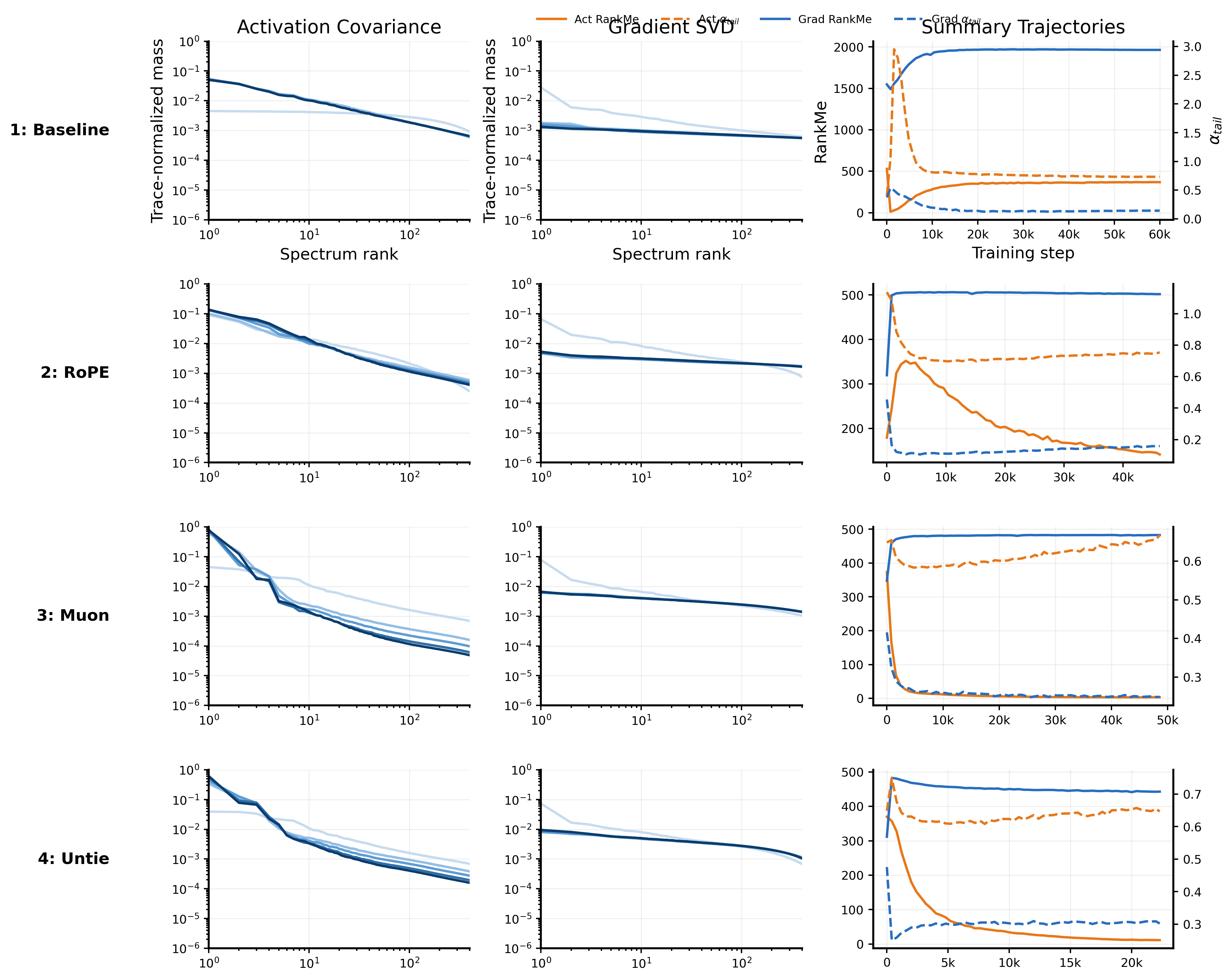}
    \caption{\textbf{Spectral atlas for the legacy prefix variants.} Rows show Baseline, RoPE, Muon, and Untied. Each consecutive variant produces visibly distinct activation and gradient spectra, dominating the activation-led column of the main taxonomy figure.}
    \label{fig:appendix_spectral_legacy}
\end{figure}

\begin{figure}[t]
    \centering
    \includegraphics[width=\textwidth]{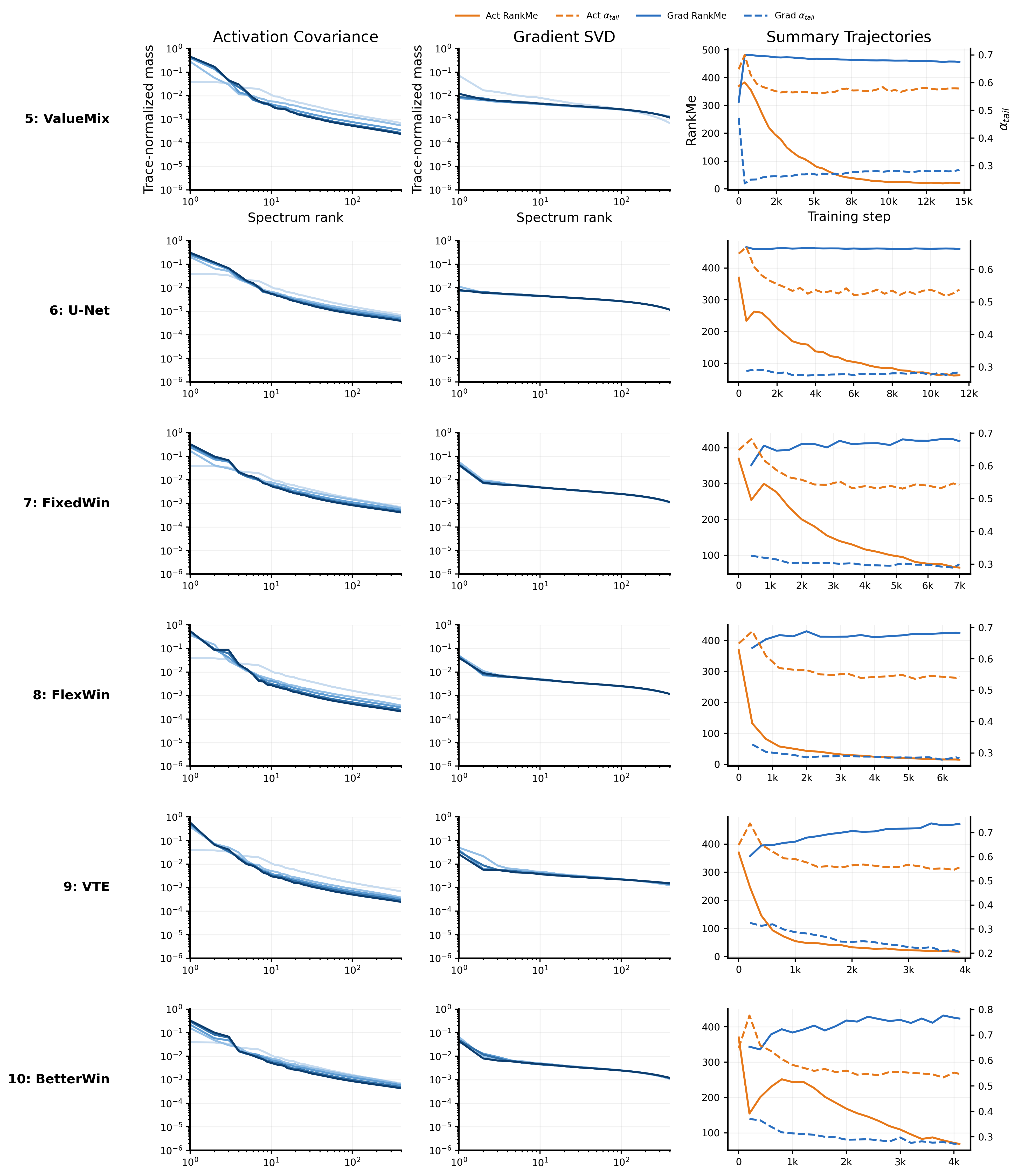}
    \caption{\textbf{Spectral atlas for the first half of the matched trunk.} Rows show ValueMix, U-Net, FixedWin, FlexWin, VTE, and BetterWin. Per-row spectral differences are smaller than across the legacy prefix; the taxonomic split is best read off the joint activation--gradient summary trajectories.}
    \label{fig:appendix_spectral_trunk_a}
\end{figure}

\begin{figure}[t]
    \centering
    \includegraphics[width=\textwidth]{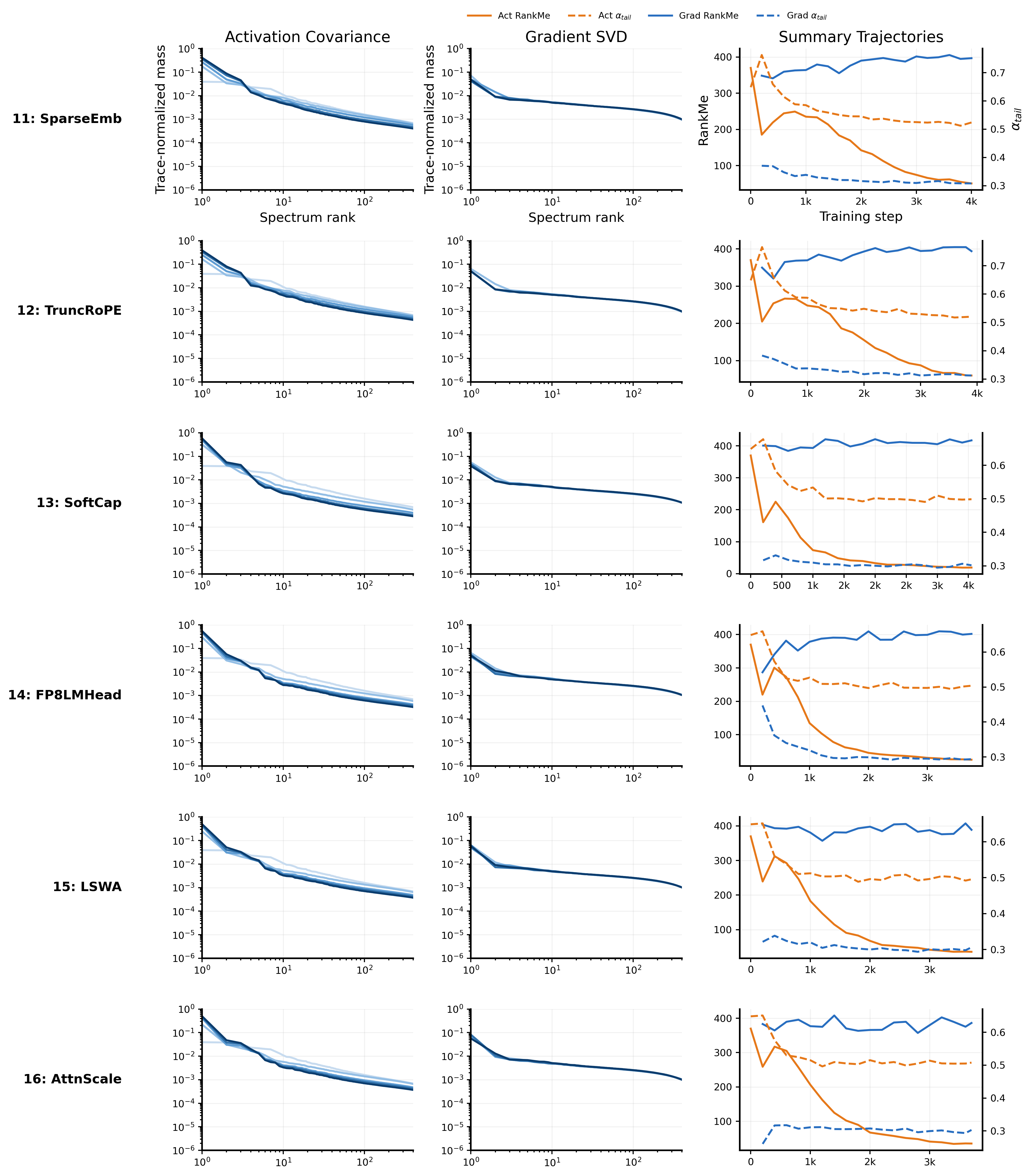}
    \caption{\textbf{Spectral atlas for the second half of the matched trunk.} Rows show SparseV, TruncRoPE, SoftCap, FP8Head, LSWA, and AttnScale. The throughput-leaning variants (FP8Head, LSWA, AttnScale) show smaller activation-side shifts than the earlier trunk variants, consistent with the Section~\ref{sec:taxonomy} taxonomy.}
    \label{fig:appendix_spectral_trunk_b}
\end{figure}

\begin{figure}[t]
  \centering
  \includegraphics[width=\textwidth]{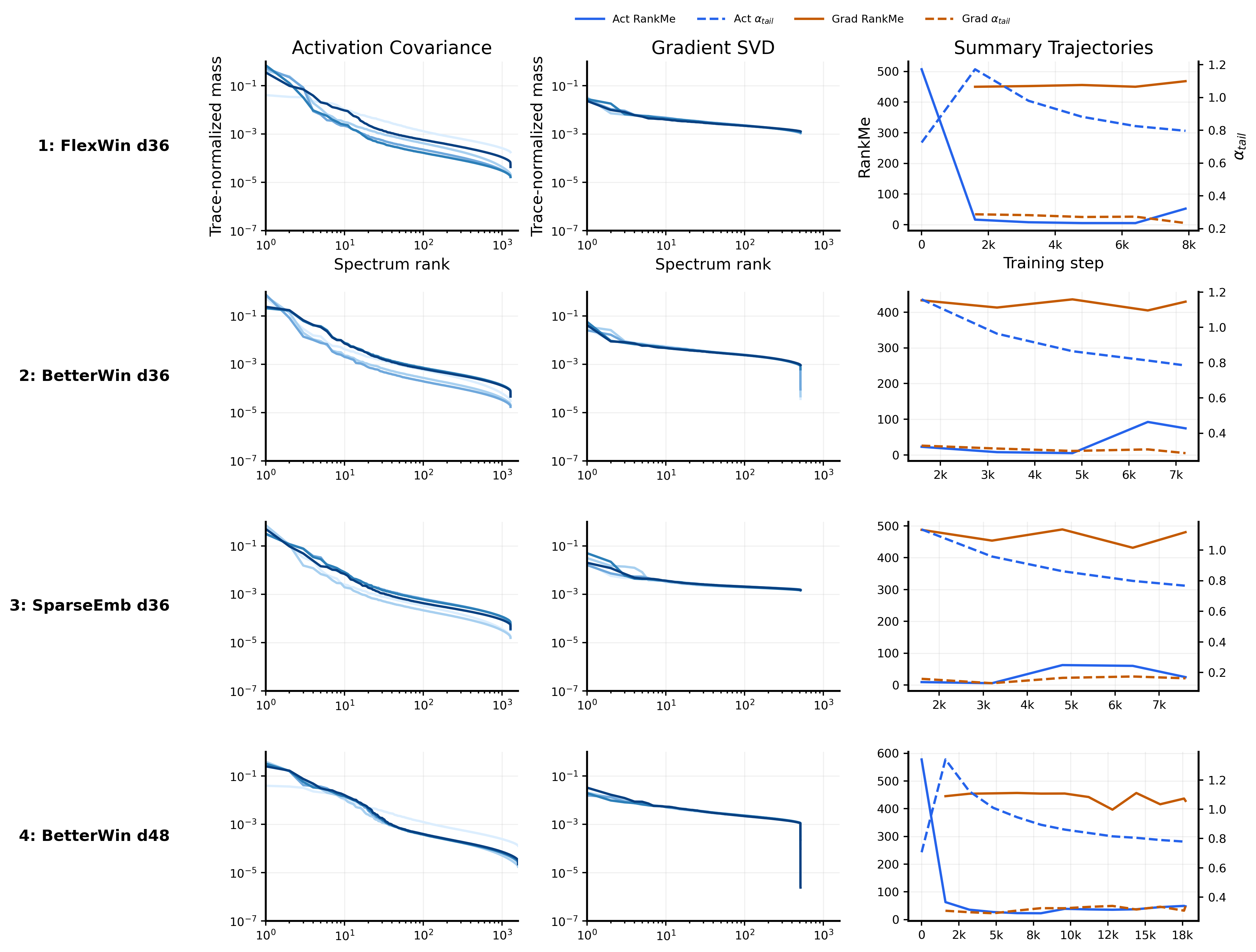}
  \caption{\textbf{Tier-2 spectral atlas for the d36/d48 scale follow-up.} Activation covariance, gradient spectra, RankMe, and tail-exponent trajectories for FlexWin d36, BetterWin d36, SparseV d36, and BetterWin d48. The qualitative spectral signatures match the corresponding d12 variants, supporting the cross-scale claim of Section~\ref{sec:batch}}
  \label{fig:appendix_d36_d48_tier2}
\end{figure}

\subsection{Weight-matrix spectra}
Activation and gradient spectra describe the data-side and update-side of training. Fig.~\ref{fig:weight_spectrum_complements} adds a third view: the spectra of the trained weight matrices themselves, comparing the layer-11 attention-output projection $W_O$ against the layer-11 MLP-output projection at the final checkpoint, with head exponents shown at step~1600 and at the end of training. The MLP-output projection shows clearer cross-variant divergence than $W_O$ in both raw spectrum shape and
head-exponent trajectories. The asymmetry is consistent with the gradient-probe stability argument in Appendix~\ref{sec:appendix_gradient_computation}: $W_O$ retains
the same architectural role across the full chain, while the feed-forward writeback path is touched by several trunk-side interventions (ValueMix, U-Net, BetterWin, SparseV). Either probe is informative, but they answer different questions and should
not be expected to coincide.

\begin{figure}[t]
  \centering
  \begin{subfigure}[t]{0.49\textwidth}
    \centering
    \includegraphics[width=\linewidth]{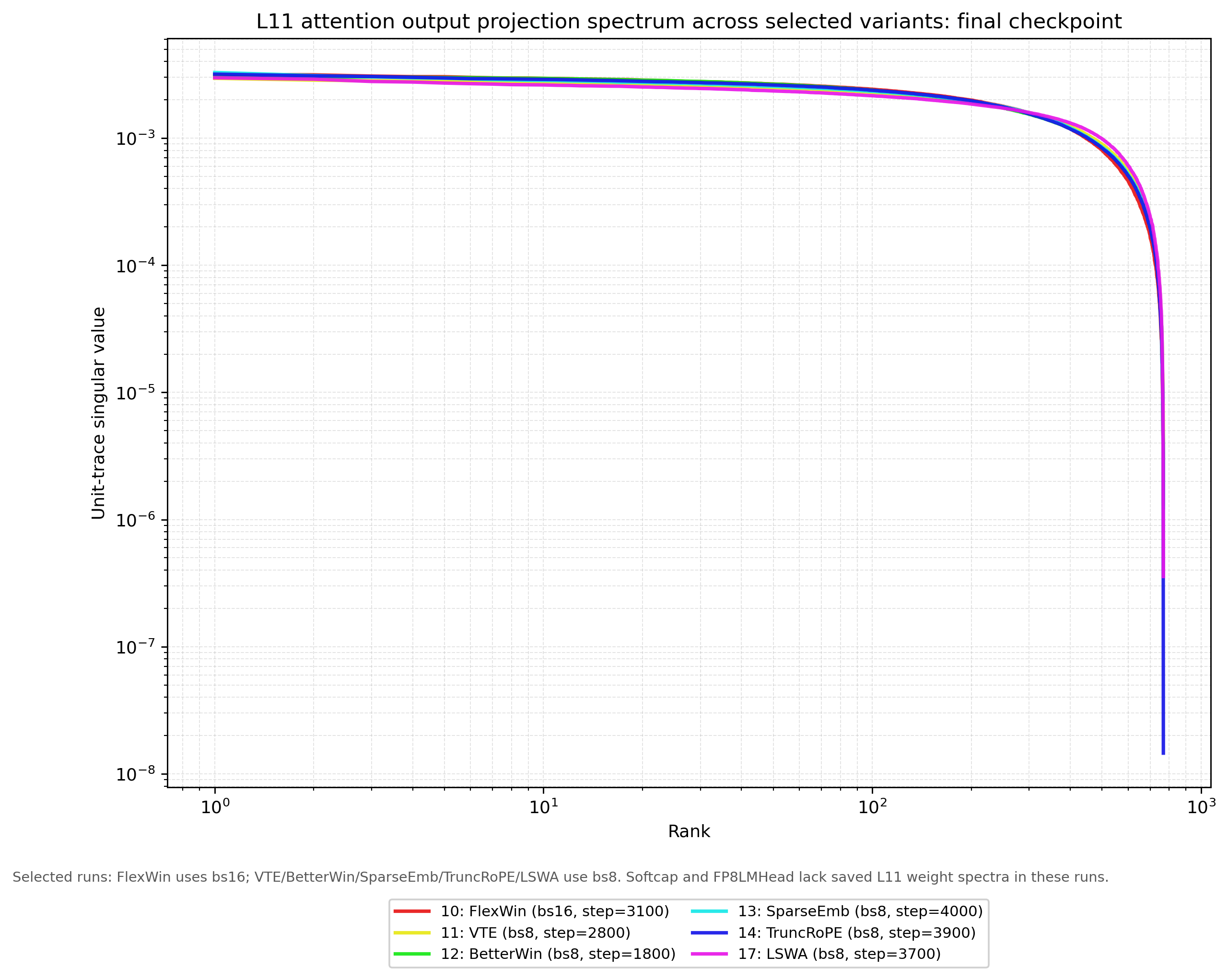}
    \caption{Layer-11 attention-output weight spectrum.}
  \end{subfigure}\hfill
  \begin{subfigure}[t]{0.49\textwidth}
    \centering
    \includegraphics[width=\linewidth]{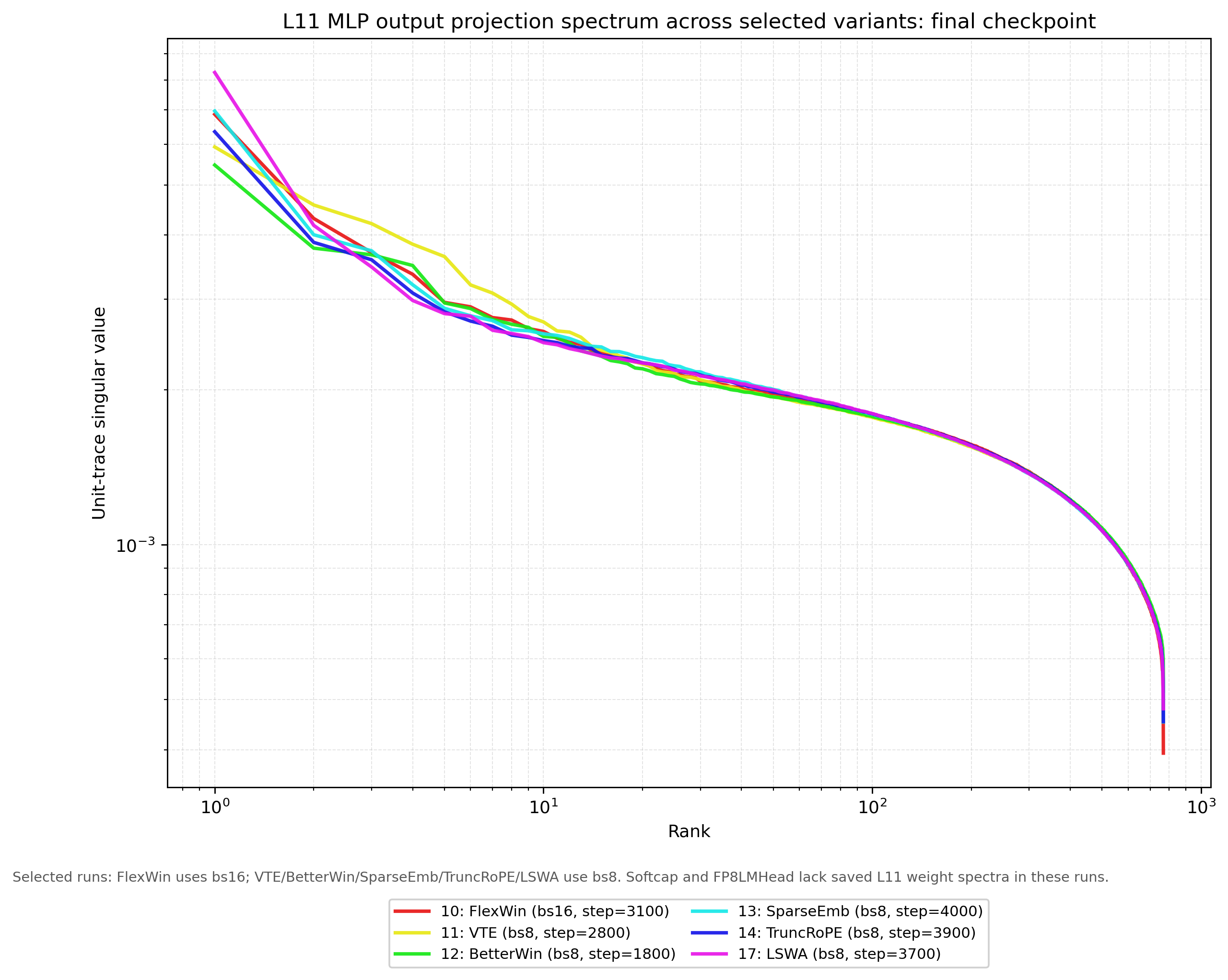}
    \caption{Layer-11 MLP-output weight spectrum.}
  \end{subfigure}\hfill
  \begin{subfigure}[t]{0.49\textwidth}
    \centering
    \includegraphics[width=\linewidth]{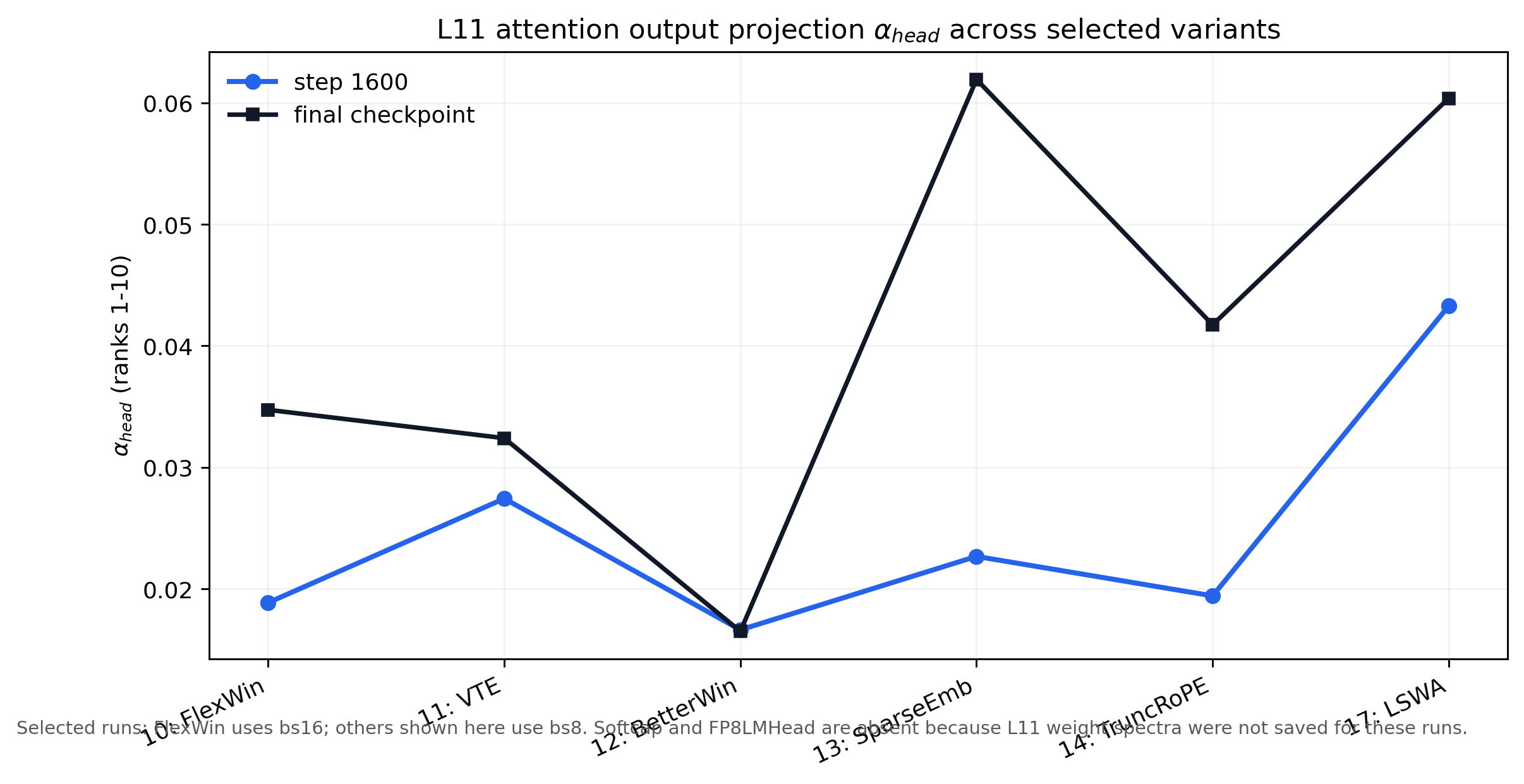}
    \caption{Attention-output head exponent.}
  \end{subfigure}\hfill
  \begin{subfigure}[t]{0.49\textwidth}
    \centering
    \includegraphics[width=\linewidth]{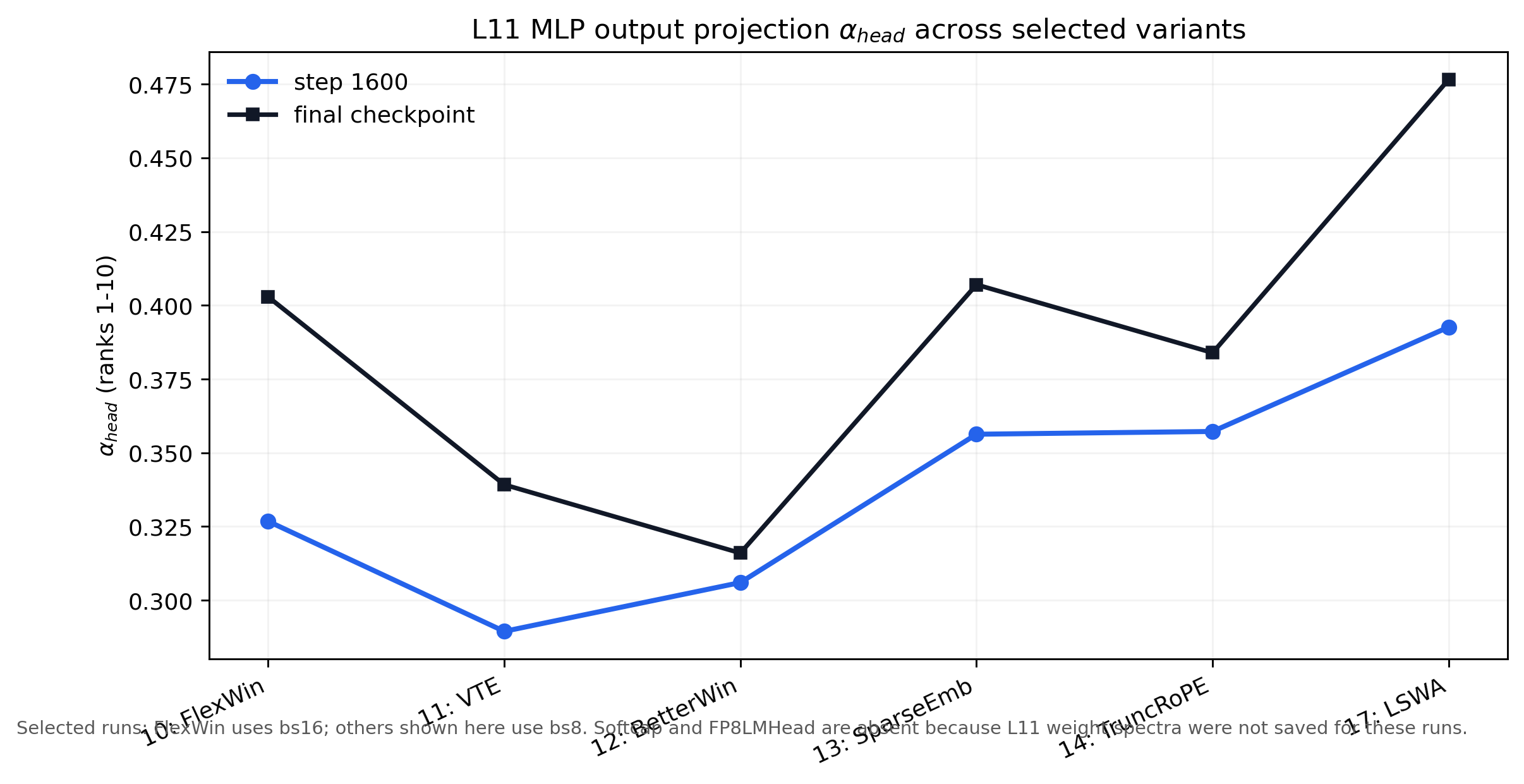}
    \caption{MLP-output head exponent.}
  \end{subfigure}
  \caption{\textbf{Weight spectra are informative but tensor-dependent.} Layer-11 attention-output and MLP-output projections at the final checkpoint, plus their head exponents at step~1600 and at the end of training. The MLP-output projection shows clearer parameter-side divergence across variants, consistent with $W_O$'s stable architectural role and the additional cross-variant variance accumulated by
  the feed-forward writeback path.}
  \label{fig:weight_spectrum_complements}
\end{figure}

\subsection{Training dynamics and phase visibility}
Prior geometry work has reported a collapse--expansion--compression phase sequence in
representation rank during training.
Fig.~\ref{fig:appendix_rankme_progress_all_tiers} plots RankMe trajectories on
normalized training progress for FixedWin, SparseV, and LSWA across all batch tiers.
The classical phase shape is most clearly visible in intermediate tiers; very small
and very large batches often produce monotone or muted trajectories without a
sharply localized expansion peak. We therefore treat phase-like dynamics as secondary
qualitative evidence rather than a universal training signature, since their
visibility depends on both batch tier and variant.

\begin{figure}[t]
  \centering
  \begin{subfigure}[t]{0.32\textwidth}
    \centering
    \includegraphics[width=\linewidth]{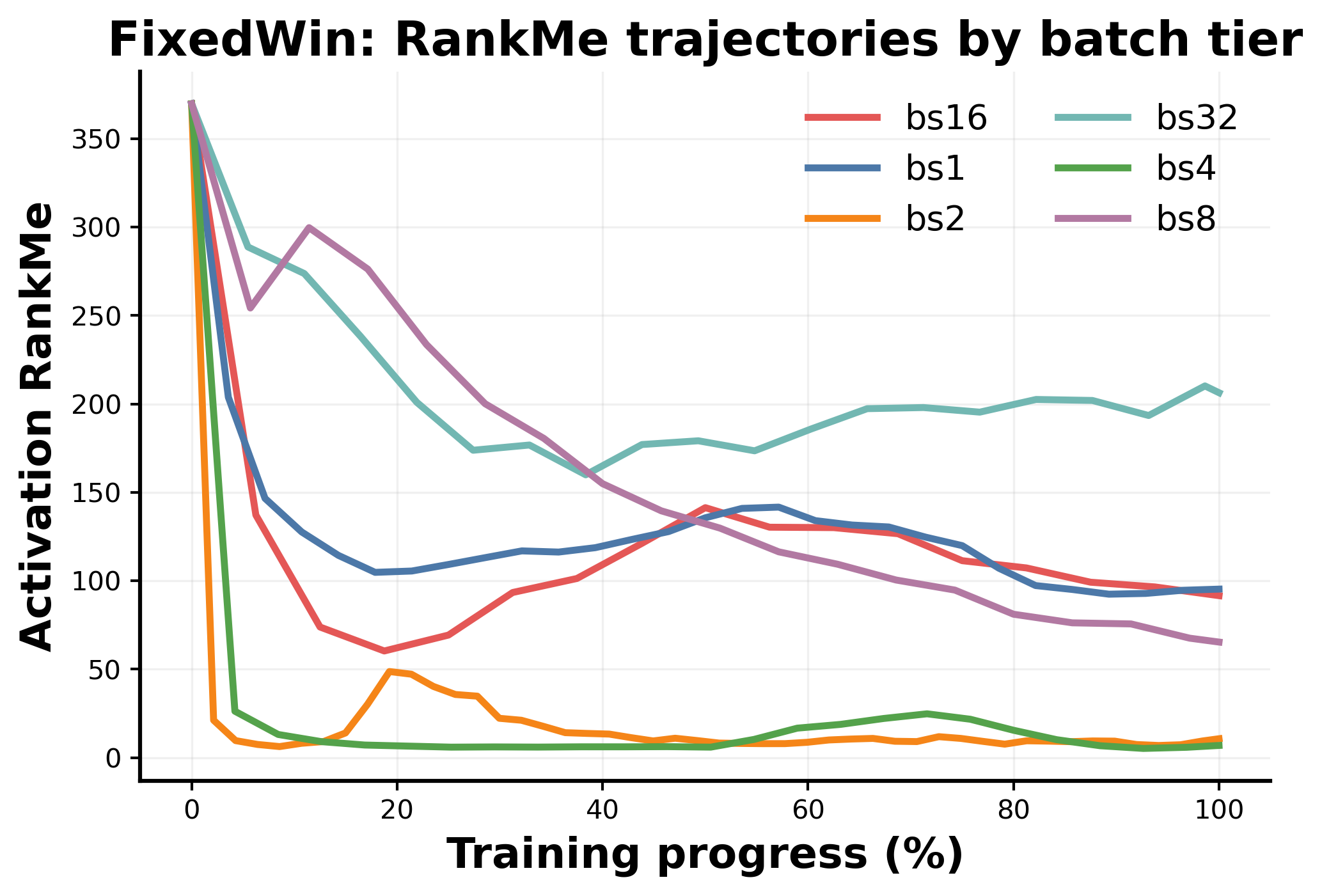}
    \caption{FixedWin.}
  \end{subfigure}\hfill
  \begin{subfigure}[t]{0.32\textwidth}
    \centering
    \includegraphics[width=\linewidth]{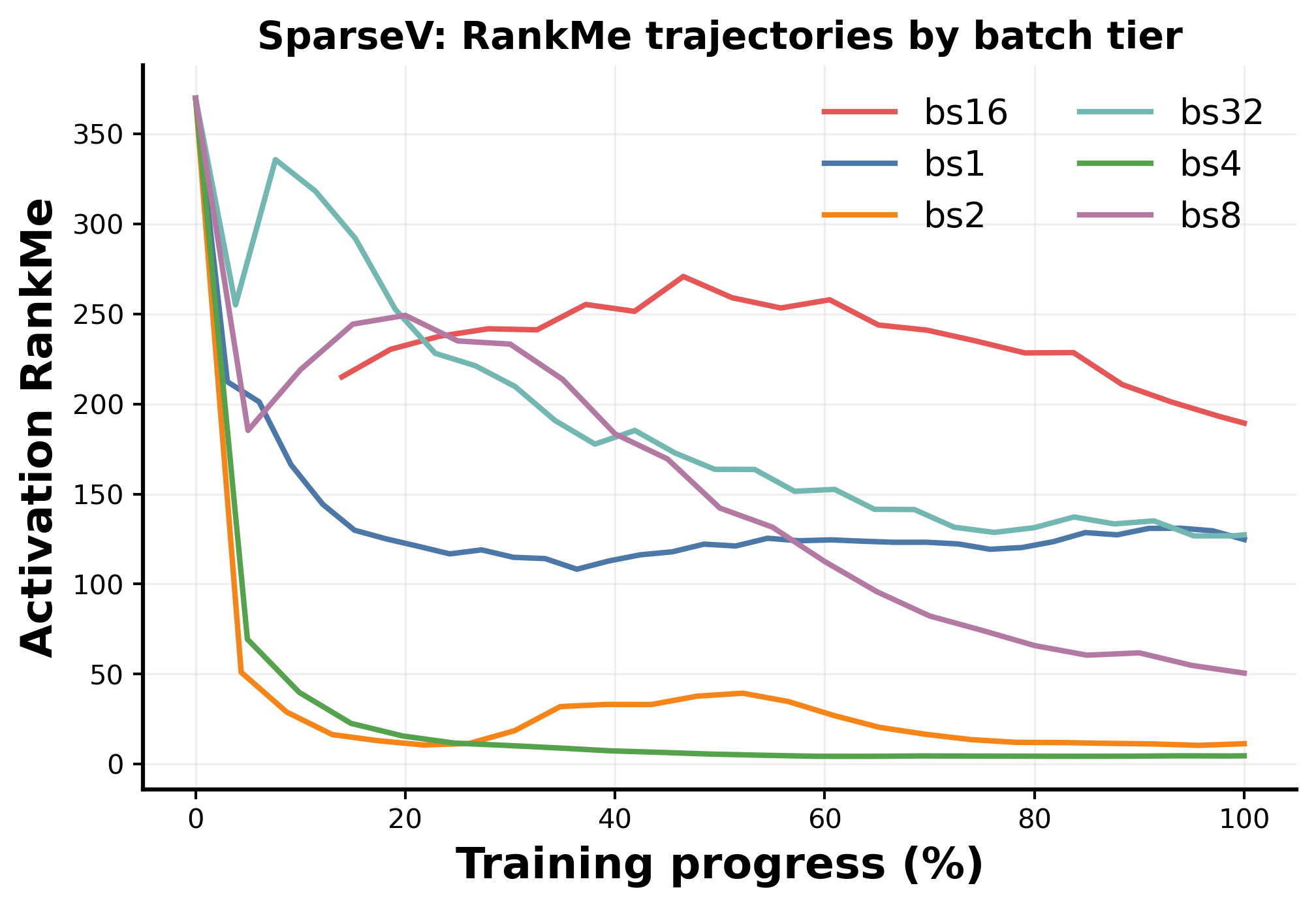}
    \caption{SparseV.}
  \end{subfigure}\hfill
  \begin{subfigure}[t]{0.32\textwidth}
    \centering
    \includegraphics[width=\linewidth]{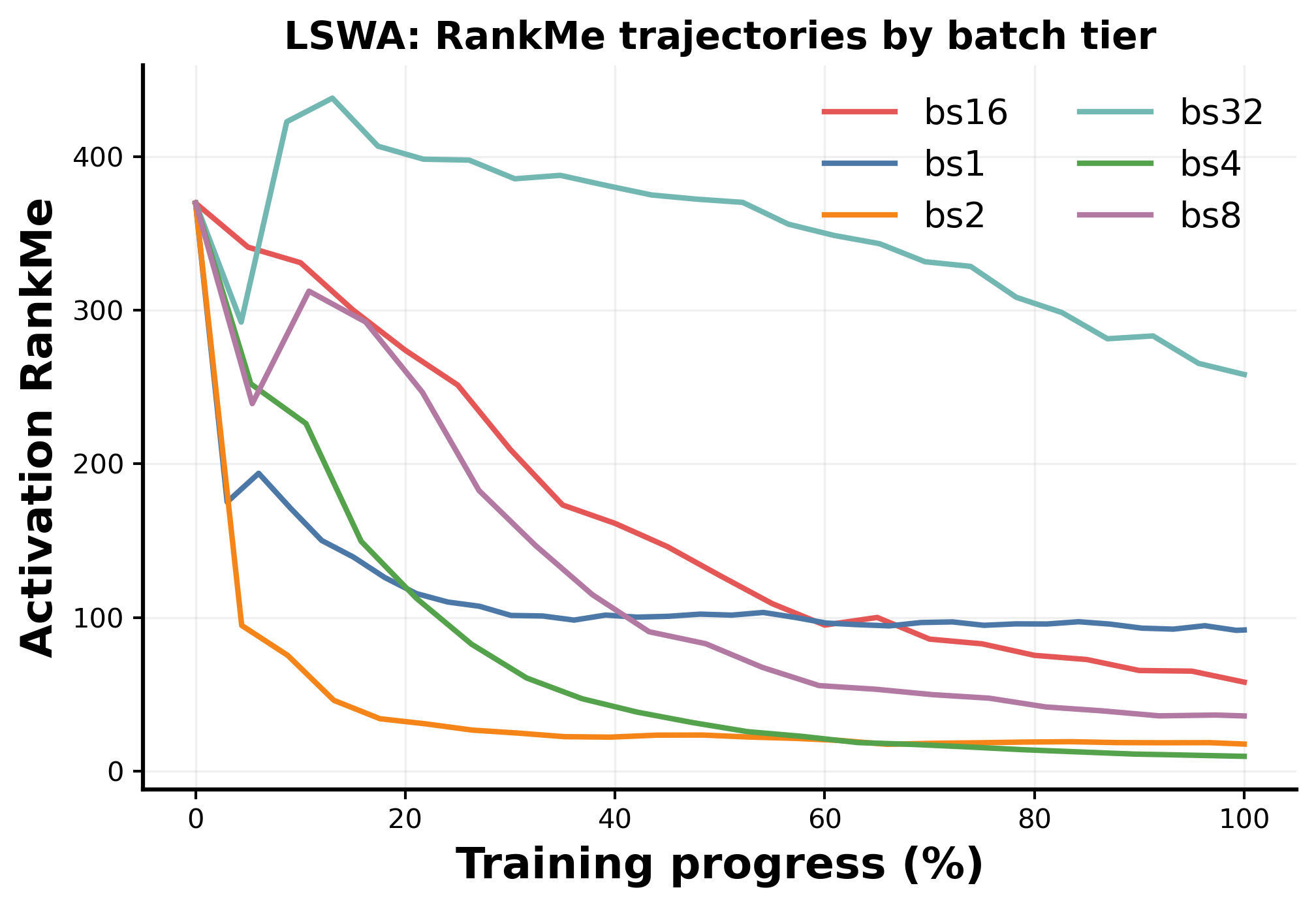}
    \caption{LSWA.}
  \end{subfigure}
  \caption{\textbf{Phase-like RankMe trajectories are batch-regime dependent.}
  Collapse--expansion--compression behavior is not uniform across batch size or
  variant; the phase sequence reported in prior geometry work appears most clearly
  in intermediate tiers, so we treat it as qualitative support rather than a
  universal law.}
  \label{fig:appendix_rankme_progress_all_tiers}
\end{figure}

\end{document}